\documentclass[letterpaper]{article} 
\usepackage[preprint]{aaai2027}
\usepackage[hyphens]{url}  
\usepackage{graphicx} 
\usepackage{natbib}  
\usepackage{caption} 
\usepackage{algorithm}

\usepackage{newfloat}
\usepackage{listings}
\DeclareCaptionStyle{ruled}{labelfont=normalfont,labelsep=colon,strut=off} 
\floatstyle{ruled}
\newfloat{listing}{tb}{lst}{}
\floatname{listing}{Listing}

\usepackage{booktabs}

\usepackage{amsmath}
\usepackage{amssymb}
\usepackage{amsfonts}
\usepackage{booktabs}  
\usepackage{multirow}  
\usepackage{graphicx}  
\usepackage{multirow}
\usepackage{booktabs}
\usepackage{threeparttable}
\usepackage{siunitx}
\usepackage{algorithm}
\usepackage{algpseudocode}
\usepackage{amsmath,amssymb} 

\usepackage{xcolor}   
\usepackage{colortbl} 
\definecolor{TopOne}{RGB}{91,155,213}      
\definecolor{TopTwo}{RGB}{221,235,247}     
\definecolor{TopThree}{RGB}{217,217,217}   

\newcommand{\rankone}[1]{\cellcolor{TopOne}{#1}}
\newcommand{\ranktwo}[1]{\cellcolor{TopTwo}{#1}}
\newcommand{\rankthree}[1]{\cellcolor{TopThree}{#1}}

\usepackage{amsthm}
\usepackage{mathrsfs}
\newtheorem{assumption}{Assumption}[section]
\newtheorem{definition}[assumption]{Definition}
\newtheorem{lemma}[assumption]{Lemma}
\newtheorem{proposition}[assumption]{Proposition}
\newtheorem{theorem}[assumption]{Theorem}
\newtheorem{corollary}[assumption]{Corollary}
\theoremstyle{remark}
\newtheorem{remark}[assumption]{Remark}

\usepackage{booktabs}
\usepackage{array}
\usepackage{xurl}
\usepackage{tabularx}

\newcolumntype{L}[1]{>{\raggedright\arraybackslash}p{#1}}
\newcolumntype{C}{>{\centering\arraybackslash}X}
\newcommand{\smiles}[1]{\path{#1}}

\title{Don't Retrain, Just Reuse: \\Recovering Dual-Target Molecules from Single-Target Diffusion Models}
\author{
    Qingyuan Zeng\textsuperscript{\rm 1},
    Pengxiang Cai\textsuperscript{\rm 1},
    Zixin Guan\textsuperscript{\rm 3},
    Ziyang Chen\textsuperscript{\rm 1},\\
    Anglin Liu\textsuperscript{\rm 1},
    Xinyao LAI\textsuperscript{\rm 1},
    Jintai Chen\textsuperscript{\rm 1,\rm 2}
}
\affiliations{
    \textsuperscript{\rm 1}The Hong Kong University of Science and Technology (Guangzhou), Guangzhou, China\\
    \textsuperscript{\rm 2}The Hong Kong University of Science and Technology, Hong Kong, China\\
    \textsuperscript{\rm 3}Guangzhou University of Chinese Medicine, Guangzhou, China
}

\begin{document}

\maketitle

\begin{abstract}
Designing a single molecule that modulates two targets is a promising strategy for polypharmacology, but it remains substantially harder than standard single-target generation because one candidate must satisfy two binding requirements while preserving drug-likeness and synthesizability. Existing dual-target generative methods typically introduce dual-target capability by either retraining the generator or intervening in the diffusion process during sampling. The former can be costly and difficult to stabilize when dual-target supervision is sparse, while the latter may be sensitive to denoising-time target balancing and competing update directions. These limitations motivate a generator-preserving alternative that keeps the pretrained prior intact: \textit{can dual-target candidates instead be recovered from the input space of a frozen single-target diffusion model, without modifying its parameters or denoising dynamics?} We formulate this task as a constrained multi-objective optimization problem and propose REUSE, which evolves the input noise of a frozen diffusion generator rather than molecular structures. Each input is decoded multiple times and scored by the collective quality of the generated molecular family. Candidates are then screened progressively: lower-cost evaluations prioritize molecules satisfying chemical-feasibility criteria, full docking is reserved for a reduced frontier, and the survivors are jointly selected as a diverse panel with strong affinity to both targets. Experiments show that REUSE achieves stronger and more balanced dual-target recovery than prior dual-target baselines, improving Dual High Affinity by 21.1 percentage points over the strongest prior baseline while retaining QED and SA profiles consistent with commonly used chemical-feasibility criteria.
\end{abstract}

\section{Introduction}

Designing a single molecule that can modulate two targets is an important goal in drug discovery, especially for complex diseases where pathway redundancy, compensatory signaling, and acquired resistance often limit the effectiveness of single-target therapies \cite{stefan2024polypharmacology,ryszkiewicz2025polypharmacology,deshaies2025multispecific}. Compared with combination therapy, a dual-target molecule can coordinate activity at both targets within one chemical entity, potentially reducing mismatched pharmacokinetics, drug-drug interactions, and treatment complexity \cite{stefan2024polypharmacology,ryszkiewicz2025polypharmacology,deshaies2025multispecific}. In the structure-based setting, however, this problem is substantially harder than standard single-target generation: one molecule must be compatible with two binding sites while remaining drug-like and synthesizable \cite{zhou2024reprogramming,yuan2025mdrl}. Recent diffusion models have become powerful generative priors for 3D structure-based molecular design, but most existing methods still focus on one target at a time, leaving dual-target design comparatively underexplored \cite{schneuing2024diffsbdd,tang2024genai}.

\begin{figure}
    \centering
    \includegraphics[width=1\linewidth]{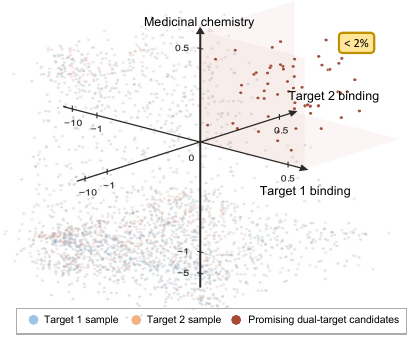}
    \caption{Latent dual-target potential in a frozen single-target diffusion model. Samples conditioned separately on Target 1 (blue) or Target 2 (orange) are evaluated by worst-case normalized margins for both target-binding objectives and medicinal chemistry; positive values indicate that the corresponding criterion is met. Fewer than 2\% satisfy all three criteria (red), suggesting that dual-target candidates are reachable but sparse under naive sampling.}
    \label{fig:motivation}
\end{figure}

Existing dual-target generative methods usually introduce dual-target capability in two ways: either by adapting the generator toward multi-target objectives or by intervening in the diffusion process during sampling. The former can be costly and difficult to scale when reliable target-pair training data is limited \cite{yuan2025mdrl,nguyen2026evosynth}. The latter avoids retraining, but it injects target-coupled control into the denoising trajectory, where competing update directions may bias generation toward one target and weaken balanced dual-target recovery \cite{yang2024paretodrug,zhou2024reprogramming}. This raises a natural question: \textit{can we preserve the pretrained single-target prior and its denoising dynamics, and instead recover dual-target candidates by searching only the input space of the frozen model?}

As single-target diffusion models are pretrained on diverse protein-ligand complexes, they have learnt a broad structure-based molecular prior that may be reusable beyond standard single-target generation. This raises the possibility that, for a given target pair, useful dual-target candidates can be recovered from the input space of a frozen model without changing its parameters or denoising dynamics \cite{gomezbombarelli2018automatic,winter2019continuous,abeer2024molso}. As shown in Figure~\ref{fig:motivation}, the frozen model can occasionally generate candidates satisfying both target-specific binding criteria even when conditioned on only one target at a time. However, realizing this potential remains challenging: fewer than 2\% of naive samples satisfy both binding criteria together with the medicinal-chemistry criterion, indicating that useful dual-target candidates occupy a sparse and highly constrained region of the model's input space \cite{winter2019continuous,abeer2024molso,isigkeit2024multitarget,ai2024mtmolgpt}.

Motivated by this, we formulate dual-target molecular design as a constrained multi-objective problem over the input space of a frozen single-target diffusion model and propose REUSE (\textbf{REU}sing frozen \textbf{S}ingle-target g\textbf{E}nerators). Unlike molecule-level evolutionary search, REUSE evolves generator inputs and assigns each a family-level score based on its decoded molecules. The decoded candidates are progressively narrowed using inexpensive affinity and chemistry signals before high-fidelity evaluation of a reduced frontier and final panel selection targeting strong affinity to both targets under chemical-feasibility and diversity constraints. The generator and its denoising dynamics remain unchanged throughout. Our contributions are as follows:
\begin{itemize}
    \item We present the first formulation of dual-target molecular design as constrained multi-objective search over the input space of a frozen single-target diffusion model. This decouples dual-target optimization from the diffusion process, reducing retraining cost and denoising-time balancing sensitivity while enabling explicit control over affinity, chemistry, and diversity.
    
    \item We develop a three-level recovery framework that evolves generator inputs using family-level evidence, progressively screens decoded molecules, and constructs diverse panels with strong affinity to both targets under QED/SA-based feasibility constraints.
    
    \item Extensive experiments show that REUSE recovers stronger and more balanced dual-target candidates than prior dual-target baselines, improving Dual High Affinity by 21.1 percentage points over the strongest prior baseline while retaining QED and SA profiles consistent with commonly used chemical-feasibility criteria.
\end{itemize}

\section{Related work}

\paragraph{Single-target structure-based molecular generation.}

Recent advances in structure-based molecular generation have substantially improved pocket-conditioned 3D design for single targets \cite{huang2024pmdm,jiang2024pocketflow,hu2025diffgui}. Pocket2Mol generates ligands autoregressively inside protein pockets under geometric constraints \cite{peng2022pocket2mol}. TargetDiff and DiffSBDD introduce equivariant diffusion frameworks for target-aware, pocket-conditioned molecule generation \cite{guan2023targetdiff,schneuing2024diffsbdd}. DecompDiff further improves diffusion-based SBDD with decomposed priors, bond diffusion, and validity guidance \cite{guan2023decompdiff}, while MolDiff addresses atom--bond inconsistency in 3D molecular diffusion \cite{peng2023moldiff}. DrugGPS improves generalization by learning subpocket prototypes and structural motifs for structure-based design \cite{zhang2023druggps}. Despite their strong performance, these methods are developed for the standard single-target setting and therefore do not directly address dual-target molecular design, where one ligand must simultaneously satisfy two distinct binding pockets.

\paragraph{Dual-target molecular generation.}

Recent dual-target generative methods mainly follow two directions. One
line adapts the generator itself, for example through reinforcement-learning-
based optimization or latent-space optimization for multi-target objectives
\cite{yuan2025mdrl,nguyen2026evosynth,munson2024polygon}. However, such
adaptation can be costly and unstable when reliable target-pair supervision
is sparse. The other line introduces target-coupled control during
inference-time generation or sampling
\cite{yang2024paretodrug,zhou2024reprogramming}. In this setting, competing
target preferences must be reconciled within the generation trajectory,
which may require delicate balancing and bias the resulting molecules
toward one target. These drawbacks motivate an alternative that keeps both
the pretrained generator and its denoising dynamics unchanged: recovering
dual-target molecules by searching the input space of a frozen single-target
diffusion model. Unlike conventional evolutionary search, REUSE scores each input through its decoded molecular family and separates input evolution from molecule screening and panel selection.

\begin{figure*}
    \centering
    \includegraphics[width=1\linewidth]{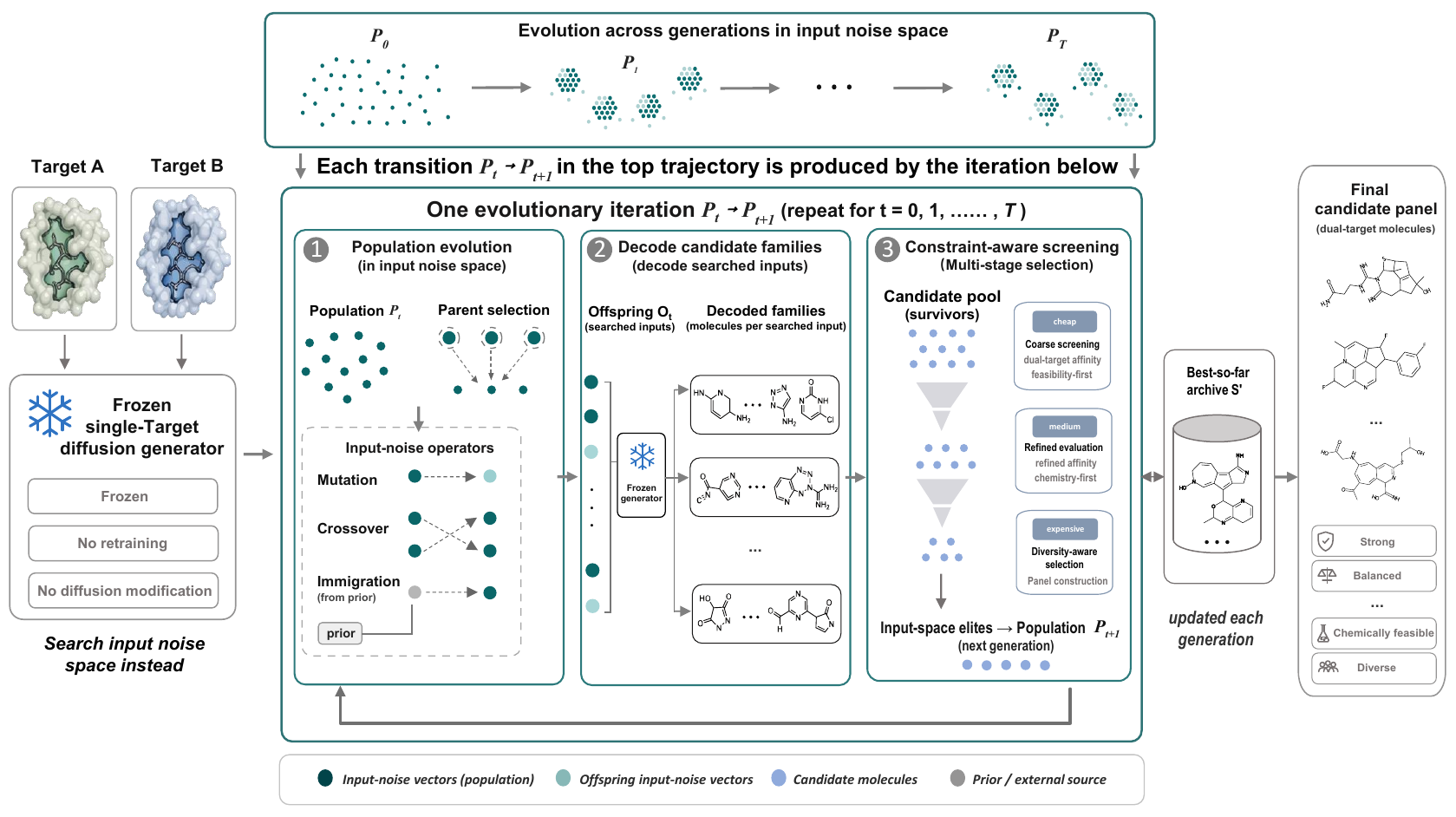}
    \caption{Overview of REUSE. REUSE evolves frozen-generator inputs using family-level scores, restricts full docking to a screened frontier, and selects a diverse panel with strong affinity to both targets under QED/SA-based chemical control.}
    \label{fig:pipeline}
\end{figure*}

\section{Methodology}

\subsection{Problem Formulation}

Let $G_{\phi}$ denote a pretrained \emph{single-target} diffusion generator with fixed parameters $\phi$.
Given a dual-target pair $p=(t_a,t_b)$, REUSE keeps $G_{\phi}$ fixed and searches over the \emph{input noise space} of the generator to recover dual-target candidates under the task defined by $p$. Let $z \in \mathcal{Z}$ denote an input noise realization supplied to the generator.
For a fixed task context $p$ and noise input $z$, the frozen generator induces a distribution over decoded molecules, from which we draw a finite family
\begin{equation}
\mathcal{M}(z,p)=\{m_1,\dots,m_K\}.
\label{eq:candidate_pool}
\end{equation}

Operationally, REUSE explores multiple noise inputs during search.
For any finite searched input set $\mathcal{Q}\subseteq\mathcal{Z}$, define the pooled decoded candidate universe
\begin{equation}
\mathcal{M}(\mathcal{Q},p)=\bigcup_{z\in\mathcal{Q}}\mathcal{M}(z,p).
\label{eq:pooled_family}
\end{equation}
Our goal is not to return a single molecule, but to recover a candidate panel $S \subseteq \mathcal{M}(\mathcal{Q},p)$ with $|S|=N$ that is jointly strong in dual-target affinity, chemically admissible, and structurally diverse. At conceptual level, we formulate this as a constrained multi-objective optimization problem:
\begin{equation}
\begin{aligned}
\min_{\substack{\mathcal{Q}\subseteq\mathcal{Z},\ |\mathcal{Q}|<\infty\\
S\subseteq\mathcal{M}(\mathcal{Q},p),\ |S|=N}}
&\ \Big(-F_{\mathrm{aff}}(S;p),-F_{\mathrm{chem}}(S),-F_{\mathrm{div}}(S)\Big)\\
\text{s.t.}\quad & c(m,p)=1,\qquad \forall m\in S.
\end{aligned}
\label{eq:multiobj}
\end{equation}
where $F_{\mathrm{aff}}$ is a larger-is-better utility measuring affinity to both targets and cross-target balance, $F_{\mathrm{chem}}$ is a QED/SA-based chemical-control score, $F_{\mathrm{div}}$ measures set-level diversity, and $c(m,p)$ encodes hard feasibility checks.
Eq.~\eqref{eq:multiobj} defines the panel-level objective, but REUSE does not optimize it directly because the search variables are noise inputs, not molecules. 
Each noise input is decoded into a molecular family and assigned a family-level score:
\begin{equation}
\mathcal{F}(z;p)=\Psi\big(\mathcal{M}(z,p);p\big),
\label{eq:input_search}
\end{equation}
where $\Psi$ summarizes the family's dual-target affinity, chemical feasibility, and diversity. 
This score is then used to rank and evolve noise inputs before final panel construction.

This formulation casts dual-target design as input-side capability recovery: REUSE probes a frozen single-target prior through its input noise space, rather than introducing dual-target behavior by retraining the model or modifying its inference dynamics.

\subsection{Overview of REUSE}

Figure~\ref{fig:pipeline} illustrates the overall framework.
At iteration $t$, REUSE maintains a population of noise-space individuals
\begin{equation}
\mathcal{P}_t=\{z_t^{(1)},\dots,z_t^{(B)}\}, \qquad z_t^{(i)} \in \mathcal{Z},
\label{eq:population}
\end{equation}
and evolves this population toward regions of the frozen prior that reliably decode to strong dual-target candidates. For each individual $z \in \mathcal{P}_t$, the frozen generator produces a decoded family $\mathcal{M}(z,p)$ for the target-pair design task $p$.
Rather than judging a noise input by a single decoded molecule, which is highly sensitive to sampling stochasticity, REUSE evaluates each noise input through the phenotypic evidence provided by its decoded molecular family:
\begin{equation}
\mathcal{F}(z;p)=\Psi\big(\mathcal{M}(z,p);p\big).
\label{eq:family_fitness}
\end{equation}
Parent selection is then utility-driven:
\begin{equation}
\mathcal{A}_t = \mathrm{Top}_{B_{\mathrm{par}}}\big(\mathcal{P}_{t-1};\mathcal{F}(\cdot;p)\big),
\label{eq:parent_selection_main}
\end{equation}
where $\mathrm{Top}_{B}(\mathcal{X};f)$ returns the top-$B$ elements of $\mathcal{X}$ ranked by scalar score $f$.
This encourages the search to favor regions of the input noise space that consistently yield high-quality candidates, rather than isolated outliers.
In our instantiation, $\Psi$ ranks each decoded family using the first-stage feasibility-first ordering $\succ_1$ defined later in Eq.~\eqref{eq:stage_order}, and averages the corresponding top-ranked $h_1(m;p)$ scores to reduce sensitivity to a single stochastic decode; see Appendix~A.3.

REUSE couples three decision levels. At the input level, family-level scores guide evolution. At the molecule level, candidates are progressively screened using affinity and chemical-control signals; balance-aware scoring penalizes candidates strong on only one target, and full docking is reserved for a reduced frontier. At the panel level, survivors are jointly selected for strong affinity to both targets and structural diversity under QED/SA-based chemical control, with the best-so-far panel retained across iterations. Algorithm 1 in Appendix A.2 summarizes the procedure, while Appendix A.7 relates it to Eq.~\eqref{eq:multiobj}.

\subsection{Variation Operators in Noise Space}

Given a selected parent set $\mathcal{A}_t \subseteq \mathcal{P}_{t-1}$, REUSE generates offspring according to the mixture proposal
\begin{equation}
\begin{aligned}
q_{\mathrm{off}}(z'\mid\mathcal{A}_t,p)
={}&\alpha_{\mathrm{mut}}q_{\mathrm{mut}}(z'\mid\mathcal{A}_t,p)\\
&+\alpha_{\mathrm{cross}}q_{\mathrm{cross}}(z'\mid\mathcal{A}_t,p)\\
&+\alpha_{\mathrm{imm}}\pi_0(z'\mid p),
\end{aligned}
\label{eq:offspring_mixture}
\end{equation}
where $\alpha_{\mathrm{mut}}+\alpha_{\mathrm{cross}}+\alpha_{\mathrm{imm}}=1$.
Here, $\pi_0(\cdot\mid p)$ denotes the random noise distribution used for population initialization and immigration. Samples are drawn independently from this distribution, while target-pair information enters through molecular decoding, dual-target evaluation, and downstream selection. The offspring set at iteration $t$ is then sampled as
\begin{equation}
\mathcal{O}_t=\{z'_1,\dots,z'_{B_{\mathrm{off}}}\},
\qquad
z'_i \sim q_{\mathrm{off}}(\cdot \mid \mathcal{A}_t,p),
\label{eq:offspring_set}
\end{equation}
where $B_{\mathrm{off}}$ is offspring budget. A continuous-space instantiation used in our experiments is
\begin{equation}
z'_{\mathrm{mut}} = z + \epsilon,
\qquad
\epsilon \sim \mathcal{N}(0,\sigma_{\mathrm{mut}}^2 I),
\label{eq:mutation}
\end{equation}
for mutation,
\begin{equation}
z'_{\mathrm{cross}} = \lambda z_i + (1-\lambda) z_j,
\qquad
\lambda \sim \mathcal{U}(0,1),
\qquad
z_i,z_j \in \mathcal{A}_t,
\label{eq:crossover}
\end{equation}
for crossover between two selected parents, and
\begin{equation}
z'_{\mathrm{imm}} \sim \pi_0(\cdot \mid p)
\label{eq:immigration}
\end{equation}
for immigration. Thus, mutation performs local exploration around promising inputs, crossover recombines successful parent lineages, and immigration draws fresh random noise inputs from $\pi_0(\cdot\mid p)$ to refresh the population and mitigate premature concentration.
All operators act directly in the input noise space and therefore leave the frozen generator unchanged.
Additional implementation details are given in Appendix~A.4.

\subsection{Constraint-Aware Multi-Stage Environmental Selection}

Let $\mathcal{O}_t$ denote the offspring set sampled from Eq.~\eqref{eq:offspring_mixture}.
The decoded offspring induce a pooled candidate set
\begin{equation}
\mathcal{C}_t^{(0)}=\bigcup_{z \in \mathcal{O}_t}\mathcal{M}(z,p).
\label{eq:pooled_candidates}
\end{equation}
REUSE then applies stage-wise environmental selection to this pool.
We index stage pools by $s=0,\dots,S$, where stage $s$ maps $\mathcal{C}_t^{(s-1)}$ to $\mathcal{C}_t^{(s)}$. At stage $s$, each candidate $m \in \mathcal{C}_t^{(s-1)}$ is assigned a scalar score
\begin{equation}
\begin{aligned}
h_s(m;p)={}&f_{\mathrm{aff}}^{(s)}(m;p)
+\beta_{\mathrm{chem}}q_{\mathrm{chem}}(m)\\
&+\beta_{\mathrm{div}}q_{\mathrm{div}}\!\left(m;\mathcal{C}_t^{(s-1)}\right),
\end{aligned}
\label{eq:stage_score}
\end{equation}
where $f_{\mathrm{aff}}^{(s)}$ is a stage-specific affinity term for both targets, $q_{\mathrm{chem}}$ is a soft chemical-control term, and $q_{\mathrm{div}}$ is a redundancy-reduction term. We instantiate these terms using progressively higher-fidelity docking signals, QED and SA, and fingerprint-based structural dissimilarity, respectively \cite{bickerton2012qed,ertl2009sascore,rogers2010ecfp}; see Appendices A.5 and A.6.
Here, $A_s(m,t)$ denotes the stage-$s$ affinity estimate of molecule $m$ against target $t$, represented on a common larger-is-better utility scale.
A simple balance-aware instantiation is
\begin{equation}
\begin{aligned}
f_{\mathrm{aff}}^{(s)}(m;p)={}&w_aA_s(m,t_a)+w_bA_s(m,t_b)\\
&-\lambda_{\mathrm{bal}}\big|A_s(m,t_a)-A_s(m,t_b)\big|,
\end{aligned}
\label{eq:balance_aff_main}
\end{equation}
so that candidates strong on only one target are down-weighted.
When a stage uses raw docking scores, we convert them to this utility scale by sign flip (or an equivalent monotone transformation) before applying Eq.~\eqref{eq:balance_aff_main}. Selection is feasibility-first.
Let $c(m,p)\in\{0,1\}$ indicate whether candidate $m$ satisfies the hard admissibility checks used by the pipeline.
Stage-wise preference is defined lexicographically as
\begin{equation}
\begin{aligned}
m_i\succ_s m_j\quad\Longleftrightarrow\quad
&\big(c(m_i,p),h_s(m_i;p)\big)\\
&>\big(c(m_j,p),h_s(m_j;p)\big),
\end{aligned}
\label{eq:stage_order}
\end{equation}
where $>$ denotes lexicographic comparison.
Thus, feasible candidates always dominate infeasible ones, and within the same feasibility class larger $h_s$ is preferred.
Accordingly, stage $s$ produces the top $B_s$ survivors under $\succ_s$:
\begin{equation}
\mathcal{C}_t^{(s)}=\mathrm{Top}_{B_s}\big(\mathcal{C}_t^{(s-1)};\succ_s\big),
\qquad s=1,\dots,S,
\label{eq:stage_selection}
\end{equation}
and we write $\mathcal{C}_t^{(\mathrm{final})}:=\mathcal{C}_t^{(S)}$. Finally, REUSE constructs the output at the set level:
\begin{equation}
\begin{aligned}
S_t \in \operatorname*{arg\,max}_{S}\quad
& J_p(S)\\
\text{s.t.}\quad
& S\subseteq\mathcal{C}_t^{(\mathrm{final})},\quad |S|=N,\\
& c(m,p)=1,\quad \forall\,m\in S,\\
& \Delta(m_i,m_j)\ge\tau,\quad 1\le i<j\le N.
\end{aligned}
\label{eq:final_diverse_set}
\end{equation}
where $J_p(S)$ combines affinity to both targets and diversity with QED/SA-based chemical control within the final panel.  If no feasible $N$-subset exists, the current iteration simply yields $S_t=\emptyset$. In such cases, the global search does not terminate; instead, the algorithm safely falls back to the best-so-far panel tracked from previous iterations. To drive the search forward into the next generation, the noise-space population is updated via elitist survival:
\begin{equation}
\mathcal{P}_t=\mathrm{Top}_{B}\big(\mathcal{P}_{t-1}\cup\mathcal{O}_t;\mathcal{F}(\cdot;p)\big).
\label{eq:population_update}
\end{equation}
This updated population $\mathcal{P}_t$ retains the most promising generator inputs and serves as the parent pool for the next iteration, thereby closing the evolutionary loop. Upon completing all $T$ iterations, REUSE returns the final best-so-far balanced and diverse candidate panel $S^\star$, rather than a winner-take-all molecule. Concrete instantiations of $\Psi$, $h_s$, and $J_p$ are given in Appendix A.3, Appendix A.5, and Appendix A.6, respectively.

\begin{figure*}[t!]
    \centering
    \includegraphics[width=1\linewidth]{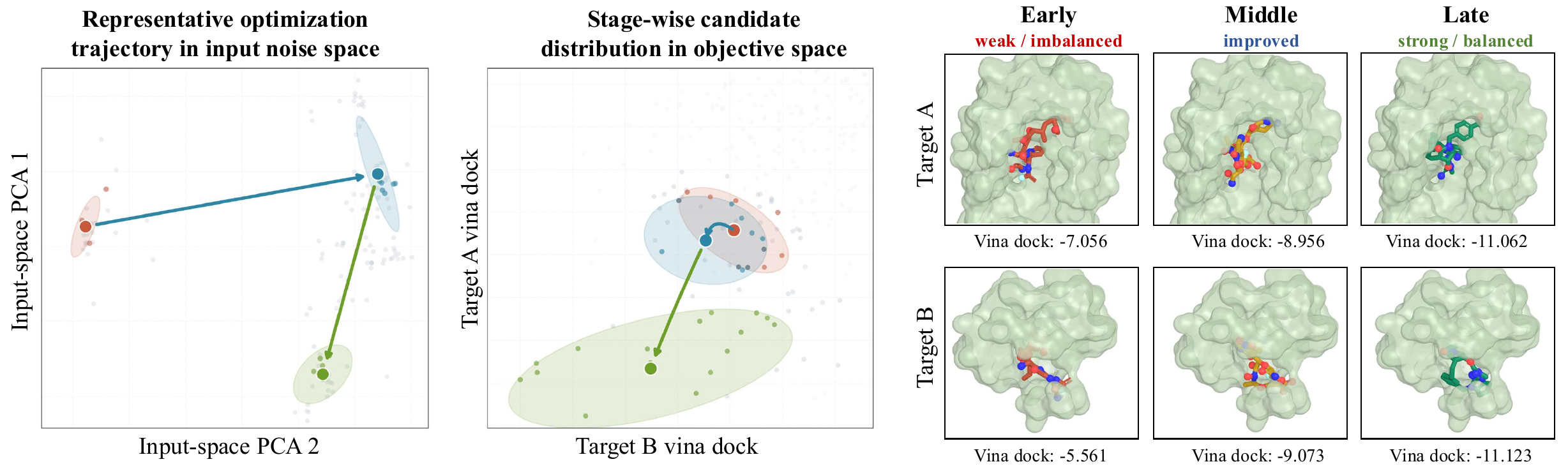}
    \caption{Visualization of the stage-wise search and optimization process in REUSE. (Left) Optimization trajectory in the input noise space. (Middle) Stage-wise candidate distribution in the objective space. (Right) Evolution of molecular binding poses across early, middle, and late stages.}
    \label{fig:2}
\end{figure*}

\begin{table*}[t]
\centering
\small
\setlength{\tabcolsep}{3pt}
\caption{
Quantitative comparison of REUSE and baselines for dual-target molecule
generation. All methods except MDRL use the same budget of 72 generated
candidates per target pair; MDRL follows its higher-budget converged
reinforcement-learning protocol.
Dark blue, light blue, and gray denote first, second, and third place,
respectively. The Reference row is excluded
from ranking.
}
\label{tab:formal20_merged}
\resizebox{\textwidth}{!}{%
\begin{tabular}{@{}l*{20}{c}@{}}
\toprule
\multirow{2}{*}{Method}
& \multicolumn{2}{c}{P-1 Vina Dock $\downarrow$}
& \multicolumn{2}{c}{P-2 Vina Dock $\downarrow$}
& \multicolumn{2}{c}{Max Vina Dock $\downarrow$}
& \multicolumn{2}{c}{Dual High Aff. $\uparrow$}
& \multicolumn{2}{c}{P-1 GNINA CNNaff. $\uparrow$}
& \multicolumn{2}{c}{P-2 GNINA CNNaff. $\uparrow$}
& \multicolumn{2}{c}{QED $\uparrow$}
& \multicolumn{2}{c}{SA $\uparrow$}
& \multicolumn{2}{c}{Diversity $\uparrow$}
& \multicolumn{2}{c}{Consensus Hit $\uparrow$} \\
\cmidrule(lr){2-3}
\cmidrule(lr){4-5}
\cmidrule(lr){6-7}
\cmidrule(lr){8-9}
\cmidrule(lr){10-11}
\cmidrule(lr){12-13}
\cmidrule(lr){14-15}
\cmidrule(lr){16-17}
\cmidrule(lr){18-19}
\cmidrule(lr){20-21}
& Avg. & Med.
& Avg. & Med.
& Avg. & Med.
& Avg. & Med.
& Avg. & Med.
& Avg. & Med.
& Avg. & Med.
& Avg. & Med.
& Avg. & Med.
& Avg. & Med. \\
\midrule

Reference
& -7.60 & -7.80
& -6.02 & -7.30
& -- & --
& -- & --
& 5.44 & 5.57
& 5.31 & 5.10
& 0.53 & 0.54
& 0.74 & 0.77
& -- & --
& -- & -- \\

\midrule

Pocket2Mol
& -4.73 & -4.68
& -4.65 & -4.69
& -4.48 & -4.47
& 0.3\% & 0.0\%
& 4.07 & 4.01
& 4.11 & 4.09
& 0.46 & 0.47
& \rankone{0.87} & \rankone{0.86}
& \rankone{0.85} & \rankone{0.82}
& 0.1\% & 0.0\% \\

TargetDiff
& -8.59 & -8.64
& -6.91 & -7.39
& -6.36 & -7.15
& 27.7\% & 19.8\%
& 5.91 & 5.95
& 4.52 & 4.59
& 0.50 & 0.52
& 0.55 & \rankthree{0.57}
& \ranktwo{0.70} & \rankthree{0.69}
& 26.7\% & 17.1\% \\

DiffLinker
& -7.02 & -7.77
& -7.19 & -7.84
& -5.81 & -7.10
& 24.3\% & 0.0\%
& 4.77 & 4.81
& 4.79 & 4.85
& 0.43 & 0.45
& 0.32 & 0.29
& 0.50 & 0.49
& 21.3\% & 0.0\% \\

LinkerNet
& -8.14 & -8.32
& -8.10 & -8.33
& -7.12 & -7.65
& 34.6\% & 0.0\%
& 5.31 & 5.32
& 5.01 & 5.05
& \rankthree{0.57} & \rankthree{0.62}
& \ranktwo{0.69} & \ranktwo{0.70}
& 0.34 & 0.34
& 30.8\% & 0.0\% \\

ParetoDrug
& -8.08 & \rankthree{-9.34}
& \rankthree{-8.83} & -8.71
& -6.82 & \rankthree{-8.42}
& 26.7\% & 16.7\%
& 5.19 & 5.45
& 5.16 & 5.04
& 0.41 & 0.40
& 0.52 & 0.53
& 0.54 & 0.60
& 8.3\% & 0.0\% \\

ComPDIFF
& -8.19 & -8.32
& -8.25 & -8.40
& -7.41 & -7.69
& 35.2\% & 29.7\%
& 5.83 & 5.82
& 5.69 & 5.70
& 0.53 & 0.54
& 0.54 & 0.54
& \rankthree{0.69} & \ranktwo{0.71}
& 31.1\% & 26.5\% \\

DualDiff
& -8.34 & -8.47
& -8.40 & -8.49
& -7.62 & -7.78
& 35.8\% & 29.7\%
& 5.87 & 5.85
& 5.73 & 5.76
& 0.52 & 0.51
& 0.54 & 0.56
& 0.63 & 0.64
& 32.6\% & 27.9\% \\

MDRL
& -8.32 & -8.42
& -8.45 & -8.66
& -7.91 & -8.10
& 37.2\% & 31.6\%
& 5.11 & 5.12
& 5.02 & 4.96
& \rankthree{0.57} & 0.58
& 0.55 & 0.56
& 0.44 & 0.43
& 35.6\% & 29.9\% \\

\midrule

w/o search
& -8.02 & -8.11
& -7.94 & -8.06
& -7.35 & -7.48
& 31.8\% & 26.4\%
& 5.14 & 5.18
& 5.10 & 5.13
& 0.56 & 0.58
& 0.55 & 0.56
& 0.45 & 0.44
& 30.9\% & 25.7\% \\

w/o balance
& \rankthree{-9.14} & -9.26
& \ranktwo{-9.02} & \ranktwo{-9.26}
& \rankthree{-8.41} & \rankone{-8.62}
& \ranktwo{55.2\%} & \rankthree{50.8\%}
& \rankthree{6.30} & \rankone{6.22}
& \ranktwo{6.06} & \rankthree{6.04}
& \ranktwo{0.61} & \ranktwo{0.63}
& 0.56 & 0.55
& 0.44 & 0.43
& \rankthree{48.3\%} & \rankthree{47.7\%} \\

w/o chemctrl
& \rankone{-9.60} & \rankone{-9.55}
& -8.81 & \rankthree{-8.92}
& \ranktwo{-8.47} & -8.32
& \rankthree{54.5\%} & \ranktwo{54.5\%}
& \rankone{6.36} & \ranktwo{6.19}
& \rankthree{6.01} & \ranktwo{6.05}
& 0.52 & 0.52
& 0.51 & 0.51
& 0.47 & 0.51
& \rankone{51.5\%} & \rankone{53.7\%} \\

REUSE
& \ranktwo{-9.41} & \ranktwo{-9.38}
& \rankone{-9.26} & \rankone{-9.64}
& \rankone{-8.64} & \ranktwo{-8.58}
& \rankone{58.3\%} & \rankone{57.3\%}
& \ranktwo{6.32} & \rankthree{6.15}
& \rankone{6.13} & \rankone{6.24}
& \rankone{0.62} & \rankone{0.67}
& \rankthree{0.57} & 0.56
& 0.48 & 0.46
& \ranktwo{50.1\%} & \ranktwo{48.8\%} \\

\bottomrule
\end{tabular}%
}
\end{table*}

\section{Experiments}
\subsection{Experimental Setup}
We evaluate on the full dual-target benchmark of Zhou et
al.~\cite{zhou2024reprogramming}, comprising 12,917 target pairs across 438 unique targets. Rather than random pairing, the benchmark derives
pharmacologically meaningful tasks from drug combinations with significant
synergistic effects~\cite{zheng2021drugcomb}. Each target is associated with
a reference molecule and a pocket-level protein--ligand complex. Available
complexes are taken from PDBBind, while missing structures are constructed
from PDB or AlphaFold DB using P2Rank and AutoDock Vina
\cite{li2015pdbbind,berman2000pdb,varadi2022alphafolddb,krivak2018p2rank,trott2010vina}. We compare with single-target generators (Pocket2Mol and TargetDiff),
linker-design methods (DiffLinker and LinkerNet), ParetoDrug, dual-target
methods (ComPDIFF, DualDiff, and MDRL), and REUSE ablations. Together, these
baselines span both diffusion-based and non-diffusion generation and
optimization paradigms. REUSE uses TargetDiff as its default frozen
backbone. Except for MDRL, all methods generate at most 72 candidates per
target pair before common scoring and panel selection. MDRL follows its
higher-budget converged reinforcement-learning protocol. Max Vina Dock is the less favorable of the two target-specific Vina scores,
while Dual High Affinity requires both scores to outperform their
corresponding references. GNINA CNNaffinity is a larger-is-better post hoc score that is not used during generation, search, or selection. Consensus Hit additionally requires non-negative GNINA CNNaffinity margins relative to both references, QED $\geq 0.5$, and SA $\geq 0.5$. Here, SA denotes the benchmark-normalized, larger-is-better transformation of the Ertl synthetic-accessibility score \cite{ertl2009sascore}. Diversity is the average pairwise Morgan-fingerprint dissimilarity within each reported panel. We report average and median results over the benchmark.

\subsection{Main Quantitative Results}

Table~\ref{tab:formal20_merged} reports the main comparison under the matched
candidate budget, except for the higher-budget MDRL baseline. REUSE achieves
the best average P-2 Vina Dock ($-9.26$), Max Vina Dock ($-8.64$), and Dual
High Affinity (58.3\%). Relative to MDRL, these metrics improve by 0.81,
0.73, and 21.1 percentage points, respectively. Importantly, post hoc GNINA
CNNaffinity provides an optimization-independent computational validation,
as it is not used during generation, search, or selection. REUSE achieves
P-1/P-2 GNINA scores of 6.32/6.13, exceeding the best prior-baseline scores
of 5.91/5.73 and showing that the affinity gains persist beyond the Vina
optimization oracle. With QED and SA serving as chemistry-control criteria,
REUSE obtains average QED/SA values of 0.62/0.57 and a Consensus Hit rate of
50.1\%.

\subsection{Ablation Study}

We ablate input-space search, balance-aware selection, and chemical control. Removing search causes the largest degradation: average P-1/P-2 Vina Dock worsens from $-9.41$/$-9.26$ to $-8.02$/$-7.94$, Dual High Affinity drops from 58.3\% to 31.8\%, and Consensus Hit drops from 50.1\% to 30.9\%.
This confirms that structured input-space exploration is the main driver of candidate recovery. Removing balance primarily weakens cross-target consistency, reducing Dual High Affinity to 55.2\% and worsening P-2 Vina Dock to $-9.02$. Removing chemical control reveals an affinity--quality
trade-off: P-1 Vina Dock improves to $-9.60$ and Consensus Hit increases slightly to 51.5\%, whereas average QED/SA decreases from 0.62/0.57 to 0.52/0.51. Overall, input-space search drives affinity recovery, balance-aware selection improves cross-target consistency, and chemical control shifts the selected candidates toward more favorable QED/SA profiles.

\begin{figure*}
    \centering
    \includegraphics[width=1\linewidth]{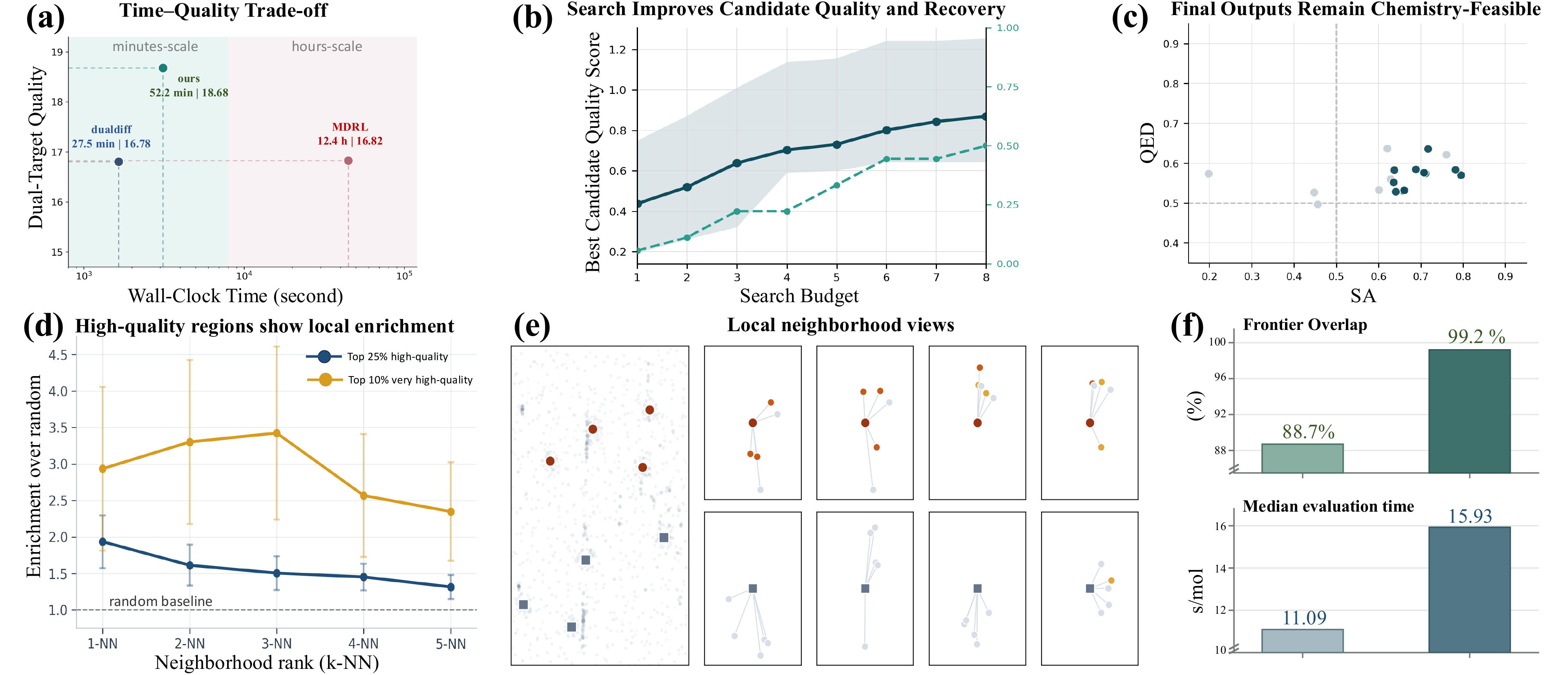}
    \caption{Additional analyses of REUSE. (a) Time--quality trade-off across methods. (b) Candidate quality and chemistry-feasible recovery as the search budget increases. (c) Distribution of final outputs in the QED--SA space. (d) Enrichment of high-quality candidates in local input-space neighborhoods. (e) Representative local neighborhoods around high- and lower-quality anchors. (f) Frontier preservation and evaluation cost of the cheap evaluator relative to the full evaluator.}
\label{fig:combined_experiments}
\end{figure*}


\subsection{Search Dynamics Over Iterations}

Figure~\ref{fig:2} presents the iterative behavior of REUSE from three aligned perspectives. In the input space (left), the trajectory departs from the initial region and progressively concentrates in a smaller set of more promising areas, indicating that search over the input space is gradually identifying and retaining favorable regions rather than repeatedly discovering isolated good points by chance. In the objective space (middle), the population shifts toward the lower-left region over time, showing that the search is improving the two target objectives jointly instead of trading one target off against the other in a purely one-sided manner. In the molecular view (right), representative candidates evolve from weak or imbalanced binding at early stages to stronger and more balanced binding at later stages. This molecular-level progression is consistent with the movement observed in input space and objective space, so the three panels together support the view that the search process is meaningful, cumulative, and not merely a by-product of stochastic sampling.


\subsection{Efficiency, Search Progression, and Chemical Validity}

Figure~\ref{fig:combined_experiments}(a--c) offers a complementary view of REUSE in terms of efficiency, search progression, and chemical validity. In panel (a), REUSE achieves a favorable time--quality trade-off: it obtains higher dual-target quality than MDRL with much lower runtime, and also outperforms the faster DualDiff in quality. For context, the reported MDRL runtime reflects a separate reinforcement-learning optimization for each target pair, so the 12.4-hour cost should be interpreted as per-pair optimization rather than one-time training amortized across pairs. DualDiff is faster, about 27.5 minutes, but its sampling-time dual-target guidance can be sensitive to competing target-induced updates, which may limit balanced recovery. REUSE runs in about 52 minutes; it also performs pair-specific search, but keeps the generator and denoising dynamics fixed and searches only the input space. This avoids the retraining or policy-update cost of generator adaptation, while also sidestepping the denoising-time target-balancing sensitivity introduced by sampling-time intervention. A numeric runtime summary is given in Appendix Table 17, with additional diagnostics in Appendix Tables 7 and 10. Panel (b) shows that increasing the search budget consistently improves both the best lightweight search score and the fraction of cases with at least one chemistry-feasible recovered molecule, suggesting cumulative rather than random exploration. Panel (c) further visualizes the QED–SA distribution of the final outputs relative to the chemistry-feasibility thresholds of QED $\geq 0.5$ and SA $\geq 0.5$. Overall, REUSE provides stronger dual-target recovery than DualDiff at moderate additional inference cost, while remaining far cheaper than the per-pair RL optimization used by MDRL.


\subsection{Local Neighborhood Structure in the Frozen Input Space}

Figure~\ref{fig:combined_experiments}(d--e) examines local neighborhood structure in the frozen input space from two complementary views. Here, an anchor is a point selected by its quality and used as the center for inspecting whether its local neighborhood also contains additional good points. In the panel (d), neighborhoods centered at high-quality anchors contain more high-quality neighbors than would be expected from randomly chosen centers, indicating that favorable solutions are not merely isolated outliers caused by noise. The enrichment is strongest for the top 10\% anchors, which further suggests that the best regions of the frozen input space tend to form exploitable local basins rather than single exceptional samples. The panel (e) makes the same pattern concrete through representative anchor-centered neighborhoods: higher-quality anchors are more often embedded among similarly strong neighbors, whereas lower-quality anchors are typically surrounded by more ordinary ones, consistent with the aggregate trend in the panel (d) rather than being an artifact of averaging alone. Taken together, these results suggest that good points cluster locally in the frozen input space, which in turn helps explain why mutation-based local exploration and neighborhood-preserving search can work reliably here instead of reducing to blind trial-and-error. Panel (f) shows that cheap evaluator
retains 88.7\% and 99.2\% of the full-evaluator top-5 and top-8 frontiers,
respectively. This supports its use as a high-recall screening stage before
full reranking; detailed agreement and efficiency statistics are provided
in Appendix C.6.



\subsection{Further Analyses}

Beyond aggregate benchmark scores, the Supplementary Material provides
complementary evidence on scorer robustness, input-space organization,
optimizer choice, chemical quality, screening efficiency, and backbone
transfer. Appendix~C.1 shows that the recovered candidates retain stable
Vina-family rankings, favorable relaxed-pose scores and pocket contacts,
and substantial residue-level overlap with reference ligands. Appendix~C.2
identifies structurally and objectively coherent local neighborhoods,
suggesting that favorable candidates form exploitable regions rather than
isolated points. Appendix~C.3 further shows how deeper search can improve
chemistry and reach chemistry-feasible regions. Under the matched
72-candidate budget, Appendix~C.4 finds that the default EA is best or tied
best in 17 of 18 metrics relative to CMA-ES and BO. Appendix~C.5 extends the evaluation beyond binding scores: 85.8\% of final
candidates jointly satisfy Ro5 and Veber, 95.8\% are PAINS-free, and 99.2\%
of their scaffolds remain novel relative to the complete benchmark
reference set. The panels also retain high within-panel diversity, with
predicted ADMET profiles provided for downstream prioritization.
Appendix~C.6 validates the coarse-to-fine design, with the cheap evaluator
preserving 88.7\% and 99.2\% of the full-evaluator top-5 and top-8
frontiers. Appendix~C.7 provides pose-level examples of the aggregate gains
across both targets, while Appendix~C.8 shows that compact
target-pair-specific motifs recur across globally distinct molecules
without reducing the panel to a single scaffold. Appendix~C.9 shows that
both the best-so-far search score and chemistry-floor recovery improve with
search budget. Appendix~C.10 confirms that final selection raises Consensus
Hit from 41.8\% to 50.1\% and Dual High Affinity from 50.1\% to 58.3\%,
while improving conservative dual-target docking. Appendix~C.11 quantifies
the time--performance trade-off: REUSE achieves the strongest Combined Dock
result with moderate overhead over DualDiff, yet remains far below MDRL's
12.4-hour per-pair optimization cost. Finally, Appendix~C.12 demonstrates
transfer across three frozen single-target diffusion backbones without
retraining or modifying their denoising dynamics.

\section{Conclusion}

In this work, we investigate whether a frozen single-target diffusion model can be reused for dual-target molecular design without retraining or altering its denoising dynamics. We propose REUSE, which evolves generator inputs using family-level evidence aggregated over repeated stochastic decodes, progressively screens the resulting candidates, and jointly constructs diverse panels with strong affinity to both targets under QED/SA-based chemical control. Across the full dual-target benchmark, REUSE improves Dual High Affinity by 21.1 percentage points over the strongest prior baseline while maintaining QED and SA profiles consistent with commonly used feasibility criteria. These results show that structured input-space search can recover useful dual-target capability already encoded in a frozen single-target prior. More broadly, REUSE offers a general strategy for discovering and organizing sparse, task-relevant capabilities in pretrained molecular generators, opening new possibilities for multi-target and constrained molecular design.

\clearpage
\appendix
\setcounter{secnumdepth}{2}

\section*{Overview}
In this appendix, we provide additional details and supplementary analyses for \textsc{REUSE}, our framework for recovering dual-target molecules by searching the input space of a frozen single-target diffusion model. The appendix is organized into four parts. First, we present additional methodological details, including notation, family-level utility, evolutionary search operators, stage-wise environmental selection, and set-level panel construction. Second, we summarize implementation and reproducibility details, including the concrete experimental instantiation, hyperparameter settings, and runtime environment. Third, we provide supplementary experimental analyses and visualizations that examine oracle and structural sanity checks, input-space optimizer choices, local search dynamics, extended physicochemical and predicted ADMET profiles, structural-alert patterns, benchmark-reference novelty and intra-panel diversity, and qualitative pose- and motif-level properties of the recovered candidates. Fourth, we present mathematical properties and proofs for the exact-selector instantiation analyzed in this paper.

\section*{Reproducibility}

To facilitate reproducibility, this appendix documents the concrete settings
used in all reported experiments, including the frozen generator,
input-space search hyperparameters, docking configuration, chemistry
thresholds, diversity constraints, and runtime environment. We also provide
explicit definitions of the family-level search utility, stage-wise
selection criteria, evaluation metrics, and set-level panel utility. The
source code, configuration files, and evaluation scripts will be made
publicly available upon acceptance.

\section{Method Details}

\subsection{Notation Summary}
\label{app:notation}

Table~\ref{tab:notation} summarizes the main mathematical symbols used in REUSE.

\begin{table*}[t]
\centering
\small
\begin{tabularx}{\textwidth}{@{}L{0.22\textwidth}X@{}}
\toprule
Symbol & Meaning \\
\midrule
$G_{\phi}$ & Pretrained frozen single-target diffusion generator with fixed parameters $\phi$ \\
$p=(t_a,t_b)$ & Dual-target pair \\
$\mathcal{Z}$ & Input noise space searched by REUSE \\
$z \in \mathcal{Z}$ & Input noise realization \\
$\mathcal{P}_t$ & Population of noise-space individuals at iteration $t$ \\
$\mathcal{A}_t$ & Selected parent set at iteration $t$ \\
$\mathcal{O}_t$ & Offspring set at iteration $t$ \\
$\mathcal{M}(z,p)$ & Decoded molecular family generated from noise input $z$ under target pair $p$ \\
$\mathcal{M}(\mathcal{Q},p)$ & Pooled decoded candidate universe induced by a searched input set $\mathcal{Q}$ \\
$\mathcal{F}(z;p)$ & Family-level utility assigned to input $z$ \\
$\mathcal{C}_t^{(s)}$ & Candidate pool after stage $s$ of iteration $t$ \\
$f_{\mathrm{aff}}^{(s)}(m;p)$ & Stage-$s$ dual-target affinity score \\
$q_{\mathrm{chem}}(m)$ & Soft chemistry-quality term \\
$q_{\mathrm{div}}(m;\mathcal{C}_t^{(s-1)})$ & Diversity / redundancy-reduction term evaluated relative to the incoming pool at stage $s$ \\
$h_s(m;p)$ & Stage-$s$ scalar score for candidate $m$ \\
$c(m,p)$ & Hard feasibility indicator \\
$\succ_s$ & Stage-$s$ feasibility-first preference relation \\
$\lambda_{\mathrm{bal}}$ & Balance-penalty weight in the affinity score \\
$J_p(S)$ & Set-level utility for the final candidate panel \\
$\Delta(m_i,m_j)$ & Pairwise dissimilarity between molecules $m_i$ and $m_j$ \\
$\tau$ & Minimum diversity threshold for the final panel \\
$B$ & Population size \\
$B_{\mathrm{par}}$ & Number of selected parents \\
$B_{\mathrm{off}}$ & Number of offspring sampled per iteration \\
$S$ & Number of environmental-selection stages \\
$B_s$ & Survivor budget at stage $s$ \\
$N$ & Final panel size, assumed to satisfy $N\ge 2$ \\
\bottomrule
\end{tabularx}
\caption{Mathematical notation used in REUSE.}
\label{tab:notation}
\end{table*}

For clarity, $\mathrm{Top}_B(\mathcal{X};\cdot)$ returns the
$\min\{B,|\mathcal{X}|\}$ highest-ranked available elements of
$\mathcal{X}$, with the ranking induced by either a scalar score or a
preference relation.
We also set $J_p(\emptyset):=-\infty$ for the empty panel.

\begin{figure}[t]
    \centering
    \includegraphics[width=1\linewidth]{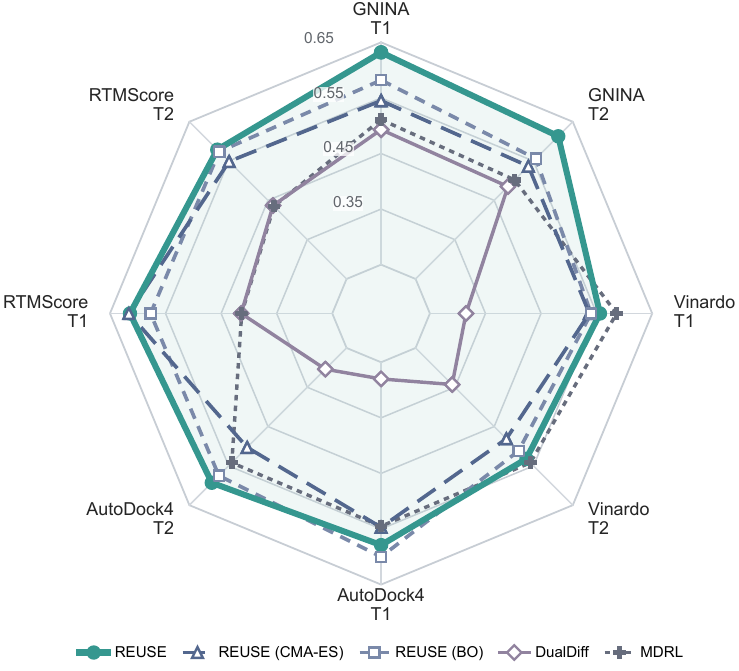}
    \caption{
Post hoc cross-scorer comparison on the two targets. Each spoke reports
the method-level mean for one scorer--target combination. For visualization,
all scorer outputs are oriented so that larger values are better and
monotonically rescaled to a common display scale, preserving within-scorer
rankings. A shared radial range of 0.25--0.65 is used without per-axis
min--max normalization. REUSE denotes the default EA configuration.
}
    \label{fig:cross_scorer_radar}
\end{figure}

\subsection{Overall Procedure}
\label{app:overall_procedure}

Algorithm~\ref{alg:REUSE} summarizes the budgeted implementation of
REUSE. The initialized population is treated as the first latent
generation. At each generation, newly evaluated inputs are decoded into
molecular families and added to the cumulative candidate pool. REUSE then
applies coarse-to-full environmental selection, constructs a feasible
candidate panel, updates the latent population by elitist survival, and
retains the best-so-far panel as the incumbent. After $T$ generations, the
final incumbent $S_T^\star$ is returned.

The components used in Algorithm~\ref{alg:REUSE} are instantiated in the
subsequent appendix sections: Appendix~\ref{app:family_utility} defines the
family-level utility, Appendix~\ref{app:evolution_search} details the
evolutionary operators, Appendix~\ref{app:stage_selection} specifies
stage-wise environmental selection, and Appendix~\ref{app:panel_selection}
gives the final panel-construction rule.

\begin{algorithm*}
\caption{\textsc{REUSE}: Evolutionary Search in the Input Noise Space of a Frozen Single-Target Diffusion Model}
\label{alg:REUSE}
\footnotesize
\begin{algorithmic}[1]

\Require Frozen generator $G_\phi$, target pair $p$, and initialization
distribution $\pi_0(\cdot\mid p)$; sizes $B$, $B_{\mathrm{par}}$, and
$B_{\mathrm{off}}$; latent generations $T$ (including initialization)
\Require Decodes per input $M_{\mathrm{eval}}$; stages $S\geq2$ and caps
$\{B_s\}_{s=1}^{S}$; panel size $N\leq B_S$ and threshold $\tau$
\Require Fixed rescue rule $\mathsf{Rescue}$ satisfying
$\mathsf{Rescue}(D)\subseteq D$ (possibly empty)
\Ensure Best-so-far exact-$N$ panel $S_T^\star$, or $\emptyset$

\State $\mathcal C_0^{(0)}\gets\emptyset$,
$S_0^\star\gets\emptyset$; set $J_p(\emptyset):=-\infty$
\State Sample $\mathcal P_1=\{z_1^{(i)}\}_{i=1}^{B}$ from
$\pi_0(\cdot\mid p)$

\For{$t=1$ to $T$}
    \If{$t=1$}
        \State $\mathcal O_t\gets\mathcal P_1$
    \Else
        \State $\mathcal A_t\gets
        \mathrm{Top}_{B_{\mathrm{par}}}
        (\mathcal P_{t-1};\mathcal F(\cdot;p))$
        \Comment{Eq.~\eqref{eq:parent_selection}}
        \State Sample
        $\mathcal O_t=\{z'_i\}_{i=1}^{B_{\mathrm{off}}}$ from
        $q_{\mathrm{off}}(\cdot\mid\mathcal A_t,p)$
        \Comment{Eq.~\eqref{eq:app_offspring}}
    \EndIf

    \State Decode each $z\in\mathcal O_t$ for $M_{\mathrm{eval}}$ runs
    and evaluate $\mathcal F(z;p)$
    \Comment{Eq.~\eqref{eq:app_family_utility}}
    \State $\mathcal C_t^{(0)}\gets
    \mathcal C_{t-1}^{(0)}
    \cup\displaystyle\bigcup_{z\in\mathcal O_t}\mathcal M(z,p)$
    \Comment{Eq.~\eqref{eq:app_pool}}

    \For{$s=1$ to $S-1$}
        \State $\mathcal C_t^{(s)}\gets
        \mathrm{Top}_{B_s}
        (\mathcal C_t^{(s-1)};\succ_s)$
        \Comment{Eq.~\eqref{eq:app_stage_survivor}}
    \EndFor

    \State $\mathcal L_t\gets
    \mathrm{Top}_{B_S}
    (\mathcal C_t^{(S-1)};\succ_{S-1})$
    \State Fully evaluate all uncached molecules in $\mathcal L_t$
    \State $\mathcal E_t\gets\mathcal L_t$
    \State $\mathcal C_t^{(S)}\gets
    \mathrm{Top}_{B_S}(\mathcal E_t;\succ_S)$

    \If{$\mathfrak{F}_N(\mathcal C_t^{(S)};p,\tau)=\emptyset$}
        \State $\mathcal R_t\gets
        \mathsf{Rescue}
        (\mathcal C_t^{(S-1)}\setminus\mathcal L_t)$
        \State Fully evaluate all uncached molecules in $\mathcal R_t$
        \State $\mathcal E_t\gets\mathcal E_t\cup\mathcal R_t$
        \State $\mathcal C_t^{(S)}\gets
        \mathrm{Top}_{B_S}(\mathcal E_t;\succ_S)$
    \EndIf

    \State $\mathcal V_t\gets
    \mathfrak{F}_N(\mathcal C_t^{(S)};p,\tau)$
    \If{$\mathcal V_t\neq\emptyset$}
        \State Choose
        $S_t\in
        \operatorname*{arg\,max}_{Q\in\mathcal V_t}J_p(Q)$
        \Comment{Eq.~\eqref{eq:app_final_panel}}
    \Else
        \State $S_t\gets\emptyset$
    \EndIf

    \If{$t\geq2$}
        \State $\mathcal P_t\gets
        \mathrm{Top}_{B}
        (\mathcal P_{t-1}\cup\mathcal O_t;
        \mathcal F(\cdot;p))$
        \Comment{Eq.~\eqref{eq:app_population_update}}
    \EndIf

    \If{$J_p(S_t)>J_p(S_{t-1}^\star)$}
        \State $S_t^\star\gets S_t$
    \Else
        \State $S_t^\star\gets S_{t-1}^\star$
    \EndIf
\EndFor

\State \Return $S_T^\star$
\end{algorithmic}
\end{algorithm*}

\subsection{Family-Level Utility in Noise Space}
\label{app:family_utility}

For each input $z \in \mathcal{Z}$, the decoded family is denoted by
\begin{equation}
\mathcal{M}(z,p)=\{m_1,\dots,m_K\}.
\end{equation}
In the reported implementation, $K=M_{\mathrm{eval}}=6$, so each evaluated
input contributes six stochastic molecular decodes.
The family-level search utility is defined as
\begin{equation}
\mathcal{F}(z;p)=\Psi(\mathcal{M}(z,p);p).
\label{eq:app_family_utility}
\end{equation}

To reduce sensitivity to stochastic decoding, $\Psi$ is instantiated as an aggregation over the strongest portion of the decoded family rather than over a single best molecule.
Let
\begin{equation}
r_1 \succ_1 r_2 \succ_1 \cdots \succ_1 r_K
\end{equation}
denote the ranking of molecules in $\mathcal{M}(z,p)$ under the stage-1 feasibility-first ordering.
When this ordering is used inside $\Psi$, the diversity term is evaluated relative to the decoded family itself, i.e., $q_{\mathrm{div}}(m;\mathcal{M}(z,p))$.
A practical instantiation is
\begin{equation}
\Psi(\mathcal{M}(z,p);p)
=
\frac{1}{L}\sum_{\ell=1}^{L} h_1(r_\ell;p),
\qquad 1 \le L \le K,
\label{eq:psi_topl}
\end{equation}
where $L$ controls how much of the high-quality tail is used to score a family.
This scoring rule favors noise inputs whose decoded families are robustly strong rather than accidentally good.

\subsection{Evolutionary Search Operators}
\label{app:evolution_search}

At iteration $t$, parent selection is performed from the previous population $\mathcal{P}_{t-1}$ according to family-level utility:
\begin{equation}
\mathcal{A}_t = \mathrm{Top}_{B_{\mathrm{par}}}\big(\mathcal{P}_{t-1};\mathcal{F}(\cdot;p)\big),
\label{eq:parent_selection}
\end{equation}
where $\mathcal{F}(z;p)=\Psi(\mathcal{M}(z,p);p)$.

Variation is implemented through mutation, crossover, and random injection in the input noise space. A conceptual mixture-based view is
\begin{equation}
\begin{aligned}
q_{\mathrm{off}}(z' \mid \mathcal{A}_t,p)
={}& \alpha_{\mathrm{mut}}\, q_{\mathrm{mut}}(z' \mid \mathcal{A}_t,p) \\
&+ \alpha_{\mathrm{cross}}\, q_{\mathrm{cross}}(z' \mid \mathcal{A}_t,p) \\
&+ \alpha_{\mathrm{imm}}\, \pi_0(z' \mid p),
\end{aligned}
\label{eq:app_offspring}
\end{equation}
with $\alpha_{\mathrm{mut}}+\alpha_{\mathrm{cross}}+\alpha_{\mathrm{imm}}=1$.
In our implementation, $\pi_0(\cdot\mid p)$ denotes the random noise distribution used for population initialization and immigration. Samples are drawn independently from this distribution, while target-pair information enters through molecular decoding, dual-target evaluation, and downstream selection.

In the final implementation, this variation step is instantiated by crossover with rate $\alpha_{\mathrm{cross}}$, Gaussian mutation with scale $\sigma_{\mathrm{mut}}$ and decay $\gamma_{\mathrm{mut}}$, and $n_{\mathrm{imm}}$ random injections per generation; see Table~\ref{tab:key_hparams}.

A continuous-space instantiation is:
\begin{align}
q_{\mathrm{mut}} &: \quad z' = z + \epsilon, \qquad \epsilon \sim \mathcal{N}(0,\sigma_{\mathrm{mut}}^2 I), \\
q_{\mathrm{cross}} &: \quad z' = \lambda z_i + (1-\lambda) z_j, \notag\\
&\qquad \lambda \sim \mathcal{U}(0,1),\quad z_i,z_j \sim \mathcal{A}_t, \\
q_{\mathrm{imm}} &: \quad z' \sim \pi_0(\cdot \mid p).
\end{align}

Let $\mathcal{O}_t$ denote the input set evaluated at generation $t$, with
$\mathcal{O}_1=\mathcal{P}_1$. The cumulative molecular pool is defined
recursively as
\begin{equation}
\mathcal{C}_t^{(0)}
=
\begin{cases}
\displaystyle
\bigcup_{z\in\mathcal{O}_1}\mathcal{M}(z,p),
& t=1, \\[6pt]
\displaystyle
\mathcal{C}_{t-1}^{(0)}
\cup
\bigcup_{z\in\mathcal{O}_t}\mathcal{M}(z,p),
& t\geq 2.
\end{cases}
\label{eq:app_pool}
\end{equation}
Thus, previously decoded molecules remain available for subsequent
environmental selection and panel construction.

After panel construction, the next noise-space population is updated by elitist survival:
\begin{equation}
\mathcal{P}_t=\mathrm{Top}_{B}\big(\mathcal{P}_{t-1}\cup\mathcal{O}_t;\mathcal{F}(\cdot;p)\big).
\label{eq:app_population_update}
\end{equation}
This yields a standard population-based evolutionary loop in the input noise space.

\subsection{Stage-Wise Environmental Selection}
\label{app:stage_selection}

We index candidate pools by $s=0,\dots,S$, where stage $s$ maps $\mathcal{C}_t^{(s-1)}$ to $\mathcal{C}_t^{(s)}$.
At stage $s$, each candidate is assigned a scalar score
\begin{equation}
\begin{aligned}
h_s(m;p)
={}& f_{\mathrm{aff}}^{(s)}(m;p)
+ \beta_{\mathrm{chem}} q_{\mathrm{chem}}(m) \\
&+ \beta_{\mathrm{div}} q_{\mathrm{div}}(m;\mathcal{C}_t^{(s-1)}).
\end{aligned}
\label{eq:app_stage_score}
\end{equation}

\paragraph{Balance-aware affinity term.}
Let $A_s(m,t_a)$ and $A_s(m,t_b)$ denote stage-$s$ affinity estimates for the two targets, represented on a common larger-is-better utility scale.
We instantiate
\begin{equation}
\begin{aligned}
f_{\mathrm{aff}}^{(s)}(m;p)
={}& w_a A_s(m,t_a)+w_b A_s(m,t_b) \\
&-\lambda_{\mathrm{bal}}\big|A_s(m,t_a)-A_s(m,t_b)\big|,
\end{aligned}
\label{eq:app_balance_aff}
\end{equation}
where $\lambda_{\mathrm{bal}}>0$ penalizes asymmetric candidates.
When a stage is based on raw docking scores, these scores are first converted to the same utility scale by sign flip (or an equivalent monotone transformation).
In our implementation, the stage-specific estimators $\{A_s\}_{s=1}^{S}$ are organized as a coarse-to-fine docking pipeline.
Early stages use inexpensive docking-derived surrogates, including rapid docking-score evaluation and local pose refinement, while the final stage uses a full docking search on the reduced frontier.
Thus, REUSE allocates high-fidelity evaluation only to a compact set of survivors selected by evolution.

\paragraph{Chemistry term.}
The chemistry term $q_{\mathrm{chem}}(m)$ is a soft quality term that rewards chemically desirable outputs.
In our experimental instantiation, it is computed from standard molecular quality indicators, specifically QED and SA \cite{bickerton2012qed,ertl2009sascore}. Before aggregation, these quantities are transformed to a common larger-is-better scale so that candidates with better drug-likeness and synthetic accessibility receive larger scores.

\paragraph{Diversity term.}
A simple diversity instantiation is
\begin{equation}
q_{\mathrm{div}}(m;\mathcal{C}_t^{(s-1)})
=
\min_{m' \in \mathcal{C}_t^{(s-1)}\setminus\{m\}} \Delta(m,m').
\label{eq:app_div_score}
\end{equation}
If $|\mathcal{C}_t^{(s-1)}|=1$, we set $q_{\mathrm{div}}(m;\mathcal{C}_t^{(s-1)})=0$.
In our implementation, the pairwise dissimilarity is defined by Morgan-fingerprint distance,
\begin{equation}
\Delta(m_i,m_j)
=
1-\mathrm{Tan}\!\big(\mathrm{FP}(m_i),\mathrm{FP}(m_j)\big),
\label{eq:app_morgan_distance}
\end{equation}
where $\mathrm{FP}(\cdot)$ denotes the Morgan fingerprint and $\mathrm{Tan}(\cdot,\cdot)$ is Tanimoto similarity \cite{rogers2010ecfp}.

\paragraph{Feasibility-first preference.}
Selection at stage $s$ is lexicographic:
\begin{equation}
\begin{aligned}
m_i \succ_s m_j
\quad \Longleftrightarrow \quad
&\big(c(m_i,p),\, h_s(m_i;p)\big) \\
&> \big(c(m_j,p),\, h_s(m_j;p)\big),
\end{aligned}
\label{eq:app_lexicographic}
\end{equation}
where $>$ denotes lexicographic comparison.
Thus feasible candidates always dominate infeasible ones, and within the same feasibility class larger $h_s$ is preferred.
In our implementation, $c(m,p)$ includes molecular validity and chemistry-admissibility checks, together with the availability of valid affinity estimates required at the corresponding stage.
For candidates arising from local noise-space perturbations, feasibility additionally enforces bounded structural deviation relative to the corresponding parent molecule, implemented through similarity- and size-based constraints, so that the search remains local and chemically controlled.
The preliminary stages retain
\begin{equation}
\mathcal{C}_t^{(s)}
=
\mathrm{Top}_{B_s}
\big(\mathcal{C}_t^{(s-1)};\succ_s\big),
\qquad s=1,\dots,S-1.
\label{eq:app_stage_survivor}
\end{equation}

The terminal stage first forms the base shortlist
\[
\mathcal{L}_t
=
\mathrm{Top}_{B_S}
\big(\mathcal{C}_t^{(S-1)};\succ_{S-1}\big)
\]
and fully evaluates its uncached members. Let
$\mathcal{E}_t=\mathcal{L}_t$ initially. If the resulting terminal
survivor pool contains no feasible exact-$N$ panel, the fixed rescue rule
selects additional candidates
$\mathcal{R}_t\subseteq
\mathcal{C}_t^{(S-1)}\setminus\mathcal{L}_t$.
After fully evaluating their uncached members, we update
$\mathcal{E}_t\gets\mathcal{E}_t\cup\mathcal{R}_t$ and recompute
\[
\mathcal{C}_t^{(S)}
=
\mathrm{Top}_{B_S}\big(\mathcal{E}_t;\succ_S\big).
\]
Thus, $B_S$ caps the terminal survivor pool rather than the number of
full-evaluation calls.

\begin{figure*}[t]
    \centering
    \includegraphics[width=0.95\linewidth]{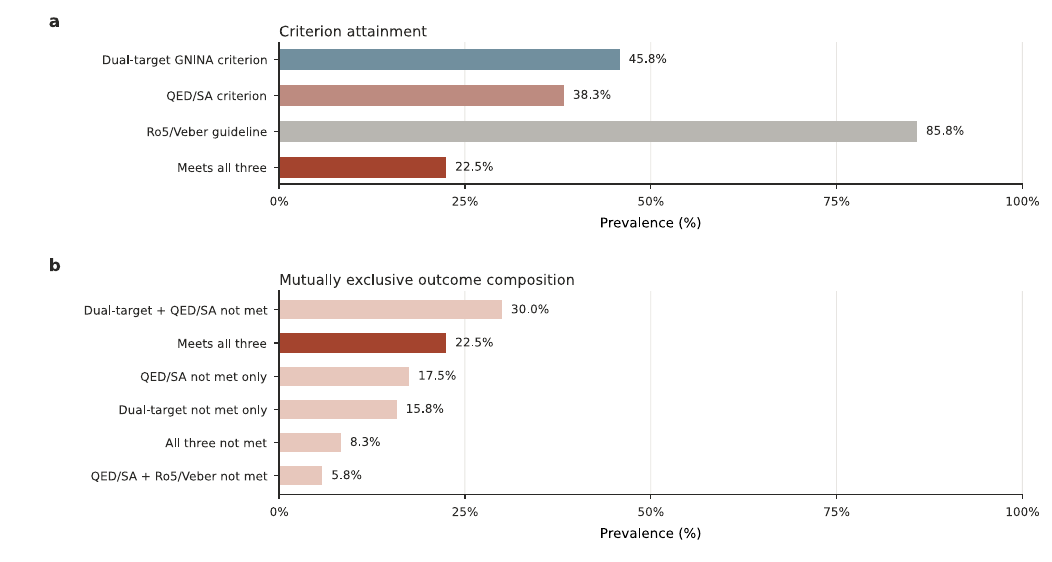}
    \caption{
    Complementary binding and medicinal-chemistry criterion attainment
    within the pre-selection \textsc{REUSE} candidate pool, pooled across
    random seeds. This pool is analyzed before hard-admissibility filtering
    and final set-level panel construction.
    (a) The criteria are evaluated independently rather than as a
    sequential filtering funnel. The dual-target GNINA criterion requires
    non-negative reference-normalized GNINA margins for both targets; the
    QED/SA criterion requires QED $\geq 0.5$ and normalized SA $\geq 0.5$;
    and the Ro5/Veber criterion requires both guidelines to be satisfied.
    ``Meets all three'' denotes the intersection of these criteria.
    (b) Mutually exclusive observed outcome profiles. Zero-frequency
    profiles are omitted. ``Meets all three'' in this figure does not
    include PAINS or Brenk structural-alert filtering.
    }
    \label{fig:criterion}
\end{figure*}

\subsection{Set-Level Final Panel Construction}
\label{app:panel_selection}

The final output is selected from the terminal pool $\mathcal{C}_t^{(\mathrm{final})}:=\mathcal{C}_t^{(S)}$ at the \emph{set} level.
We define a panel utility
\begin{equation}
\begin{aligned}
J_p(S)
={}& \frac{1}{|S|}\sum_{m\in S}
\left[
\eta_{\mathrm{aff}} f_{\mathrm{aff}}^{(S)}(m;p)
+ \eta_{\mathrm{chem}} q_{\mathrm{chem}}(m)
\right] \\
&+ \eta_{\mathrm{div}} D(S),
\end{aligned}
\label{eq:app_panel_utility}
\end{equation}
where
\begin{equation}
D(S)=\frac{1}{|S|(|S|-1)}
\sum_{m_i \neq m_j \in S}\Delta(m_i,m_j)
\label{eq:app_panel_div}
\end{equation}
is the average pairwise dissimilarity within the panel, with $\Delta$ instantiated as the Morgan-fingerprint distance in Eq.~\eqref{eq:app_morgan_distance}.

Define the feasible panel family
\begin{equation}
\begin{aligned}
\mathfrak{F}_N\!\left(\mathcal{C}_t^{(\mathrm{final})};p,\tau\right)
&:= \Bigl\{\, S \;\Bigm| \\
&\quad S \subseteq \mathcal{C}_t^{(\mathrm{final})},\quad |S|=N, \\
&\quad c(m,p)=1\ \forall m\in S, \\
&\quad \Delta(m_i,m_j)\ge\tau\ \forall m_i\neq m_j\in S
\,\Bigr\}.
\end{aligned}
\label{eq:app_feasible_family}
\end{equation}

If $\mathfrak{F}_N(\mathcal{C}_t^{(\mathrm{final})};p,\tau)\neq\emptyset$, the final panel is given by
\begin{equation}
S_t
\in
\arg\max_{S \in \mathfrak{F}_N(\mathcal{C}_t^{(\mathrm{final})};p,\tau)}
J_p(S).
\label{eq:app_final_panel}
\end{equation}
If $\mathfrak{F}_N(\mathcal{C}_t^{(\mathrm{final})};p,\tau)=\emptyset$, we set $S_t=\emptyset$ and use the convention $J_p(\emptyset)=-\infty$.

In practice, the nonempty case can be solved greedily or by exact subset search when the terminal pool is sufficiently small.

\begin{figure*}[t]
    \centering
    \includegraphics[width=1\linewidth]{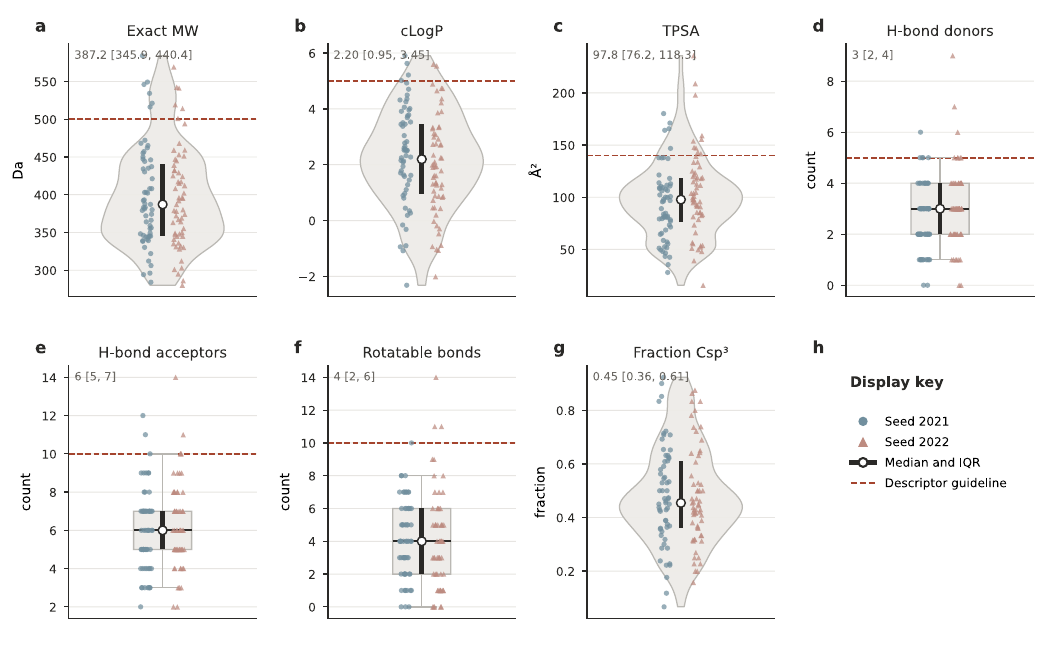}
    \caption{
    Extended physicochemical distributions of the final \textsc{REUSE}
    candidates. Blue circles and terracotta triangles denote the two random
    seeds. The open black circle and thick black interval indicate the
    pooled median and interquartile range, respectively; the corresponding
    numerical values are printed in each panel. Dashed lines indicate the
    descriptor guidelines used for the rule-based summaries: molecular
    weight $\leq 500$ Da, cLogP $\leq 5$, TPSA $\leq 140$~\AA$^2$,
    hydrogen-bond donors $\leq 5$, hydrogen-bond acceptors $\leq 10$, and
    rotatable bonds $\leq 10$. Fraction Csp$^3$ is shown descriptively
    without a hard threshold.
    }
    \label{fig:physchem}
\end{figure*}

\subsection{Conceptual Objective vs. Operational Procedure}
\label{app:objective_connection}

Eq.(3) in the main text defines the conceptual constrained multi-objective optimization target over a searched input set $\mathcal{Q}$ and a final panel $S$ drawn from the pooled decoded universe $\mathcal{M}(\mathcal{Q},p)$.
REUSE approximates this target through four coupled components:
\begin{enumerate}
    \item \textbf{Noise-space family optimization:} score each searched input $z$ using $\Psi(\mathcal{M}(z,p);p)$;
    \item \textbf{Variation in noise space:} generate new offspring from selected parents through mutation, crossover, and immigration;
    \item \textbf{Environmental selection:} filter decoded molecules under feasibility, affinity, chemistry, and diversity criteria;
    \item \textbf{Panel construction and incumbent tracking:} select a feasible and diverse subset maximizing $J_p(S)$ and keep the best-so-far panel across iterations.
\end{enumerate}
Operationally, iteration $t$ applies environmental selection and panel
construction to the cumulative pool of molecules decoded up to that point.
The resulting panel $S_t$ is compared with the previous incumbent, allowing
REUSE to approximate the conceptual pooled objective through incremental
search and best-so-far incumbent tracking.
This decomposition preserves the multi-objective optimization perspective while avoiding direct optimization over the stochastic decoding map of the frozen generator.

\section{Implementation and Reproducibility}

\subsection{Implementation Details}
\label{app:implementation_details}

\begin{table*}[t]
\centering
\small
\renewcommand{\arraystretch}{1.03}
\setlength{\tabcolsep}{3.5pt}

\begin{tabularx}{\textwidth}{@{}lXr@{\qquad}lXr@{}}
\toprule
\multicolumn{3}{c}{Search and decoding settings} &
\multicolumn{3}{c}{Selection and panel settings} \\
\cmidrule(lr){1-3}
\cmidrule(lr){4-6}
Symbol & Meaning & Value & Symbol & Meaning & Value \\
\midrule

$B$
& population size
& 4
& $S$
& number of selection stages
& 2 \\

$B_{\mathrm{par}}$
& parent pool size
& 3
& $B_1$
& stage-1 survivor cap
& 40 \\

$T$
& total latent generations, including initialization
& 3
& $B_2$
& base-shortlist and terminal-survivor cap
& 20 \\

$B_{\mathrm{off}}$
& offspring size per generation
& 4
& $N$
& final panel size
& 10 \\

$d_z$
& input-space dimensionality
& 10
& $\beta_{\mathrm{chem}}^{\mathrm{search}}$
& search-stage chemistry weight
& 0.40 \\

$\alpha_{\mathrm{cross}}$
& crossover rate
& 0.35
& $\beta_{\mathrm{chem}}^{\mathrm{rerank}}$
& reranking chemistry weight
& 0.60 \\

$\sigma_{\mathrm{mut}}$
& initial mutation scale
& 0.42
& $\beta_{\mathrm{div}}^{\mathrm{subset}}$
& subset diversity weight
& 0.05 \\

$\gamma_{\mathrm{mut}}$
& mutation-scale decay
& 0.82
& $\beta_{\mathrm{bal}}^{\mathrm{subset}}$
& subset balance weight
& 0.02 \\

$n_{\mathrm{imm}}$
& random injections per generation
& 1
& $\lambda_{\mathrm{bal}}^{\mathrm{proxy}}$
& proxy-stage balance weight
& 0.25 \\

$M_{\mathrm{eval}}$
& stochastic decodes per evaluated input
& 6
& $\tau_{\mathrm{QED}}^{\mathrm{hard}}$
& hard QED admissibility floor
& 0.50 \\

$B_{\mathrm{dec}}$
& total raw decoded-candidate budget
& 72
& $\tau_{\mathrm{SA}}^{\mathrm{hard}}$
& hard SA admissibility floor
& 0.50 \\

$T_{\mathrm{train}}$
& diffusion horizon during training
& 1000
& $\tau_{\mathrm{QED}}^{\mathrm{good}}$
& preferred QED quality floor
& 0.55 \\

$T_{\mathrm{infer}}$
& reverse denoising steps during inference
& 100
& $\tau_{\mathrm{SA}}^{\mathrm{good}}$
& preferred SA quality floor
& 0.55 \\

& &
&
$\rho_{\max}$
& maximum pairwise Tanimoto similarity
& 0.85 \\

& &
&
$\tau$
& minimum pairwise Morgan dissimilarity
& 0.15 \\

\bottomrule
\end{tabularx}

\caption{
Key hyperparameters and implementation settings for the default
\textsc{REUSE} configuration used in the main experiments.
Variant-specific settings are described in the corresponding supplementary
sections.
}
\label{tab:key_hparams}
\end{table*}

Here $T=3$ includes the initialized population as the first latent
generation. Thus, REUSE evaluates
$B+(T-1)B_{\mathrm{off}}=12$ inputs and decodes each input
$M_{\mathrm{eval}}=6$ times, yielding 72 raw decoded candidates per target
pair. At iteration $t$, all molecules decoded up to that point remain
available in the cumulative pool $\mathcal{C}_t^{(0)}$. The rescue rule
only adds full evaluations of already decoded candidates and therefore
does not increase this decoding budget. Stage-$S$ evaluations are cached by molecule and reused across
generations.

Unless otherwise stated, we instantiate REUSE on top of a frozen pretrained single-target diffusion generator, \emph{TargetDiff} \cite{guan2023targetdiff}, and keep the generator fixed throughout the main experiments.
The model is loaded from a pretrained checkpoint and reconstructed from the checkpoint-embedded training configuration.
The diffusion horizon is 1000 during training, while inference uses 100 reverse denoising steps.

For search in the input-noise space, we use a population-based evolutionary
procedure with mutation, crossover, and random injection. The initialized
population counts as the first of three latent generations, and every
evaluated input is decoded six times. Newly decoded molecules are retained
in a cumulative candidate pool, while the best-so-far panel is tracked
across generations. The final panel size is fixed to $N=10$, and the initial
population is sampled from $\pi_0(\cdot\mid p)$.

For candidate generation, ligand size, initial coordinates, and atom types follow the standard initialization procedure of the frozen generator.

Environmental selection follows a coarse-to-fine docking pipeline.
An initial screening stage uses fast docking-based signals for both targets, followed by full docking on a reduced shortlist.
An optional rescue stage adds a small number of additional fully docked candidates when necessary.
The final panel is selected by reusing the previously computed docking results rather than launching an additional docking round.
Unless otherwise stated, the final evaluation stages use docking exhaustiveness 32.

For chemistry-aware ranking, we use a weighted chemistry-control score built from QED and SA together with auxiliary medicinal-chemistry penalties and rewards.
For final panel construction, we additionally enforce hard chemistry admissibility thresholds and a maximum pairwise Tanimoto similarity of 0.85.
Following the main text, structural diversity is measured using fingerprint-level molecular dissimilarity.

All experiments are run in the \texttt{dual\_target} environment with fixed runtime settings for reproducibility.
Our main runs are conducted on a server with 8 NVIDIA RTX 4090 GPUs.

\begin{figure*}
    \centering
    \includegraphics[width=1\linewidth]{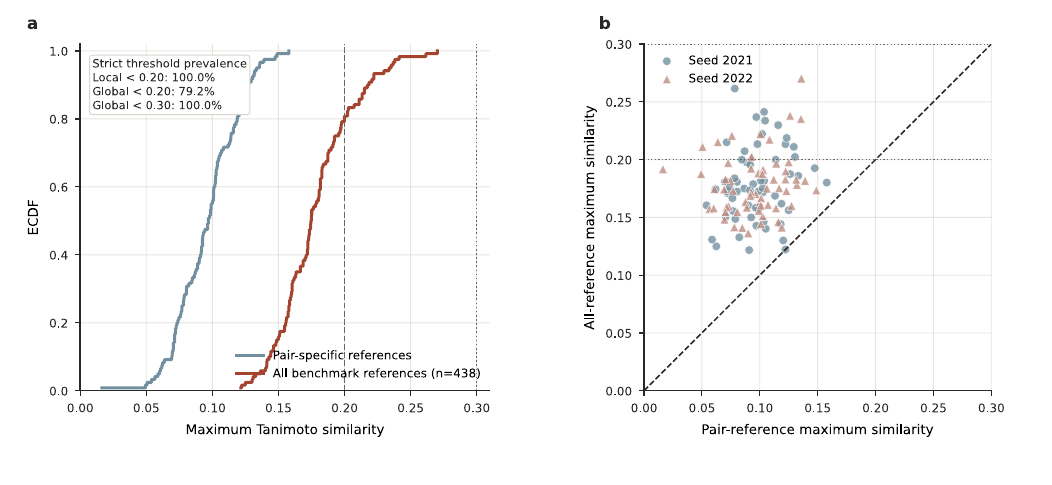}
    \caption{
    Structural similarity of the final \textsc{REUSE} candidates to
    benchmark references, computed using Tanimoto similarity with radius-2,
    2048-bit Morgan fingerprints \cite{rogers2010ecfp}.
    (a) Empirical cumulative distributions of each candidate's maximum
    similarity to the two references associated with its target pair and
    to the complete set of 438 benchmark references. Vertical lines mark
    similarity values of 0.20 and 0.30. The annotation reports strict
    threshold proportions using $<$, whereas the ECDF follows the standard
    $\leq$ convention.
    (b) Candidate-level comparison of pair-specific and all-benchmark
    maximum similarity. Marker shape denotes the random seed, and the
    diagonal is the identity line. Because the complete reference set
    contains the pair-specific references, the all-benchmark maximum cannot
    be smaller than the pair-specific maximum.
    }
    \label{fig:novelty}
\end{figure*}

\subsection{Experimental Instantiation and Reproducibility Details}
\label{app:reproducibility}

For reproducibility, we summarize here the concrete experimental instantiation of REUSE.
The frozen generator $G_{\phi}$, the random noise prior $\pi_0(\cdot \mid p)$, the stage-wise docking pipeline, and all optimization hyperparameters are fixed throughout each experiment and reported explicitly in the corresponding experimental subsection or summary table.

\paragraph{Benchmark protocol.}
We follow the dual-target benchmark protocol of Zhou et al.~\cite{zhou2024reprogramming} and evaluate on the full set of 12,917 target pairs over 438 unique targets. We do not subsample target pairs for the main benchmark. Each target pair is associated with two pocket-level protein--ligand complex contexts and one benchmark reference molecule for each target, as provided by the benchmark construction. Available complex structures are taken from curated protein--ligand complexes, while missing structures are completed by the benchmark using PDB or AlphaFold DB structures together with pocket detection and docking-based pocket construction.

For each target in a pair, the binding site and docking region are defined by the benchmark-provided pocket specification. All generated candidates are independently docked or rescored against both target pockets under the same receptor-preparation and scoring pipeline. This ensures that performance reflects dual-target compatibility under a consistent evaluation protocol.

All methods are evaluated under the same fixed-size panel protocol. For each target pair, a method is expected to return a candidate panel of size $N=10$. If a method produces more than 10 valid candidates, the reported panel is selected using the common feasibility, dual-target affinity, chemistry, and diversity criteria described in the main text and Appendix~\ref{app:stage_selection}. If a method produces fewer than 10 valid candidates, we retain all valid candidates and treat the missing panel slots as unsuccessful recoveries for panel-level feasibility and dual-hit statistics. Target pairs for which docking or molecule validity fails are not silently removed from the benchmark; instead, invalid or unevaluable candidates receive failed feasibility status. Thus, all reported averages and medians are computed over the same benchmark task set, and differences between methods reflect recovery quality rather than changes in evaluation coverage.

\paragraph{Evaluation metrics.}
For a target pair $p=(t_1,t_2)$, let $r_i$ denote the benchmark reference
molecule associated with target $t_i$. We denote the Vina Dock score and
GNINA CNNaffinity of candidate $m$ against target $t_i$ by $V_i(m)$ and
$G_i(m)$, respectively. Lower Vina scores and higher GNINA CNNaffinity
scores indicate stronger predicted binding. The target-specific scores
reported as P-1 and P-2 are $V_1(m)$ and $V_2(m)$, or $G_1(m)$ and
$G_2(m)$, respectively. The conservative dual-target scores are
\begin{equation}
\begin{aligned}
V_{\max}(m;p)
&=\max\{V_1(m),V_2(m)\},\\
G_{\min}(m;p)
&=\min\{G_1(m),G_2(m)\}.
\end{aligned}
\end{equation}
where Max Vina represents the weaker of the two target-specific docking
scores, while Min GNINA represents the weaker of the two CNNaffinity
scores.

We define the reference-normalized Vina and GNINA margins as
\begin{equation}
\begin{aligned}
\Delta_i^{\mathrm{Vina}}(m)
&=V_i(r_i)-V_i(m), \\
\qquad
\Delta_i^{\mathrm{GNINA}}(m)
&=G_i(m)-G_i(r_i).
\end{aligned}
\end{equation}
Thus, $\Delta_i^{\mathrm{GNINA}}(m)\geq 0$ means that candidate $m$
achieves a GNINA CNNaffinity score no lower than the benchmark reference
ligand for target $t_i$. Because lower Vina scores indicate stronger
predicted binding, $\Delta_i^{\mathrm{Vina}}(m)>0$ means that the candidate
strictly outperforms the corresponding reference under Vina.

The Dual High Affinity indicator is
\begin{equation}
H_{\mathrm{dual}}(m;p)
=
\mathbf{1}\!\left[
\Delta_1^{\mathrm{Vina}}(m)>0
\land
\Delta_2^{\mathrm{Vina}}(m)>0
\right],
\end{equation}
and its reported rate is the percentage of candidates that outperform both
target-specific references according to Vina.

QED is reported on its standard $[0,1]$ scale. We convert the original
synthetic-accessibility score, for which lower values indicate easier
synthesis, to a larger-is-better scale as
\begin{equation}
\mathrm{SA}(m)
=
\operatorname{round}\!\left(
\frac{10-\mathrm{SA}_{\mathrm{raw}}(m)}{9},
2
\right).
\end{equation}
The resulting value is rounded to two decimal places before applying the
$\mathrm{SA}(m)\geq 0.5$ criterion.

The Consensus Hit indicator jointly requires reference-beating Vina scores,
non-negative reference-normalized GNINA CNNaffinity margins on both targets,
and the QED and SA criteria:
\begin{equation}
\begin{aligned}
H_{\mathrm{cons}}(m;p)
={}&H_{\mathrm{dual}}(m;p)\\[-0.2em]
&\times\mathbf{1}[\Delta_1^{\mathrm{GNINA}}(m)\geq 0]
\\[-0.2em]
&\times\mathbf{1}[\Delta_2^{\mathrm{GNINA}}(m)\geq 0]\\[-0.2em]
&\times
\mathbf{1}[\mathrm{QED}(m)\geq 0.5]\,
\mathbf{1}[\mathrm{SA}(m)\geq 0.5].
\end{aligned}
\end{equation}
The Consensus Hit rate is the percentage of candidates satisfying all of
these conditions simultaneously.

For the optimizer comparison, let $R_i(m)$ denote the Vinardo score of
candidate $m$ against target $t_i$. Because lower Vinardo scores are better,
its reference-normalized margin is
\begin{equation}
\Delta_i^{\mathrm{Vinardo}}(m)
=
R_i(r_i)-R_i(m).
\end{equation}
The scorer-specific dual-target hit indicators are
\begin{equation}
\begin{aligned}
H_{\mathrm{GNINA}}(m;p)
&=
\mathbf{1}\!\left[
\Delta_1^{\mathrm{GNINA}}(m)\geq 0
\right.\\[-0.2em]
&\qquad\left.\land\;\Delta_2^{\mathrm{GNINA}}(m)\geq 0
\right],\\
H_{\mathrm{Vinardo}}(m;p)
&=
\mathbf{1}\!\left[
\Delta_1^{\mathrm{Vinardo}}(m)\geq 0
\right.\\[-0.2em]
&\qquad\left.\land\;\Delta_2^{\mathrm{Vinardo}}(m)\geq 0
\right].
\end{aligned}
\end{equation}
Let $H_s(m;p)$ denote the corresponding dual-target hit indicator for
scorer $s\in\{\mathrm{GNINA},\mathrm{Vinardo}\}$.

We further define
\begin{equation}
H_{\mathrm{chem}}(m)
=
\mathbf{1}\!\left[
\mathrm{QED}(m)\geq 0.5
\land
\mathrm{SA}(m)\geq 0.5
\right],
\end{equation}
and let $H_{\mathrm{alert}}(m)$ indicate that the candidate triggers neither
a PAINS nor a Brenk alert. Let $H_{\mathrm{Ro5}}(m)$ and
$H_{\mathrm{Veber}}(m)$ denote the corresponding rule-pass indicators.
Lipinski Ro5 pass allows at most one violation among molecular weight
$\leq 500$ Da, cLogP $\leq 5$, hydrogen-bond donors $\leq 5$, and
hydrogen-bond acceptors $\leq 10$. Veber pass requires TPSA
$\leq 140$~\AA$^2$ and at most 10 rotatable bonds.

The three optimizer-comparison indicators are
\begin{equation}
\begin{aligned}
H_{\mathrm{feasible},s}(m;p)
&= H_s(m;p)H_{\mathrm{chem}}(m),\\
H_{\mathrm{alert\text{-}free},s}(m;p)
&= H_s(m;p)H_{\mathrm{alert}}(m),\\
H_{\mathrm{qualified},s}(m;p)
&= H_s(m;p)
   H_{\mathrm{chem}}(m)
   H_{\mathrm{Ro5}}(m)\\
&\quad {}\times
   H_{\mathrm{Veber}}(m)
   H_{\mathrm{alert}}(m).
\end{aligned}
\end{equation}
Thus, a feasible hit satisfies the scorer-specific dual-target binding
criterion together with QED and SA thresholds; an alert-free hit satisfies
the binding criterion without triggering either structural-alert catalog;
and a qualified hit additionally satisfies the QED, SA, Ro5, Veber,
PAINS-free, and Brenk-free criteria. The reported rates are the percentages
of candidates satisfying the corresponding indicators. PAINS-free and
Brenk-free mean that no alert from the respective catalog is detected.

Structural diversity is computed from radius-2 Morgan fingerprints. For
two molecules,
\begin{equation}
\Delta(m_i,m_j)
=
1-\mathrm{Tan}\!\left(
\mathrm{FP}(m_i),\mathrm{FP}(m_j)
\right),
\end{equation}
and the diversity of a reported panel $S$ is
\begin{equation}
D(S)
=
\frac{1}{|S|(|S|-1)}
\sum_{m_i\neq m_j\in S}
\Delta(m_i,m_j).
\end{equation}
Higher values therefore indicate lower structural redundancy within the
selected panel. Invalid, unevaluable, or missing required panel entries are
treated as failures when computing indicator-based rates, following the
fixed-panel protocol described above.

For the cheap-to-full evaluator analysis, let
$\mathcal{F}_{\mathrm{cheap}}^{(k)}$ and
$\mathcal{F}_{\mathrm{full}}^{(k)}$ denote the top-$k$ frontiers selected by
the cheap and full evaluators, respectively. Frontier overlap is defined as
\begin{equation}
\mathrm{Ov}@k
=
\frac{
\left|
\mathcal{F}_{\mathrm{cheap}}^{(k)}
\cap
\mathcal{F}_{\mathrm{full}}^{(k)}
\right|
}{k}.
\end{equation}
Spearman, Pearson, and Kendall measure global ranking or score agreement
over the complete candidate pool. Per-molecule cost is the elapsed
evaluation time divided by the number of evaluated molecules, and the
reported runtime ratio is
\begin{equation}
R_{\mathrm{time}}
=
\frac{T_{\mathrm{full}}}{T_{\mathrm{cheap}}}.
\end{equation}

\paragraph{Frozen generator and random noise prior.}
In each experiment, REUSE operates on a fixed pretrained single-target generator $G_{\phi}$ without any retraining or modification of its denoising dynamics.
The random noise prior $\pi_0(\cdot \mid p)$ defines the distribution used for population initialization and immigration and remains fixed throughout the run.

\paragraph{Evolutionary hyperparameters.}
The final implementation instantiates the latent search with population size $B$, parent pool size $B_{\mathrm{par}}$, number of generations $T$, latent dimensionality $d_z$, crossover rate $\alpha_{\mathrm{cross}}$, mutation scale $\sigma_{\mathrm{mut}}$, mutation decay $\gamma_{\mathrm{mut}}$, and random injections per generation $n_{\mathrm{imm}}$.
Additional implementation settings include the number of latent evaluation samples, chemistry weights used during search and reranking, subset-level diversity and balance weights, chemistry admissibility thresholds, and the final diversity threshold.
Key numerical settings used in the reported experiments are summarized in
Table~\ref{tab:key_hparams}.

\paragraph{Ablation configurations.}
All ablated variants use the same frozen generator, target-pair inputs, candidate-panel size, docking pipeline, and evaluation metrics as the full REUSE configuration. They differ only in the component being ablated. In \emph{w/o search}, we remove evolutionary feedback in the input-noise space: parent selection, mutation, crossover, and elitist population update are disabled, and the same candidate budget is instead obtained by drawing independent noise inputs from the random noise prior $\pi_0(\cdot \mid p)$. The resulting decoded candidates are still processed by the same stage-wise environmental selection and final panel-construction procedure. This variant tests whether structured input-space search is necessary beyond budget-matched sampling from the frozen generator.

In \emph{w/o balance}, we keep input-space search, chemistry control, diversity control, and all docking stages unchanged, but remove explicit cross-target balance penalties by setting the balance weights to zero, including $\lambda_{\mathrm{bal}}^{\mathrm{proxy}}=0$ in the stage-wise affinity score and $\beta_{\mathrm{bal}}^{\mathrm{subset}}=0$ in final panel construction. The variant still evaluates both targets, but no longer penalizes asymmetric candidates that bind strongly to only one target. This isolates the contribution of balance-aware selection.

In \emph{w/o chemctrl}, we keep the search procedure and balance-aware affinity terms unchanged, but remove chemistry-control terms from ranking and selection by setting the chemistry weights to zero and disabling the hard QED/SA admissibility floors during panel construction. Validity checks and docking-result availability are still enforced, and the same diversity constraint is retained. This variant tests whether the chemistry-control component is needed to maintain drug-like and synthetically accessible candidates, rather than merely improving docking scores.

\paragraph{Baseline reproduction protocol.}
We evaluate all methods on the same dual-target benchmark using a common
downstream scoring and panel-reporting protocol. For methods covered by the
released benchmark of Zhou et al.~\cite{zhou2024reprogramming}, we use the
official evaluation pipeline and released outputs whenever available. Methods
requiring local generation are reproduced using their public implementations
and recommended pretrained checkpoints or default hyperparameters, without
tuning on the evaluation target pairs. All generated molecules, including
those from the baselines and REUSE, are subsequently evaluated using the same
dual-target docking and molecular-property pipeline.

Except for MDRL, each method is assigned a generation budget of at most 72
decoded candidate molecules per target pair. MDRL follows its original
higher-budget converged reinforcement-learning protocol, since its policy
optimization requires a substantially larger sampling budget to converge.
This exception applies only to the generation budget; all methods use the same
downstream scoring and fixed-panel reporting procedure.

For each target pair, the generated candidate pool is scored against both
targets, after which a panel of $N=10$ candidates is constructed using the
common feasibility, dual-target affinity, chemistry, and diversity criteria.
If fewer than $N$ valid candidates are available, all valid candidates are
retained and the missing panel entries are treated as failures for
indicator-based statistics. This prevents methods producing invalid or
one-sided candidates from receiving an evaluation advantage.

For single-target structure-based generators such as Pocket2Mol and
TargetDiff, candidates are generated separately for the two pockets, pooled,
and then re-evaluated against both targets. Linker-design methods such as
DiffLinker and LinkerNet use the benchmark-provided pocket and reference
information before the same dual-target rescoring procedure is applied.
ParetoDrug and the dual-target methods ComPDIFF and DualDiff are evaluated
from their generated candidate pools under the 72-candidate budget. MDRL is
evaluated using the outputs of its higher-budget converged reinforcement-learning
procedure. All candidate pools are subsequently processed by the same docking,
chemistry assessment, diversity filtering, and fixed-panel reporting pipeline.

\paragraph{Stage-wise docking pipeline.}
The stage-specific affinity estimators $\{A_s\}_{s=1}^{S}$ are instantiated by a fixed coarse-to-fine docking pipeline.
Early stages use inexpensive docking-derived surrogates for rapid screening, intermediate stages apply lightweight structural refinement when used, and the final stage applies a full docking search on the reduced frontier.
The number of stages $S$, the associated software configuration, and the corresponding runtime settings are reported together with the experimental protocol.

\paragraph{Feasibility checks.}
The hard feasibility indicator $c(m,p)$ includes molecular validity, chemistry-admissibility checks, and the availability of valid stage-specific affinity estimates.
For candidates arising from local noise-space perturbations, feasibility may additionally enforce bounded structural deviation relative to the corresponding parent molecule so as to keep the search chemically controlled.

For clarity, the key numerical values used in the reported experiments are summarized in Table~\ref{tab:key_hparams}.

\section{Additional Experiments and Analyses}

\subsection{Cross-Scorer Robustness and Structural Sanity Checks}
\label{app:oracle_structural_sanity_checks}

Beyond the optimization-independent GNINA CNNaffinity results in the main
comparison, we evaluate the compared final candidate panels under a common
post hoc protocol using Vinardo, AutoDock4, and RTMScore. None of the
post hoc scores summarized here is used during REUSE generation, search,
or final selection.

Figure~\ref{fig:cross_scorer_radar} reports performance separately on T1
and T2. REUSE outperforms DualDiff on all eight scorer--target combinations
and MDRL on six of eight, with MDRL leading only under Vinardo. The default EA configuration also exceeds CMA-ES and BO on seven of eight combinations in each comparison. Thus, the overall advantage remains visible across complementary empirical and learned scoring functions rather than being confined to the Vina optimization objective.

At the candidate-ranking level, we further examine whether the original
Vina ordering remains stable under alternative Vina-family configurations.
We compare the original Vina ranking with Vinardo and the Vina-like score
reported internally by GNINA. The latter, referred to as GNINA-internal
Vina, is distinct from the GNINA CNNaffinity score reported in the main
comparison and is used here only as an alternative Vina-family scoring
output. As shown in Table~\ref{tab:oracle_cross_scorer}, Vina is highly
consistent with Vinardo, with Spearman correlations of 0.959 and 0.994 on
the two targets and 100\% top-5 overlap on both targets. GNINA-internal Vina shows similarly high agreement, with Spearman correlations of 0.997 and 0.985. These results demonstrate that the recovered ranking is stable
across the evaluated Vina-family scoring configurations and implementations.

\begin{table}[t]
\centering
\small
\setlength{\tabcolsep}{3pt}
\begin{tabular}{@{}llccc@{}}
\toprule
Comparison & Target & $\rho$ & $\tau$ & Ov@5 \\
\midrule
Vina--Vinardo & T1 & 0.959 & 0.883 & 100\% \\
Vina--Vinardo & T2 & 0.994 & 0.967 & 100\% \\
Vina--GNINA-internal Vina & T1 & 0.997 & 0.983 & 100\% \\
Vina--GNINA-internal Vina & T2 & 0.985 & 0.950 & 80\% \\
\bottomrule
\end{tabular}
\caption{
Vina-family rank consistency for the recovered candidate panel. The
original Vina ranking is compared with Vinardo and GNINA-internal Vina.
The latter denotes GNINA's Vina-like scoring output and is distinct from
the GNINA CNNaffinity score reported in the main comparison. Here $\rho$
and $\tau$ denote Spearman and Kendall rank correlations, respectively,
and Ov@5 denotes top-5 overlap.
}
\label{tab:oracle_cross_scorer}
\end{table}

Second, we evaluate the benchmark reference ligand for each target under the
same docking setup. The T1 reference ligand receives Vina and Vinardo scores
of $-10.960$ and $-7.331$, respectively, and forms 13 residue-level contacts
in the pocket. The T2 reference ligand similarly receives Vina and Vinardo
scores of $-5.483$ and $-4.907$, respectively, with 11 detected pocket
contacts. These results support the validity of the docking regions and
receptor-preparation settings used for the corresponding target pockets.

Third, we compare the highest-ranked recovered candidates with within-pool
background candidates. The selected candidates remain stronger under the
alternative Vina-family scores: their mean conservative dual Vinardo
strength is 2.780, compared with $-2.944$ for the background candidates,
and their mean conservative dual GNINA-internal Vina strength is 5.206,
compared with 0.832. The selected candidates also form more balanced pocket
contacts, with a higher mean minimum contact count across the two targets
(8.600 versus 6.273). These comparisons are summarized in
Table~\ref{tab:oracle_reference_background} and
Figure~\ref{fig:oracle_background}.

\begin{table*}[t]
\centering

\small
\setlength{\tabcolsep}{3pt}
\renewcommand{\arraystretch}{1.08}

\begin{tabular*}{\textwidth}{
    @{\extracolsep{\fill}}
    l
    *{8}{c}
    @{}
}
\multicolumn{9}{@{}l}{%
    \textbf{(a) Dual-target binding performance}
}\\[2pt]
\toprule
&
\multicolumn{4}{c}{GNINA CNNaffinity $\uparrow$}
&
\multicolumn{4}{c}{Vinardo $\downarrow$}
\\
\cmidrule(lr){2-5}
\cmidrule(lr){6-9}
Method
& P-1
& P-2
& Min
& Hit (\%)
& P-1
& P-2
& Max
& Hit (\%)
\\
\midrule

\textbf{REUSE} (default; EA)
& \textbf{6.324}
& \textbf{6.136}
& \textbf{5.672}
& \textbf{41.7}
& \textbf{-6.442}
& -6.253
& \textbf{-5.876}
& \textbf{51.7}
\\

REUSE (CMA-ES)
& 6.042
& 5.913
& 5.501
& 38.3
& -6.362
& -6.089
& -5.678
& 41.7
\\

REUSE (BO)
& 6.152
& 6.002
& 5.531
& 28.3
& -6.402
& \textbf{-6.265}
& -5.820
& 48.3
\\
\bottomrule
\end{tabular*}

\vspace{0.8em}

\begin{tabular*}{\textwidth}{
    @{\extracolsep{\fill}}
    l
    *{10}{c}
    @{}
}
\multicolumn{11}{@{}l}{%
    \textbf{(b) Chemical feasibility, structural alerts, and qualified-candidate rates}
}\\[2pt]
\toprule
&
\multicolumn{2}{c}{Feasible hit rate (\%)}
&
\multicolumn{2}{c}{Alert-free hit rate (\%)}
&
\multicolumn{2}{c}{Qualified hit rate (\%)}
&
\multicolumn{4}{c}{Drug-likeness and filter pass rate (\%)}
\\
\cmidrule(lr){2-3}
\cmidrule(lr){4-5}
\cmidrule(lr){6-7}
\cmidrule(lr){8-11}
Method
& GNINA
& Vinardo
& GNINA
& Vinardo
& GNINA
& Vinardo
& Ro5
& Veber
& PAINS-free
& Brenk-free
\\
\midrule

\textbf{REUSE} (default; EA)
& \textbf{26.7}
& \textbf{25.0}
& \textbf{18.3}
& \textbf{8.3}
& \textbf{15.0}
& \textbf{8.3}
& \textbf{98.3}
& \textbf{93.3}
& \textbf{98.7}
& \textbf{30.0}
\\

REUSE (CMA-ES)
& 20.0
& 21.7
& 11.7
& 5.0
& 10.0
& 5.0
& 93.3
& 88.3
& 96.7
& 25.0
\\

REUSE (BO)
& 13.3
& \textbf{25.0}
& 5.0
& 6.7
& 3.3
& 6.7
& 91.7
& 91.7
& 96.7
& 16.7
\\
\bottomrule
\end{tabular*}

\caption{
Comparison of input-space optimizer choices within REUSE.
The row labeled \textbf{REUSE} (default; EA) corresponds to the proposed
REUSE method used throughout the main experiments, where evolutionary
algorithm (EA) search serves as the default input-space optimizer.
REUSE (CMA-ES) and REUSE (BO) replace the EA update rule with CMA-ES and
Bayesian optimization (BO), respectively. All three configurations are
initialized from the same random noise prior $\pi_0(\cdot\mid p)$ under
the matched random-seed setting. The frozen generator, search objective,
decoded-candidate budget, scoring pipeline, chemistry and diversity
controls, and downstream selection and evaluation protocols are otherwise
unchanged.
(a) Dual-target binding performance evaluated using GNINA CNNaffinity and
Vinardo. For GNINA, higher values are better, and Min denotes the
conservative dual-target score; for Vinardo, lower values are better, and
Max denotes the conservative dual-target score.
(b) Chemical-feasibility, structural-alert-free, and qualified-candidate
rates reported separately according to the GNINA- and Vinardo-based hit
definitions. Ro5, Veber, PAINS-free, and Brenk-free denote the corresponding
medicinal-chemistry and structural-filter pass rates.
Boldface numerical entries indicate the best result in each column; tied
best results are all highlighted. All evaluated outputs were valid.
}
\label{tab:optimizer_comparison}

\end{table*}

\begin{table*}[t]
\centering
\small
\setlength{\tabcolsep}{4pt}
\begin{tabular}{@{}llccc@{}}
\toprule
Check & Metric & Selected/reference & Background & Evidence \\
\midrule
T1 reference ligand & Vina / Vinardo / contacts & -10.960 / -7.331 / 13 & -- & reasonable reference binding \\
T2 reference ligand & Vina / Vinardo / contacts & -5.483 / -4.907 / 11 & -- & reasonable reference binding \\
Selected vs. background & Dual Vinardo strength & 2.780 & -2.944 & $p=0.0005$ \\
Selected vs. background & Dual GNINA-internal Vina strength & 5.206 & 0.832 & $p=0.0005$ \\
Selected vs. background & Dual minimum contacts & 8.600 & 6.273 & $p=0.0111$ \\
Selected vs. background & Total contacts & 26.400 & 22.182 & $p=0.0230$ \\
Reference overlap & Mean residue-level Jaccard & 0.763 & -- & shared pocket residues \\
\bottomrule
\end{tabular}
\caption{Reference and background sanity checks for the recovered candidate panel. Reference ligands are evaluated under the same docking setup as generated molecules. For the background comparison, high-ranked recovered candidates are compared against within-pool background candidates. Larger values are better for the dual-strength and contact-count metrics.}
\label{tab:oracle_reference_background}
\end{table*}

\begin{figure*}[t]
\centering
\includegraphics[width=0.95\textwidth]{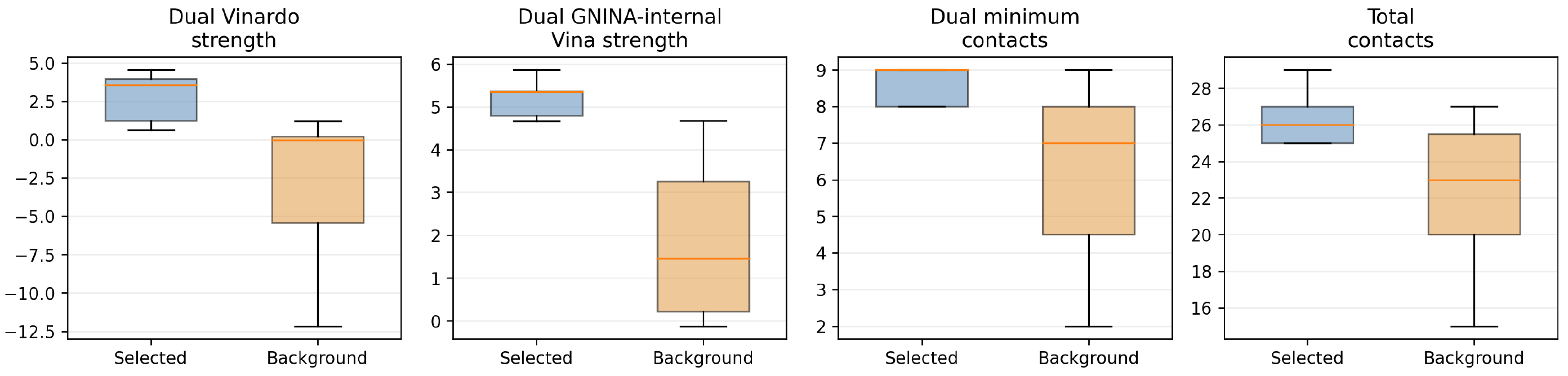}
\caption{
Selected recovered candidates compared with within-pool background molecules
under alternative Vina-family rescoring and contact-based structural
plausibility metrics.
}
\label{fig:oracle_background}
\end{figure*}

Fourth, we evaluated whether favorable scores are preserved after local pose
relaxation and whether the resulting poses exhibit severe geometric artifacts.
The selected candidates preserve favorable relaxed-pose scores, maintain pocket
contacts, and show no severe protein--ligand clashes under conservative
geometry checks. These results are summarized in
Table~\ref{tab:oracle_pose_geometry} and
Figure~\ref{fig:oracle_pose_geometry}.

\begin{table*}[t]
\centering
\small
\setlength{\tabcolsep}{4pt}
\begin{tabular}{@{}llccc@{}}
\toprule
Check & Metric & Selected & Background & Evidence \\
\midrule
Relaxed scoring & Dual relaxed Vina strength & 6.215 & 5.101 & $p=0.0343$ \\
Pose geometry & Severe protein--ligand clashes & 0.000 & 0.000 & no severe clashes \\
Pocket contacts & Dual minimum relaxed contacts & 8.600 & 6.273 & $p=0.0111$ \\
Pocket contacts & Mean contact retention & 0.863 & 0.694 & $p=0.0045$ \\
Pocket contacts & Relaxed/raw contact ratio & 1.095 & 0.894 & $p=0.0043$ \\
\bottomrule
\end{tabular}
\caption{Relaxed-pose and geometry sanity checks for the recovered candidate panel. Scores are evaluated after local Vina pose relaxation, and contact and clash statistics are computed from the resulting protein--ligand geometries. Larger values are better for relaxed strength and contact metrics; lower values are better for severe clashes.}
\label{tab:oracle_pose_geometry}
\end{table*}

\begin{figure*}[t]
\centering
\includegraphics[width=0.95\textwidth]{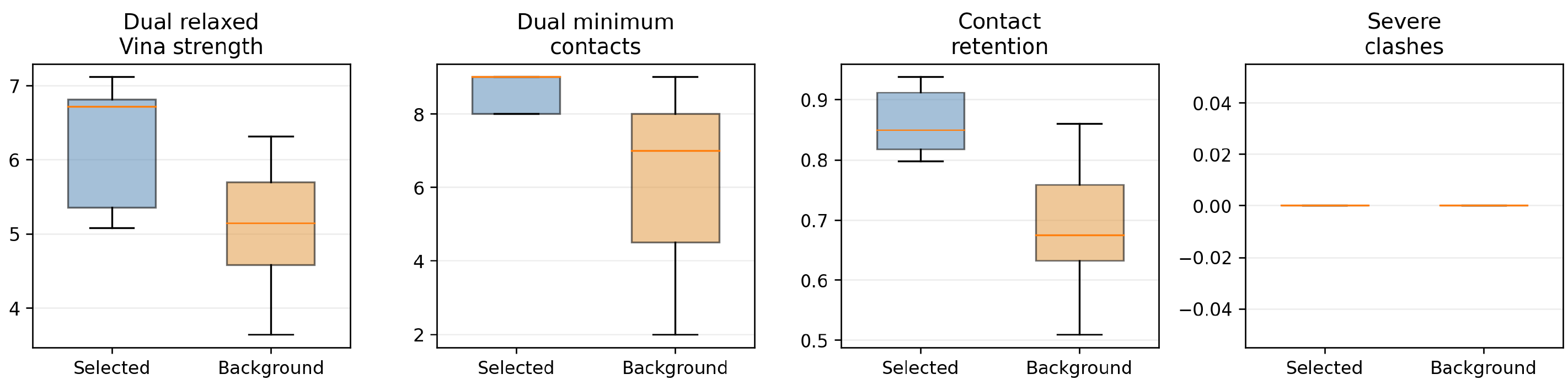}
\caption{
Relaxed-pose and geometry sanity checks for selected recovered candidates
against within-pool background molecules. The selected candidates preserve
favorable relaxed-pose scores and pocket contacts without severe
protein--ligand clashes.
}
\label{fig:oracle_pose_geometry}
\end{figure*}

Finally, we compared residue-level interaction fingerprints between the selected
candidates and the corresponding benchmark reference ligands. The selected
candidates share substantial pocket-level contact patterns with the reference
ligands, with a mean residue-level Jaccard overlap of 0.763 across the two
targets (Figure~\ref{fig:oracle_reference_overlap}). Taken together, these
checks support that the favorable scores of the recovered candidates correspond
to interpretable binding-site contacts and are not solely due to nonspecific
pocket filling. We emphasize that this analysis remains a computational
structural plausibility check rather than wet-lab validation of dual-target
activity.

\begin{figure*}[t]
\centering
\includegraphics[width=0.72\textwidth]{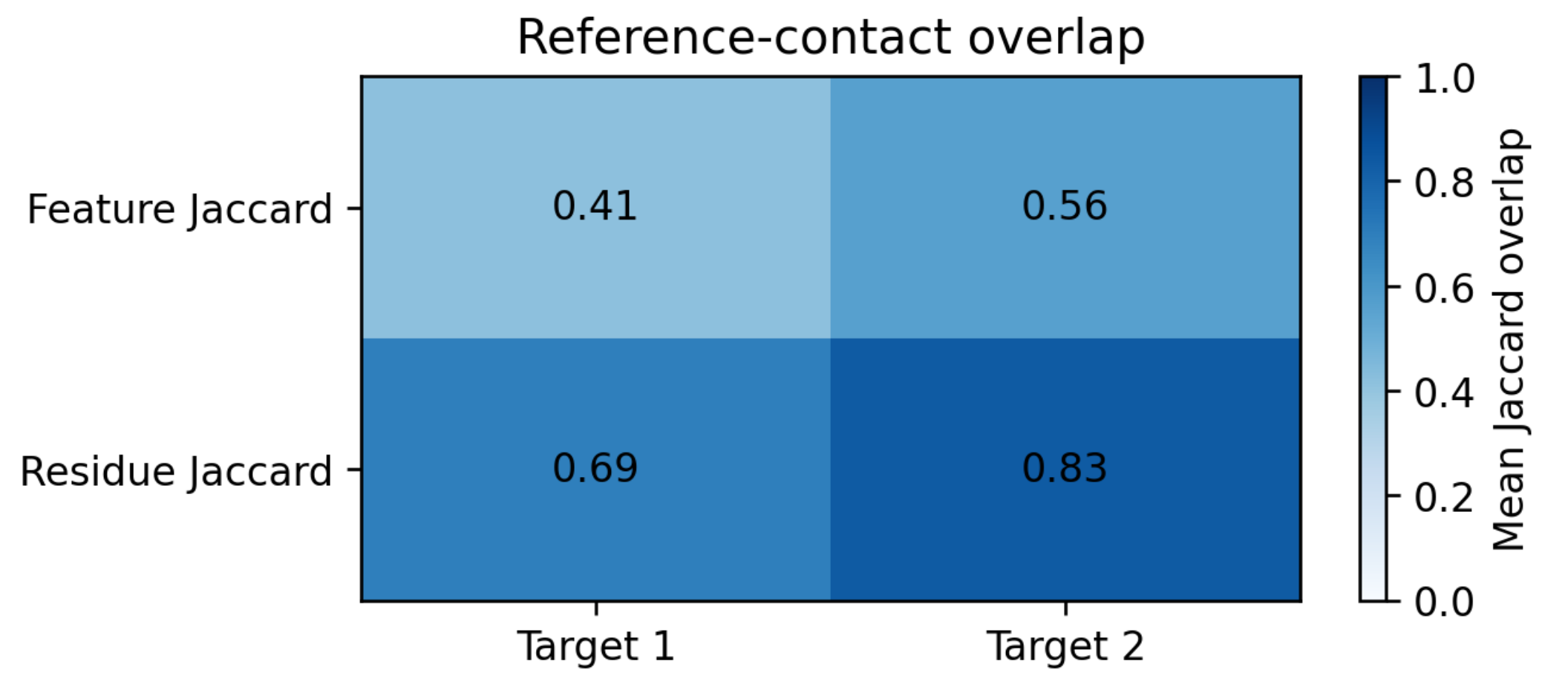}
\caption{
Pocket-interaction overlap between selected generated candidates and the
corresponding benchmark reference ligands. The selected candidates share
residue-level contact patterns with the reference ligands on both targets.
}
\label{fig:oracle_reference_overlap}
\end{figure*}

\subsection{Local Consistency in the Shared Input Space}
\label{sec:appendix_local_consistency}

\begin{figure*}[t]
    \centering
    \includegraphics[width=0.88\textwidth]{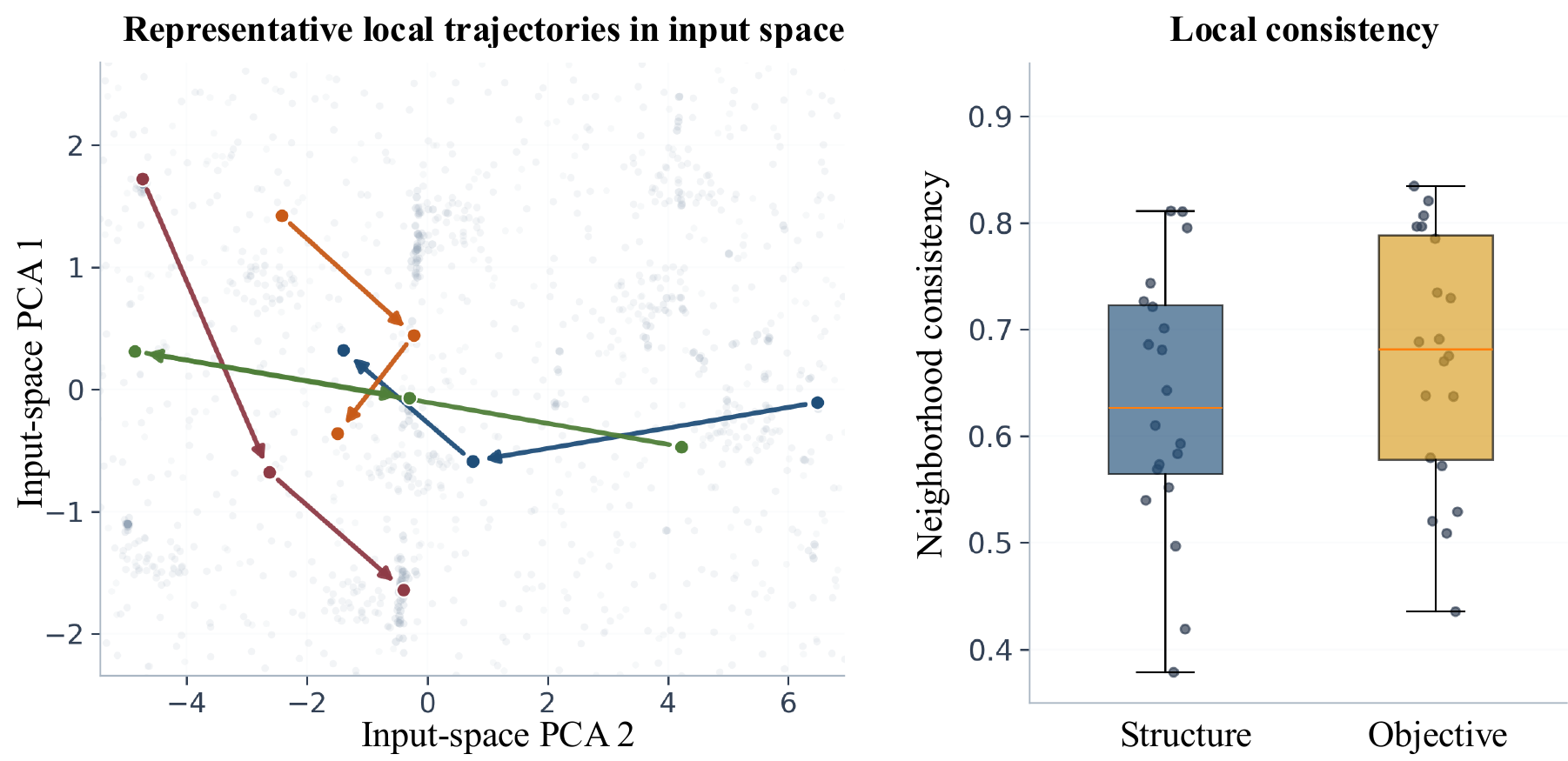}
    \caption{Local consistency in the shared input space. Left: multi-stage trajectories projected onto a shared PCA view of the pooled latent representations. Right: distributions of case-level structural consistency and objective-region consistency across local neighborhoods. Higher values indicate stronger local coherence in the frozen input space.}
    \label{fig:S1}
\end{figure*}

Figure \ref{fig:S1} visualizes local organization in the shared input space. In the left panel, we project multi-stage trajectories onto a global PCA view of the pooled latent representations. The gray points show the pooled background distribution over all candidates, while the colored arrows trace trajectory directions across stages. The plot shows how the search paths move relative to the overall candidate distribution.

The right panel quantifies this local structure. Let $x_i \in \mathbb{R}^d$ denote the latent representation of candidate $i$, and let $\mathcal{N}_k(i)$ denote its $k$ nearest neighbors in input noise space. Each candidate is associated with a structural-group label $c_i$ from the preprocessing family assignment, and with a binary objective-region label
\[
y_i = \mathbf{1}[o_i \ge \tau],
\]
where $o_i$ is the pooled objective score and $\tau$ is the fixed threshold used to define the high-objective region. For each candidate $i$, we define structural consistency at neighborhood size $k$ as
\[
S_i^{(k)} = \frac{1}{k} \sum_{j \in \mathcal{N}_k(i)} \mathbf{1}[c_j = c_i],
\]
and objective consistency as
\[
O_i^{(k)} = \frac{1}{k} \sum_{j \in \mathcal{N}_k(i)} \mathbf{1}[y_j = y_i].
\]
Thus, $S_i^{(k)}$ measures how often nearby candidates belong to the same structural group, whereas $O_i^{(k)}$ measures how often nearby candidates lie in the same objective region.

To obtain the case-level summaries shown in the boxplots, we first average these quantities over all candidates within each evaluation case, and then average over a predefined range of neighborhood sizes:
\begin{align*}
\bar S_m
&= \frac{1}{|\mathcal{K}|} \sum_{k \in \mathcal{K}}
\left(
\frac{1}{|\mathcal{I}_m|} \sum_{i \in \mathcal{I}_m} S_i^{(k)}
\right), \\
\bar O_m
&= \frac{1}{|\mathcal{K}|} \sum_{k \in \mathcal{K}}
\left(
\frac{1}{|\mathcal{I}_m|} \sum_{i \in \mathcal{I}_m} O_i^{(k)}
\right).
\end{align*}
where $\mathcal{I}_m$ denotes the set of candidates in evaluation case $m$, and $\mathcal{K}$ denotes the neighborhood sizes included in the summary. The two boxplots in the right panel show the distributions of $\bar S_m$ and $\bar O_m$ across evaluation cases. Nearby points are therefore not arranged arbitrarily: local neighborhoods tend to be coherent both structurally and in terms of objective-region membership.

Overall, Figure \ref{fig:S1} suggests that favorable regions in the shared input space are not randomly scattered. Instead, they appear as locally organized neighborhoods, which helps explain why search can benefit from local exploration in the frozen latent space.

\begin{table}[t]
\centering
\small
\setlength{\tabcolsep}{3pt}
\begin{tabularx}{\columnwidth}{@{}L{0.44\columnwidth}rrrr@{}}
\toprule
Stage
& \shortstack{Median \\ (min)}
& \shortstack{Mean \\ (min)}
& \shortstack{IQR \\ (min)}
& \shortstack{Share \\ (\%)} \\
\midrule
Search, Decoding, and Rapid Screening & 40.8 & 40.3 & 17.9 & 77.2 \\
Reduced-Frontier Full Docking & 11.3 & 11.8 & 3.4 & 22.6 \\
Final Panel Construction and Incumbent Update & 0.1 & 0.1 & 0.0 & 0.2 \\
Total & 52.2 & 52.2 & 21.3 & 100.0 \\
\bottomrule
\end{tabularx}
\caption{Stage-wise wall-clock decomposition of REUSE under the official timed evaluation setting used in the main paper. The stage split follows the paper-level operational procedure rather than implementation-specific profiler boundaries.}
\label{tab:appendix_runtime_stagewise}
\end{table}

\begin{figure*}[t]
    \centering
    \includegraphics[width=0.95\textwidth]{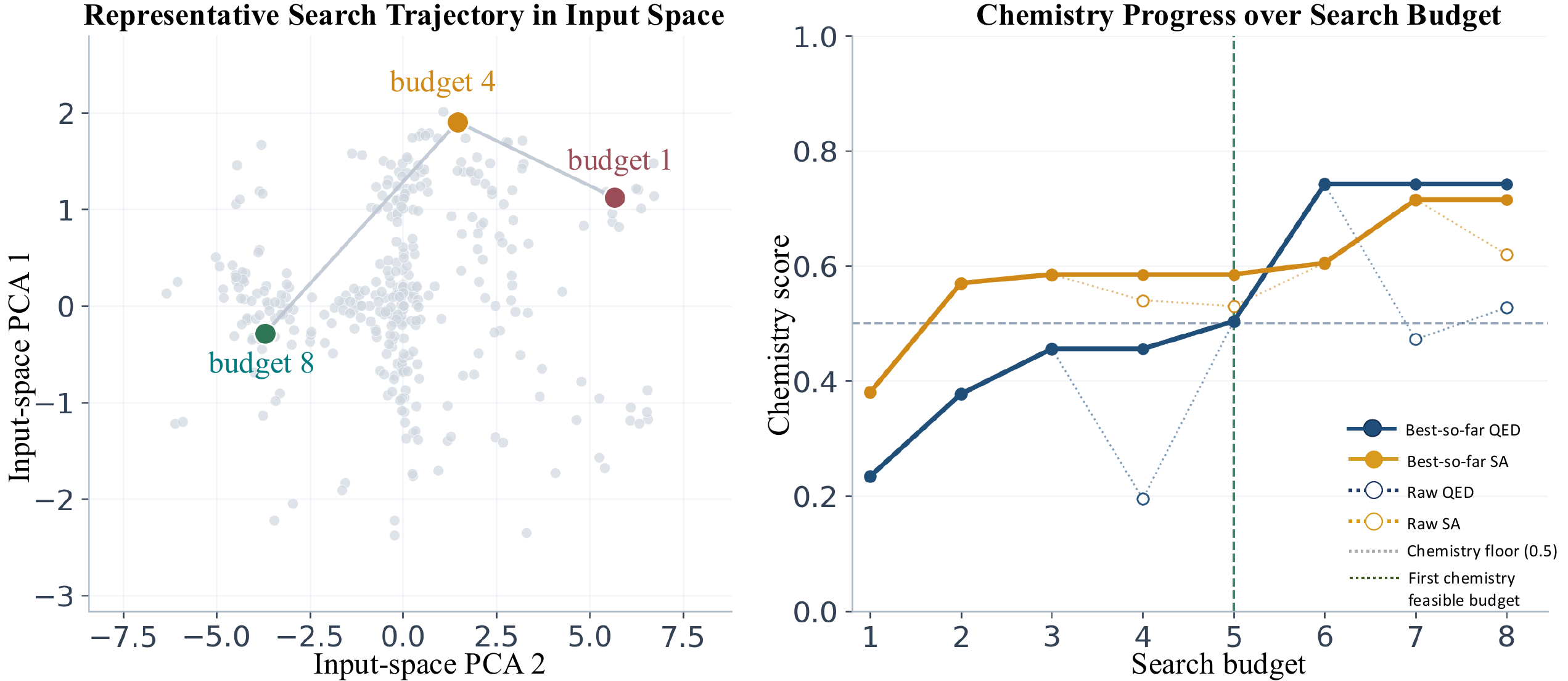}
    \caption{Search trajectory and chemistry progression in the frozen input space. Left: trajectory in the frozen input-space PCA view, with gray points showing other sampled cases and colored markers denoting milestone budgets. Right: raw and best-so-far chemistry scores (QED and SA) over search budget. The horizontal dashed line indicates the chemistry floor (0.5), and the vertical dashed line marks the first chemistry-feasible budget.}
    \label{fig:SA_QED}
\end{figure*}

\begin{table*}[t]
\centering
\small
\setlength{\tabcolsep}{3.5pt}

\begin{minipage}[t]{0.61\textwidth}
\centering
\textit{(a) Physicochemical descriptors}
\vspace{2pt}

\begin{tabularx}{\linewidth}{@{}Xrrr@{}}
\toprule
Property
& \shortstack{Mean\\$\pm$ SD}
& \shortstack{Median\\{[IQR]}}
& Range \\
\midrule

Exact MW (Da)
& 397.7 $\pm$ 67.2
& 387.2 [345.9--440.4]
& 280.1--584.2 \\

cLogP
& 2.18 $\pm$ 1.78
& 2.20 [0.95--3.45]
& $-2.32$--5.91 \\

TPSA (\AA$^2$)
& 98.8 $\pm$ 38.3
& 97.8 [76.2--118.3]
& 15.7--235.7 \\

H-bond donors
& 2.87 $\pm$ 1.47
& 3 [2--4]
& 0--9 \\

H-bond acceptors
& 6.13 $\pm$ 2.19
& 6 [5--7]
& 2--14 \\

Rotatable bonds
& 4.08 $\pm$ 2.67
& 4 [2--6]
& 0--14 \\

Fraction Csp$^3$
& 0.480 $\pm$ 0.183
& 0.455 [0.360--0.609]
& 0.067--0.923 \\

\bottomrule
\end{tabularx}
\end{minipage}
\hfill
\begin{minipage}[t]{0.36\textwidth}
\centering
\textit{(b) Medicinal-chemistry compliance}
\vspace{2pt}

\begin{tabularx}{\linewidth}{@{}Xr@{}}
\toprule
Criterion & Pass rate (\%) \\
\midrule

MW $\leq 500$ Da
& 90.0 \\

cLogP $\leq 5$
& 95.0 \\

HBD $\leq 5$
& 96.7 \\

HBA $\leq 10$
& 96.7 \\

Lipinski Ro5
& 96.7 \\

Rotatable bonds $\leq 10$
& 97.5 \\

TPSA $\leq 140$ \AA$^2$
& 87.5 \\

Veber rule
& 86.7 \\

Ro5 and Veber
& 85.8 \\

\bottomrule
\end{tabularx}
\end{minipage}

\caption{
Physicochemical profile and medicinal-chemistry rule compliance of
\textsc{REUSE} candidates pooled across  random seeds.
Continuous descriptors are reported as mean $\pm$ standard deviation,
median [interquartile range], and range.
Rule-based outcomes are reported as pooled percentages.
}
\label{tab:appendix_physchem_druglikeness}
\end{table*}

\begin{table*}[t]
\centering
\small
\setlength{\tabcolsep}{5pt}

\begin{tabularx}{\textwidth}{@{}Xrr@{}}
\toprule
Endpoint
& Median [IQR]
& Rule pass (\%) \\
\midrule

\multicolumn{3}{@{}l}{\textit{Absorption}} \\
\addlinespace[1pt]

\hspace{0.75em}Human intestinal absorption (HIA)
& 0.975 [0.749, 0.999]
& 90.8 \\

\hspace{0.75em}Oral bioavailability
& 0.591 [0.482, 0.740]
& 72.5 \\

\hspace{0.75em}Aqueous solubility, log(mol/L)
& $-2.658$ [$-3.644$, $-1.926$]
& -- \\

\hspace{0.75em}Caco-2 permeability
& $-5.694$ [$-6.352$, $-5.061$]
& -- \\

\hspace{0.75em}PAMPA permeability
& 0.584 [0.216, 0.841]
& 53.3 \\

\hspace{0.75em}P-gp inhibition
& 0.123 [0.012, 0.404]
& 81.7 \\

\addlinespace[3pt]
\multicolumn{3}{@{}l}{\textit{Distribution}} \\
\addlinespace[1pt]

\hspace{0.75em}Blood--brain barrier penetration (BBB)
& 0.507 [0.329, 0.725]
& -- \\

\hspace{0.75em}Plasma protein binding (\%)
& 59.04 [38.69, 75.73]
& -- \\

\addlinespace[3pt]
\multicolumn{3}{@{}l}{\textit{Metabolism}} \\
\addlinespace[1pt]

\hspace{0.75em}CYP1A2 inhibition
& 0.052 [0.005, 0.277]
& 85.0 \\

\hspace{0.75em}CYP2C19 inhibition
& 0.204 [0.056, 0.615]
& 70.8 \\

\hspace{0.75em}CYP2C9 inhibition
& 0.107 [0.017, 0.438]
& 81.7 \\

\hspace{0.75em}CYP2D6 inhibition
& 0.225 [0.052, 0.446]
& 78.3 \\

\hspace{0.75em}CYP3A4 inhibition
& 0.304 [0.048, 0.667]
& 65.0 \\

\addlinespace[3pt]
\multicolumn{3}{@{}l}{\textit{Excretion}} \\
\addlinespace[1pt]

\hspace{0.75em}Half-life prediction (exploratory)
& $-0.816$ [$-8.940$, 9.578]
& -- \\

\hspace{0.75em}Hepatocyte-clearance prediction (exploratory)
& $-9.103$ [$-21.24$, 13.86]
& -- \\

\hspace{0.75em}Microsomal-clearance prediction (exploratory)
& 19.52 [6.671, 35.12]
& -- \\

\addlinespace[3pt]
\multicolumn{3}{@{}l}{\textit{Toxicity}} \\
\addlinespace[1pt]

\hspace{0.75em}hERG blockade
& 0.514 [0.275, 0.816]
& 48.3 \\

\hspace{0.75em}Ames mutagenicity
& 0.380 [0.202, 0.607]
& 65.0 \\

\hspace{0.75em}Drug-induced liver injury (DILI)
& 0.396 [0.203, 0.669]
& 60.0 \\

\hspace{0.75em}ClinTox clinical toxicity
& 0.127 [0.072, 0.208]
& 100.0 \\

\bottomrule
\end{tabularx}

\caption{
Predicted ADMET profile of \textsc{REUSE} candidates pooled across both
random seeds. Predictions were obtained using the pretrained
endpoint-specific models distributed with ADMET-AI v2.0.1, with Chemprop
2.3.0 as the modeling backend. Values are reported as median
[interquartile range]. Classification endpoints are represented by
predicted probabilities, whereas regression endpoints are reported as raw
model outputs without post hoc clipping. A rule pass requires a predicted
probability $\geq 0.5$ for human intestinal absorption, oral
bioavailability, and PAMPA permeability. For P-gp inhibition, CYP1A2,
CYP2C19, CYP2C9, CYP2D6, and CYP3A4 inhibition, hERG blockade, Ames
mutagenicity, drug-induced liver injury, and ClinTox toxicity, a rule pass
requires a predicted probability $<0.5$. Blood--brain barrier penetration,
plasma protein binding, and continuous regression endpoints are reported
descriptively without pass thresholds. The nominal units specified by
ADMET-AI are hours for half-life, $\mu$L\,min$^{-1}$ per $10^{6}$ cells
for hepatocyte clearance, and $\mu$L\,min$^{-1}$\,mg$^{-1}$ for
microsomal clearance. Because the regression outputs are not constrained
to physically admissible ranges, the excretion endpoints are treated as
exploratory model predictions. ``--'' denotes an endpoint without a
classification threshold. Each rule-pass rate is computed separately;
no composite ADMET pass criterion is used. These results are computational
predictions rather than experimental measurements.
}
\label{tab:appendix_admet_profile}
\end{table*}

\subsection{Iterative Search Trajectory and Chemistry Progression in the Frozen Input Space}
\label{sec:appendix_search_trajectory}

Figure~\ref{fig:SA_QED} visualizes the search trajectory, chemistry progression, and entry into the chemistry-feasible region in the frozen input space.

The left panel places the trajectory in an input-space PCA view, with other sampled cases shown in gray for reference. The colored markers denote three milestone budgets. Although this is only a two-dimensional projection of a higher-dimensional process, the trajectory is still informative: it follows a structured path through the frozen input space, moving from an earlier region associated with weaker chemistry toward later regions containing stronger and more qualified candidates.

The right panel tracks chemistry as the search budget increases. At each budget, we report both the raw chemistry of the current candidate and the best-so-far chemistry accumulated up to that point. This distinction matters because the chemistry of individual candidates is not monotonic. Intermediate steps can temporarily weaken one aspect even when the search is improving overall. The best-so-far curves therefore give the more relevant view of progression, showing that deeper search increases the chance of recovering candidates with stronger chemistry profiles.

The chemistry-feasible transition makes this progression concrete. The horizontal dashed line marks the chemistry floor used in our analysis, and the vertical dashed line marks the first budget at which the trajectory enters this chemistry-feasible region. Additional search budget therefore does not only improve the proxy search path; it also enables the recovery of candidates that satisfy the basic chemistry criterion that earlier budgets failed to reach.

Overall, Figure~\ref{fig:SA_QED} is consistent with the aggregate results in the main text. The trajectory shows progressive movement toward better regions in the frozen input space, and the budget curve shows how that movement can translate into stronger chemistry and eventual entry into the chemistry-feasible regime.


\begin{table}[t]
\centering
\small
\begin{tabular}{@{}lr@{}}
\toprule
Quantity & Value \\
\midrule
Decoded candidate budget & 72 \\
Reduced frontier & 32 \\
Fully evaluated set & 23 \\
Reported output panel & 10 \\
Total wall-clock time (min) & 52.2 \\
Full-docking time (min) & 11.8 \\
\bottomrule
\end{tabular}
\caption{Runtime and intermediate set sizes under the official timed evaluation setting used in the main paper. This table complements Table~\ref{tab:appendix_runtime_stagewise} by reporting the candidate pool sizes at different stages of the pipeline.}
\label{tab:appendix_runtime_sizes}
\end{table}

\begin{table*}[t]
\centering
\small
\setlength{\tabcolsep}{4pt}
\begin{tabular}{@{}l l c c c c c c c@{}}
\toprule
Row & Shared motif & Atoms & Rings & Min sim & Avg sim & Max sim & Murcko scaf. & Motif frac. \\
\midrule
A & Phenyl ring            & 6 & 1 & 0.076 & 0.084 & 0.098 & 3 & 0.214--0.222 \\
B & Pyridyl ring           & 6 & 1 & 0.106 & 0.117 & 0.126 & 3 & 0.194--0.211 \\
C & Small unsaturated ring & 5 & 1 & 0.064 & 0.082 & 0.103 & 3 & 0.200--0.208 \\
D & Conjugated chain motif & 5 & 0 & 0.076 & 0.082 & 0.092 & 3 & 0.185--0.192 \\
E & Aminoalkyl chain motif & 7 & 0 & 0.070 & 0.083 & 0.108 & 3 & 0.280--0.292 \\
F & Aryl-linked N motif    & 6 & 0 & 0.074 & 0.081 & 0.090 & 3 & 0.214--0.222 \\
\bottomrule
\end{tabular}
\caption{Row-level structural summary for the six illustrative molecule trios shown in Figure~\ref{fig:appendix_small_shared_motif}. Whole-molecule similarity is measured by Morgan fingerprint Tanimoto similarity \cite{rogers2010ecfp}. `Murcko scaf.' counts the number of distinct Bemis--Murcko scaffolds among the three molecules in each row \cite{bemis1996frameworks}. `Motif frac.' denotes the range of shared-motif atoms divided by heavy-atom count across the three molecules in the row.}
\label{tab:appendix_motif_row_summary}
\end{table*}

\begin{table*}[t]
\centering
\small
\setlength{\tabcolsep}{4pt}
\begin{tabularx}{\textwidth}{
@{}
c
c
>{\raggedright\arraybackslash}X
S[table-format=1.2,round-mode=places,round-precision=2]
S[table-format=1.2,round-mode=places,round-precision=2]
@{}
}
\toprule
Row & Mol. & SMILES & {SA} & {QED} \\
\midrule
A & 1 & \path{N=C1CC(C(=O)C2OC(c3ccccc3)CNCCC(=O)OCC2O)CC1=N} & 0.570 & 0.573 \\
A & 2 & \path{Cc1cc(N)c2c(c1)N(CC1=POONC1=N)CCC2C1=C(F)CC=C(F)C1} & 0.540 & 0.398 \\
A & 3 & \path{CON(c1ccccc1)N1Nc2ccccc2N2NCN2C2C=C[C]=CN21} & 0.580 & 0.807 \\
B & 1 & \path{N=c1[nH]c2c(c(=O)o1)OCCC1N(c3ccccc3)C(C3NOc4ccncc43)CN1} & 0.590 & 0.669 \\
B & 2 & \path{O=C1ONCC2(c3ccncc3)NN3CN4CCN2N1C43} & 0.500 & 0.511 \\
B & 3 & \path{CC(=O)NC1=CC(c2ccncc2)CC(CS(=O)N2C(CO)CCC2CC2CCOC2)N1} & 0.520 & 0.528 \\
C & 1 & \path{CC1CNC(C2=CCCC2)CC2CC(Cl)C(CN3CCCC3(C)C)C21} & 0.460 & 0.704 \\
C & 2 & \path{OC1N(CC2=CCCC2)C(F)=NC2CCCCC2NC1=S} & 0.510 & 0.566 \\
C & 3 & \path{FC1=CC=C(C2Cc3ccccc3C3CCN4C=CC=C4N23)C(F)=C1} & 0.560 & 0.737 \\
D & 1 & \path{OC1CN2C=CC3=C2C2=C(NC=C2)C2N=CC=C2OC13} & 0.490 & 0.785 \\
D & 2 & \path{CC(=O)C1=CC(=O)NCC=C(C(=O)O)SCCO=N1} & 0.560 & 0.326 \\
D & 3 & \path{NCC1=CC(F)=C(CO)C2OC(c3ccccc3)NC12} & 0.510 & 0.499 \\
E & 1 & \path{CC1(F)CC(=O)N2C(CCNC(N)=O)SC(CNCC3=CC(F)=CC(F)=C3)C2=N1} & 0.600 & 0.776 \\
E & 2 & \path{CC1(F)COC(CCCCN2C(=O)NCC2C2CCC3C(C2)SC3N)=N1} & 0.490 & 0.290 \\
E & 3 & \path{S=C1NC(c2ccccc2)C2CCN(CCc3ccncc3)C2C1CCNC1CCNC=C1} & 0.600 & 0.830 \\
F & 1 & \path{O[S](F)(F)=O.C1=CC=C2NC(C3CCNNC3c3ncccc3)CCC2=C1} & 0.440 & 0.617 \\
F & 2 & \path{Cc1cc2c(c1)c1ccccc1N(CC1=POONC1=N)CCC2} & 0.550 & 0.558 \\
F & 3 & \path{Nc1cccc2c1N1NC(c3ccccc3)NN(C(=O)OCc3ccccc3)C1CC2} & 0.600 & 0.675 \\
\bottomrule
\end{tabularx}
\caption{Molecule-level details for the 18 molecules shown in Figure~\ref{fig:appendix_small_shared_motif}. Molecules are listed in the same row-wise order as in the figure.}
\label{tab:appendix_motif_molecule_details}
\end{table*}

\begin{table*}[t]
\centering
\small
\setlength{\tabcolsep}{3pt}
\captionsetup{width=\textwidth}

\begin{tabularx}{\textwidth}{
@{}
>{\raggedright\arraybackslash}X
r
@{}
}
\toprule
Metric & Pooled result \\
\midrule

\multicolumn{2}{@{}l}{\textit{Molecular and scaffold uniqueness}} \\
\addlinespace[1pt]

\hspace{0.75em}Canonical-molecule uniqueness
& 100\% \\

\hspace{0.75em}Bemis--Murcko scaffold uniqueness
& 100\% \\

\addlinespace[3pt]
\multicolumn{2}{@{}l}{\textit{Novelty relative to benchmark references}} \\
\addlinespace[1pt]

\hspace{0.75em}Target-1 reference similarity
& $0.0809 \pm 0.0310$ \\

\hspace{0.75em}Target-2 reference similarity
& $0.0833 \pm 0.0265$ \\

\hspace{0.75em}Maximum pair-reference similarity
& $0.0962 \pm 0.0242$ \\

\hspace{0.75em}Share with pair-reference similarity $<0.20$
& 100\% \\

\hspace{0.75em}Maximum benchmark-reference similarity
& $0.1782 \pm 0.0290$ \\

\hspace{0.75em}Share with benchmark-reference similarity $<0.20$
& 79.2\% \\

\hspace{0.75em}Share with benchmark-reference similarity $<0.30$
& 100\% \\

\hspace{0.75em}Novelty distance to benchmark references
& $0.8218 \pm 0.0290$ \\

\hspace{0.75em}Exact pair-reference match rate
& 0\% \\

\hspace{0.75em}Exact benchmark-reference match rate
& 0\% \\

\hspace{0.75em}Pair-reference scaffold novelty rate
& 100\% \\

\hspace{0.75em}Benchmark-reference scaffold novelty rate
& 99.2\% \\

\addlinespace[3pt]
\multicolumn{2}{@{}l}{\textit{Intra-panel structural diversity}} \\
\addlinespace[1pt]

\hspace{0.75em}Within-panel Morgan diversity
& $0.8978 \pm 0.0158$ \\

\hspace{0.75em}Within-panel maximum Tanimoto
& $0.1219 \pm 0.0247$ \\

\hspace{0.75em}Distinct-scaffold panel rate
& 100\% \\

\bottomrule
\end{tabularx}

\caption{
Benchmark-reference novelty and intra-panel structural diversity of
\textsc{REUSE} candidates pooled across  random seeds.
Molecular similarity is measured by Tanimoto similarity using radius-2,
2048-bit Morgan fingerprints \cite{rogers2010ecfp}, while scaffold novelty
is evaluated using Bemis--Murcko scaffolds \cite{bemis1996frameworks}.
``Pair reference'' denotes the two benchmark-associated reference ligands
for the corresponding target pair, whereas ``benchmark references'' denotes
the complete benchmark reference set.
Within-panel quantities are computed among candidates belonging to the same
selected output panel.
Continuous values are reported as mean $\pm$ standard deviation;
uniqueness, threshold, exact-match, and scaffold-novelty statistics are
reported as percentages.
}
\label{tab:appendix_reference_novelty_diversity}
\end{table*}

\subsection{Comparison of Input-Space Optimizers}
\label{sec:appendix_optimizer_comparison}

Table~\ref{tab:optimizer_comparison} compares three derivative-free
input-space optimizers under a controlled \textsc{REUSE} protocol. All
variants use the same target-pair inputs, random-seed setting, frozen
generator, input representation, random noise prior
$\pi_0(\cdot\mid p)$, search objective, docking and scoring protocols,
chemistry and diversity controls, and downstream selection procedure. The
variants differ only in how the input-space optimizer updates and proposes
new points after initialization. For each target pair, EA, CMA-ES, and BO
are each limited to the same budget of 72 decoded candidate molecules. The
budget is matched by the number of evaluated input points and stochastic
molecular decodes rather than by optimizer iterations, since the three
optimizers follow different update schedules.

\paragraph{Optimizer-specific configurations.}
All three optimizers were initialized from the same random noise prior
$\pi_0(\cdot\mid p)$ under the matched random-seed setting. No
pair-specific warm start or optimizer-specific initialization information
was used.

CMA-ES operated in the 10-dimensional input space with population size
$\lambda=4$, parent number $\mu=2$, three generations, and initial step
size $\sigma_0=0.42$. Its initial population was sampled from
$\pi_0(\cdot\mid p)$, and its initial mean was set to the arithmetic
centroid of these samples. Candidate points were restricted to
$[-8,8]^{10}$. After each generation, the mean, covariance matrix, and
global step size were updated from the two highest-scoring individuals
using a rank-$\mu$ CMA-ES update.

Bayesian optimization used four initial observations independently sampled
from $\pi_0(\cdot\mid p)$, followed by eight sequential proposals. Its
surrogate was a Gaussian process with a fixed Matérn-$5/2$ kernel of unit
length scale and a fixed white-noise term of $0.05$. The response values
were normalized, with numerical jitter $10^{-8}$. Each proposal maximized
expected improvement with $\xi=0.01$ over 4,096 scrambled Sobol candidates
in the bounded domain $[-8,8]^{10}$.

All three optimizers evaluated 12 input points under the matched budget,
with each point decoded six times, resulting in
$12\times6=72$ decoded candidates per target pair. Thus, the comparison
matches the initialization information, number of evaluated input points,
total number of stochastic molecular decodes, and downstream evaluation
pipeline.

For dual-target binding, \textsc{REUSE}-EA achieves the highest P-1, P-2,
and conservative minimum GNINA CNNaffinity scores, together with the highest
GNINA hit rate. It also obtains the best P-1 Vinardo score, conservative
maximum Vinardo score, and Vinardo hit rate. \textsc{REUSE}-BO yields a
slightly better P-2 Vinardo score than \textsc{REUSE}-EA
($-6.265$ versus $-6.253$), but this local improvement does not translate
into a better conservative dual-target score ($-5.820$ versus $-5.876$) or
hit rate (48.3\% versus 51.7\%).

\textsc{REUSE}-CMA-ES provides a competitive but less consistent alternative.
It achieves a higher GNINA hit rate than \textsc{REUSE}-BO
(38.3\% versus 28.3\%), but trails \textsc{REUSE}-EA across all reported
binding metrics. Compared with \textsc{REUSE}-BO, CMA-ES also gives weaker
continuous GNINA and Vinardo scores and a lower Vinardo hit rate. Thus, the
advantage of the default configuration cannot be attributed merely to the
use of an arbitrary population-based derivative-free optimizer.

The chemical-feasibility results exhibit a similar pattern.
\textsc{REUSE}-EA is best or tied best in every column of
Table~\ref{tab:optimizer_comparison}(b). Under the GNINA-based hit
definition, CMA-ES consistently outperforms BO in feasible, alert-free, and
qualified hit rates (20.0\%, 11.7\%, and 10.0\%, respectively, compared
with 13.3\%, 5.0\%, and 3.3\%). By contrast, BO performs better under the
corresponding Vinardo-based definitions, reaching 25.0\%, 6.7\%, and 6.7\%,
compared with 21.7\%, 5.0\%, and 5.0\% for CMA-ES. CMA-ES also exceeds BO
in the Ro5 and Brenk-free pass rates, whereas BO obtains a higher Veber pass
rate; the two configurations have the same PAINS-free rate.

The separate post hoc analysis in Figure~\ref{fig:cross_scorer_radar} supports the same optimizer-level conclusion: the default EA exceeds CMA-ES and BO on seven of eight scorer--target combinations in each comparison, while the isolated exceptions occur under different scorers.

Across the binding and chemical-quality evaluations,
\textsc{REUSE}-EA is best or tied best in 17 of the 18 reported columns.
These results indicate that EA provides the most consistently balanced
performance among the tested optimizers under the matched pipeline and
evaluation budget, supporting its use as the default input-space optimizer
in \textsc{REUSE}. This comparison is specific to the evaluated setting and
is not intended to establish the universal superiority of EA over other
derivative-free optimization methods.

\subsection{Extended Characterization of Candidate Pools and Final Panels}
\label{sec:appendix_extended_characterization}

\paragraph{Criterion attainment.}
Figure~\ref{fig:criterion} characterizes three complementary properties of
the pre-selection candidate pool: dual-target binding, joint QED/SA quality,
and compliance with the Lipinski and Veber guidelines. Ro5/Veber compliance
is relatively common, with 85.8\% of candidates satisfying both guidelines.
The dual-target GNINA and joint QED/SA criteria are more selective, with
attainment rates of 45.8\% and 38.3\%, respectively, while 22.5\% of
candidates satisfy all three criteria simultaneously. The outcome profiles
further show that joint failure of the dual-target and QED/SA criteria is
the most frequent shortfall. Together, these results characterize the
candidate landscape entering final panel construction.

\paragraph{Physicochemical profile and rule-based compliance.}
Figure~\ref{fig:physchem} and
Table~\ref{tab:appendix_physchem_druglikeness} provide complementary
candidate-level and pooled summaries of the physicochemical profile.
The pooled medians are 387.2 Da for molecular weight, 2.20 for cLogP,
97.8~\AA$^2$ for TPSA, three hydrogen-bond donors, six hydrogen-bond
acceptors, and four rotatable bonds. Lipinski compliance reaches 96.7\%,
while 86.7\% satisfy the Veber rule and 85.8\% satisfy both.

The distributions are centered largely within conventional early-stage
medicinal-chemistry ranges, and the two random seeds show broadly similar
profiles. At the same time, the distribution tails include candidates with
higher molecular weight, polarity, lipophilicity, or flexibility. These
results characterize the chemical space represented by the final panels
rather than implying uniform downstream developability.

\paragraph{Predicted ADMET profile.}
Table~\ref{tab:appendix_admet_profile} extends the physicochemical
characterization with computational predictions from ADMET-AI v2.0.1,
using its pretrained endpoint-specific Chemprop 2.3.0 models. Classification
outputs are summarized as predicted probabilities and prespecified rule-pass
rates. The continuous excretion endpoints are retained as exploratory raw regression outputs and are not used to define candidate pass or failure.

Pass rates are relatively high for predicted human intestinal absorption
and ClinTox, at 90.8\% and 100.0\%, respectively. Oral bioavailability
reaches 72.5\%, while the CYP-inhibition criteria range from 65.0\% to
85.0\%. PAMPA permeability and hERG blockade are more selective, with pass
rates of 53.3\% and 48.3\%, respectively. Overall, the variation across endpoints identifies both favorable predicted properties and potential liabilities that can inform subsequent candidate prioritization.

\paragraph{Structural-alert landscape.}

Figure~\ref{fig:alert} characterizes structural alerts by motif and target
pair. The PAINS analysis is particularly informative for candidate
prioritization because PAINS motifs are widely used as early filters for
compounds susceptible to nonspecific assay interference and false-positive
readouts. Overall, 95.8\% of the candidates are PAINS-free. The two observed PAINS
motifs occur only at low overall frequencies (2.5\% and 1.7\%). This sparse PAINS profile
indicates that the candidate panels are not dominated by canonical nuisance
substructures, which is favorable for subsequent compound review, synthesis
prioritization, and experimental follow-up.

The complementary Brenk screen shows that structural alerts are
heterogeneously distributed across target pairs rather than being driven by
any single pair. These alerts therefore provide compound-level flags for
further review and prioritization, but should not be interpreted as
experimental evidence of toxicity or assay interference.

\paragraph{Benchmark-reference novelty and intra-panel diversity.}
Figure~\ref{fig:novelty} and
Table~\ref{tab:appendix_reference_novelty_diversity} evaluate structural
novelty at two complementary reference scopes, together with diversity
within each selected panel. The pair-specific comparison measures whether
the search primarily recovers close analogs of the two reference molecules
associated with the current target pair. At this scope, the mean maximum
similarity is only $0.0962 \pm 0.0242$, and every candidate has a maximum
similarity strictly below 0.20. The individual similarities to the two
target-specific references are similarly low, with means of
$0.0809 \pm 0.0310$ and $0.0833 \pm 0.0265$, respectively.

The all-reference comparison is more extensive because each candidate is
matched against the complete set of 438 benchmark references. As expected,
expanding the reference pool increases the probability of finding a closer
structural neighbor, raising the mean maximum similarity to
$0.1782 \pm 0.0290$. Nevertheless, 79.2\% of candidates remain strictly
below 0.20, and every candidate remains below 0.30, corresponding to a mean
novelty distance of $0.8218 \pm 0.0290$. The cumulative distributions in
Figure~\ref{fig:novelty}(a) show that the pair-specific similarities are
consistently concentrated at lower values. Figure~\ref{fig:novelty}(b)
further compares the two reference scopes candidate by candidate: the
all-reference maximum is necessarily at least as large as the pair-specific
maximum because the complete benchmark set contains the pair-specific
references, but the resulting similarities remain low overall.

The same pattern is observed at the level of molecular identity and core
scaffolds. All canonical molecules and all candidate Bemis--Murcko
scaffolds are unique within the evaluated set, and no candidate exactly
matches either a pair-specific reference or any molecule in the complete
benchmark reference set. All candidate scaffolds are novel relative to
their pair-specific references, and 99.2\% remain scaffold-novel relative
to the complete reference set. Thus, the observed differences extend
beyond molecular-string uniqueness and, in nearly all cases, also involve
distinct core structural frameworks.

Global uniqueness alone does not guarantee useful diversity within the
small panel selected for an individual target pair. We therefore additionally
evaluate the composition of each final panel. The panels retain high
within-panel Morgan diversity ($0.8978 \pm 0.0158$) and low maximum
pairwise Tanimoto similarity ($0.1219 \pm 0.0247$), and every panel contains
N distinct Bemis--Murcko scaffolds. These results indicate that the
selection procedure does not concentrate each panel around a single local
chemotype, but instead returns multiple structurally differentiated
alternatives for each target pair.

Taken together, the fingerprint-, identity-, and scaffold-level results show
that the recovered candidates are structurally distinct from the benchmark
reference molecules while retaining substantial diversity within each
selected panel.

\subsection{Cheap-to-Full Evaluator Frontier Preservation}
\label{sec:appendix_frontier_filtering}

The cheap evaluator is intended to preserve promising candidates for
subsequent full evaluation rather than reproduce the exact ranking of the
entire candidate pool. Table~\ref{tab:appendix_evaluator_stats} evaluates
this screening behavior through frontier overlap, global score agreement,
and computational cost.

\begin{table}[t]
\centering
\small
\renewcommand{\arraystretch}{1.05}
\setlength{\tabcolsep}{8pt}
\begin{tabular}{@{}lcc@{}}
\toprule
Group & Metric & Value \\
\midrule
\multirow{2}{*}{Preservation}
& Ov@5 $\uparrow$ & 0.887 \\
& Ov@8 $\uparrow$ & 0.992 \\
\midrule
\multirow{3}{*}{Agreement}
& Spearman $\uparrow$ & 0.356 \\
& Pearson $\uparrow$ & 0.428 \\
& Kendall $\uparrow$ & 0.302 \\
\midrule
\multirow{3}{*}{Efficiency}
& Cheap s/mol $\downarrow$ & 11.09 \\
& Full s/mol $\downarrow$ & 15.93 \\
& Runtime ratio $\uparrow$ & 2.17 \\
\bottomrule
\end{tabular}
\caption{
Stage-1 frontier preservation, global score agreement, and computational
cost of the cheap evaluator relative to full evaluation. Ov@$k$ denotes
the overlap between their top-$k$ frontiers. Correlations are computed over
the complete candidate pool. Cheap and Full s/mol denote per-molecule
evaluation costs, whereas Runtime ratio denotes the end-to-end full/cheap
runtime ratio.
}
\label{tab:appendix_evaluator_stats}
\end{table}

The cheap evaluator retains 88.7\% of the full-evaluator top-5 frontier and
99.2\% of the top-8 frontier. Although its global rank and score correlations
with full evaluation are moderate, the high frontier overlap shows that the
candidates most relevant to downstream selection are largely preserved.
The per-molecule cost decreases from 15.93 to 11.09 seconds, with an
end-to-end full/cheap runtime ratio of 2.17. These results support using the
cheap evaluator as a high-recall screening stage before full evaluation of the reduced frontier.

\begin{figure*}[t]
    \centering
    \includegraphics[width=0.58\textwidth]
    {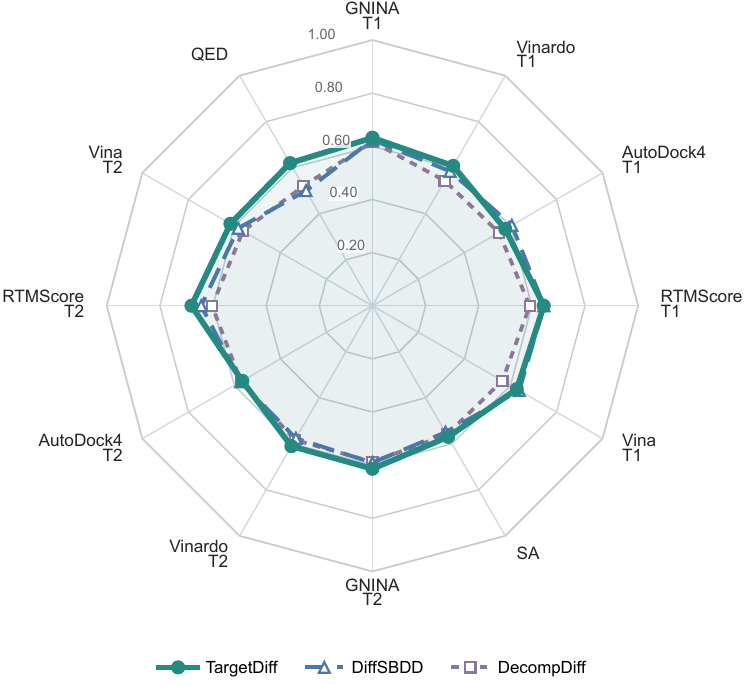}
    \caption{
    Cross-scorer backbone-transfer profiles of \textsc{REUSE} with
    TargetDiff, DiffSBDD, and DecompDiff. T1 and T2 results for each
    affinity metric are placed on opposite spokes, with QED and SA forming
    the remaining pair. All axes are oriented such that farther outward
    indicates better performance. Scores are mapped using fixed metric-wise
    linear transformations (GNINA$/10$, $-\mathrm{Vinardo}/10$,
    $-\mathrm{AutoDock4}/15$, RTMScore$/20$, and $-\mathrm{Vina}/15$),
    while QED and SA are unchanged. A shared zero-based radial range of
    0--1 is used without per-axis min--max normalization. The broadly
    similar profiles indicate that \textsc{REUSE} transfers across frozen
    backbones, while TargetDiff provides the most consistently strong
    overall profile.
    }
    \label{fig:backbone_radar}
\end{figure*}

\subsection{Additional Pose-Level Comparisons}
\label{sec:appendix_more_examples}
Figure~\ref{fig:appendix_binding_visualization} gives three additional qualitative comparisons on representative dual-target pairs. For each pair, we compare molecules from REUSE, DualDiff, and MDRL, and visualize their poses on both targets under a matched viewing angle for each target.

The same pattern appears across all three pairs. REUSE more often achieves favorable docking scores on both targets at the same time, which is the key requirement in the dual-target setting. Its poses are also more often centered within the pocket and better situated in the binding region, whereas the baseline molecules more frequently appear shifted, partially buried, or less stably aligned with the pocket geometry. This contrast is especially clear in the first and third target pairs.

These examples are useful because they show how the quantitative advantage appears at the pose level. Even when the gap on one target is only moderate, the REUSE pose still tends to look more coherent across the pair, consistent with better cross-target compatibility rather than one-sided improvement. Overall, Figure~\ref{fig:appendix_binding_visualization} supports the main quantitative results by showing that REUSE more reliably produces favorable and visually plausible binding configurations on both targets.

\subsection{Context-flexible recurrence of compact local motifs within a target pair}
\label{sec:appendix_local_motif_recurrence}

Figure~\ref{fig:appendix_small_shared_motif} highlights a qualitative
pattern within intermediate candidate pools produced during REUSE search.
In each row, the three molecules are distinct decoded candidates for the
same target-pair design problem, yet they share one compact exact motif,
shown in blue, while differing visibly in their larger architectures, ring
organizations, and substituent layouts. The central observation is local
reuse despite global diversity: for a fixed target pair, the same small
motif can recur across otherwise different molecules.

The six rows are representative candidate-pool examples for structural
inspection. The recurring motifs include both ring-based and non-ring
fragments, so the effect is not tied to a single motif class. The same
compact motif remains recognizable even when the surrounding molecular
context changes substantially. This pattern is more consistent with local
motif reuse within a target-pair candidate pool than with repetition of a
single whole-molecule template.

Table~\ref{tab:appendix_motif_row_summary} supports the same reading quantitatively. Across all six rows, whole-molecule Morgan fingerprint similarity remains low, each row contains three distinct Bemis--Murcko scaffolds, and the shared motif occupies only a limited fraction of heavy atoms in the full molecules. In other words, what recurs is a small local fragment, while the larger molecular frameworks remain diverse.

Table~\ref{tab:appendix_motif_molecule_details} records the 18 displayed structures explicitly, including canonical SMILES together with SA and QED values. This table mainly serves as a molecule-level record of the examples shown in the figure.

Overall, this analysis points to a target-pair-specific pattern in which our method recovers multiple globally distinct candidate molecules that nevertheless reuse the same compact local motif. A natural interpretation is that the search process identifies local chemical preferences that can be realized in several different larger molecular contexts for the same target pair.

\begin{table}[t]
\centering
\small
\setlength{\tabcolsep}{3pt}
\begin{tabular}{@{}rcccc@{}}
\toprule
\multirow{2}{*}{Budget} & \multicolumn{3}{c}{Search score} & \multirow{2}{*}{\shortstack{Chem-floor \\ recovery}} \\
\cmidrule(lr){2-4}
& Mean & P25 & P75 & \\
\midrule
1 & 0.437 & 0.193 & 0.750 & 0.056 \\
2 & 0.519 & 0.261 & 0.873 & 0.121 \\
3 & 0.638 & 0.321 & 1.012 & 0.226 \\
4 & 0.703 & 0.591 & 1.139 & 0.221 \\
5 & 0.731 & 0.600 & 1.157 & 0.337 \\
6 & 0.801 & 0.643 & 1.245 & 0.447 \\
7 & 0.844 & 0.653 & 1.245 & 0.451 \\
8 & 0.869 & 0.643 & 1.256 & 0.500 \\
\bottomrule
\end{tabular}
\caption{Budget sensitivity for Full REUSE. We report the best-so-far
search score and the chemistry-floor attainment rate at each search budget.}
\label{tab:budget_sensitivity}
\end{table}

\subsection{Budget Sensitivity}
\label{sec:appendix_budget_sensitivity}

Table~\ref{tab:budget_sensitivity} shows how REUSE progresses as the search budget increases from 1 to 8. The trend is clear in both summaries: The best-so-far mean search score rises from 0.437 at budget 1 to
0.869 at budget 8, while the quartiles generally shift upward. The
chemistry-floor attainment rate also generally increases from 0.056 to
0.500, with a small fluctuation at budget 4. This means that larger budgets more reliably surface stronger candidates under the internal search score and more often reach chemically acceptable regions, consistent with the search-progression evidence in the main text.

\begin{table}[t]
\centering
\scriptsize
\setlength{\tabcolsep}{2.5pt}
\renewcommand{\arraystretch}{1.08}

\begin{tabular*}{\columnwidth}{
    @{\extracolsep{\fill}}
    l
    *{6}{r}
    @{}
}
\toprule
\multirow{2}{*}{Metric}
& \multicolumn{3}{c}{Pre-refinement}
& \multicolumn{3}{c}{Final selected}
\\
\cmidrule(lr){2-4}
\cmidrule(lr){5-7}
& Mean & P25 & P75
& Mean & P25 & P75
\\
\midrule

Consensus Hit (\%) $\uparrow$
& 41.8 & 0.0 & 86.5
& 50.1 & 0.0 & 100.0
\\

Dual High Affinity (\%) $\uparrow$
& 50.1 & 19.8 & 91.7
& 58.3 & 33.3 & 100.0
\\

Max Vina Dock $\downarrow$
& -8.270 & -8.987 & -7.801
& -8.635 & -9.266 & -8.087
\\

\bottomrule
\end{tabular*}

\caption{
Effect of final selection under the main \textsc{REUSE} configuration.
Mean, P25, and P75 statistics are compared between the pre-refinement
candidate set and the final selected set.
}
\label{tab:final_selection_wo_balance_detailed}
\end{table}

\subsection{Effect of Final Selection}
\label{sec:appendix_final_selection}

Table~\ref{tab:final_selection_wo_balance_detailed} compares the candidate
set immediately before the final refinement step with the reported output
set. Final selection increases the mean Consensus Hit rate from 41.8\% to
50.1\% and the mean Dual High Affinity rate from 50.1\% to 58.3\%, while
improving Max Vina Dock from $-8.270$ to $-8.635$. The corresponding
quartiles generally shift in the same direction, showing that final
selection concentrates candidates with stronger and more consistently
balanced dual-target profiles.

\begin{table}[t]
\centering
\small
\setlength{\tabcolsep}{3pt}
\begin{tabular}{@{}lccc@{}}
\toprule
Method & \shortstack{Wall-clock \\ time} & \shortstack{Combined \\ Dock $\downarrow$} & \shortstack{Relative \\ cost} \\
\midrule
Ours & 52.2 min & -18.678 & 1.90$\times$ \\
DualDiff & 27.5 min & -16.806 & 1.00$\times$ \\
MDRL & 12.4 h & -16.828 & 27.05$\times$ \\
\bottomrule
\end{tabular}
\caption{Numeric summary of the time--performance trade-off in Figure 4(a) of the main paper. Wall-clock time reports the overall method cost, and Combined Dock denotes the official combined full docking score, where lower is better.}
\label{tab:time_performance_summary}
\end{table}

\subsection{Time--Performance Summary and Runtime Decomposition}
\label{sec:appendix_time_performance}

Table~\ref{tab:time_performance_summary} gives the numeric
counterpart to Figure 4(a) in the main paper. REUSE achieves the strongest final docking performance, but now incurs a moderate runtime increase relative to DualDiff because of the added search overhead. Even so, it remains vastly cheaper than MDRL, which operates in a full adaptation regime rather than frozen-prior reuse.

To make this runtime profile more concrete, Tables~\ref{tab:appendix_runtime_stagewise} and~\ref{tab:appendix_runtime_sizes} provide a finer-grained breakdown under the same official timed evaluation setting used in the main paper. The two tables describe the same setting from complementary perspectives. Table~\ref{tab:appendix_runtime_stagewise} reports the stage-wise wall-clock decomposition, while Table~\ref{tab:appendix_runtime_sizes} reports the corresponding intermediate candidate-set sizes together with the total runtime and the full-docking runtime.

Several points are worth highlighting. First, most of the runtime is spent in the search, decoding, and rapid-screening stage rather than in the reduced-frontier full-docking stage. This is consistent with the design of REUSE: the method invests computation in iterative exploration of the frozen input space, while reserving the most expensive high-fidelity evaluation for a much smaller shortlist. Second, the reduced-frontier full-docking stage still accounts for a substantial but clearly smaller portion of total runtime, which shows that stage-wise pruning is effective in containing the cost of full evaluation. Third, the final panel-construction and incumbent-update step contributes negligibly to the total wall-clock time, indicating that the dominant cost comes from candidate generation and evaluation rather than from the final subset-selection logic itself.

The comparison with DualDiff mainly comes down to where computation is spent. DualDiff performs target-coupled sampling-time intervention followed by dual-target evaluation, and therefore remains relatively lightweight at inference time. REUSE, by contrast, explicitly searches the frozen input space over multiple iterations, which introduces additional runtime beyond one-shot generation. At the same time, REUSE still uses a coarse-to-fine evaluation pipeline and applies the most expensive full docking only to a reduced frontier, which keeps the overall cost far below adaptation-heavy optimization.

MDRL is expensive for a different reason. Unlike REUSE and DualDiff, it does not only run inference with a fixed generator; it also performs a full adaptation procedure. In the full proxy-RL configuration, it first builds an initial population and then alternates between large-batch molecule generation and policy updates over many epochs. The dominant cost is therefore the repeated sample--train loop itself, which is substantially more expensive than frozen-prior reuse. This places MDRL in a much higher cost regime even though its final docking quality remains below REUSE.

\begin{figure*}
    \centering
    \includegraphics[width=1\linewidth]{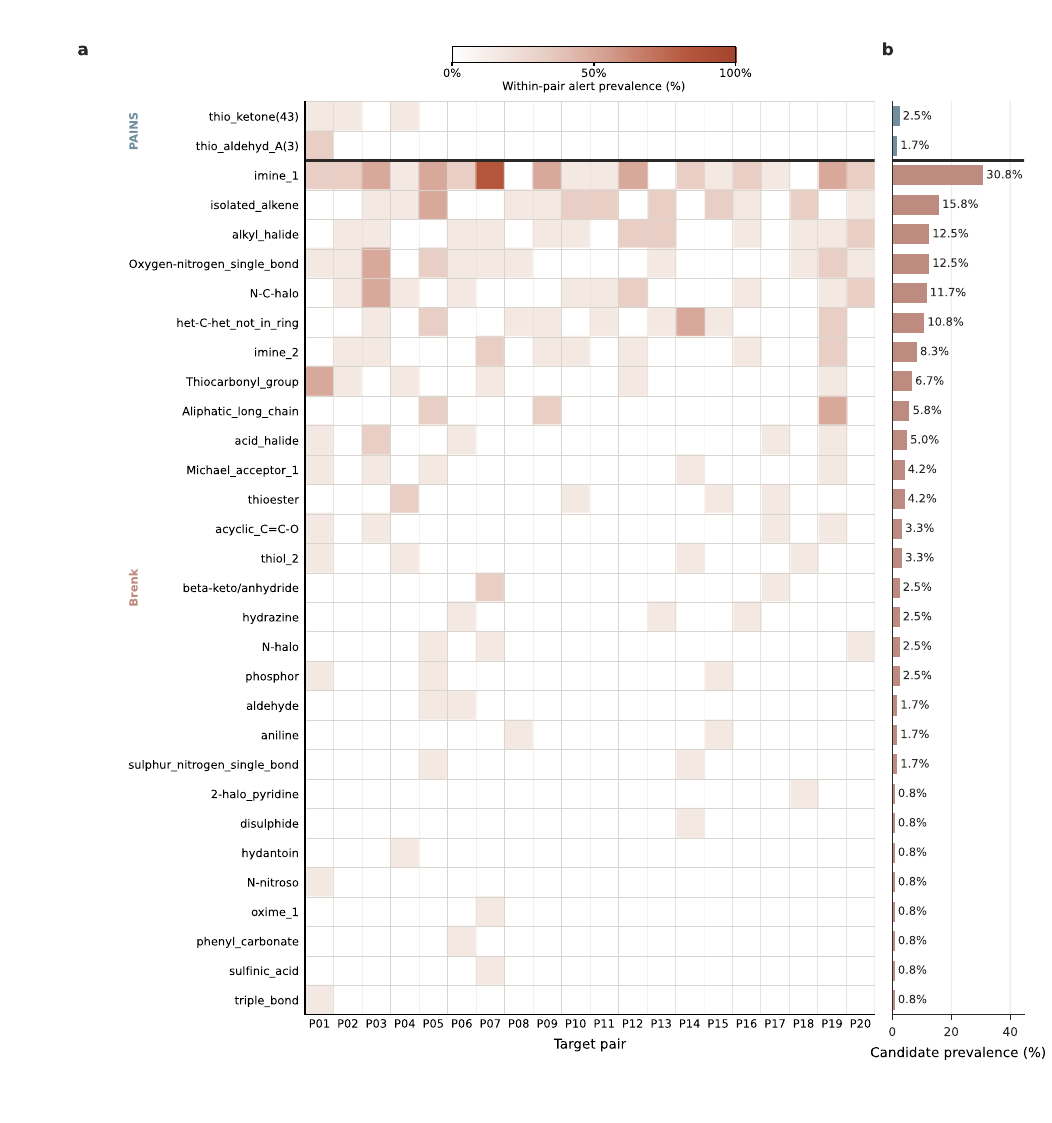}
    \caption{
PAINS and Brenk structural-alert landscape for a randomly sampled subset
of target pairs.
(a) The subset is used for pair-level visualization of the alert
distribution. Rows denote alert motifs, and columns denote the sampled
target pairs. Cell color indicates the proportion of final
\textsc{REUSE} candidates within each target pair that trigger the
corresponding motif, pooled across  random seeds. The horizontal
separator distinguishes the PAINS and Brenk catalogs. 
(b) Prevalence of each motif among candidates associated with the same
sampled target-pair subset, pooled across random seeds. PAINS and
Brenk matches are heuristic structural flags for subsequent inspection
and do not constitute experimental toxicity measurements.
}
    \label{fig:alert}
\end{figure*}

\begin{table*}[t]
\centering
\small
\setlength{\tabcolsep}{5pt}
\begin{tabular}{@{}lcccccc@{}}
\toprule
Frozen Backbone & P-1 Vina $\downarrow$ & P-2 Vina $\downarrow$ & Max Vina $\downarrow$ & Dual High $\uparrow$ & QED $\uparrow$ & SA $\uparrow$ \\
\midrule
TargetDiff & -9.41 & -9.26 & -8.64 & 58.3\% & 0.62 & 0.57 \\
DiffSBDD   & -9.57 & -8.75 & -8.44 & 61.7\% & 0.50 & 0.55 \\
DecompDiff & -8.47 & -8.44 & -7.88 & 43.3\% & 0.52 & 0.56 \\
\bottomrule
\end{tabular}
\caption{Backbone transfer results on the dual-target benchmark. REUSE is applied with three different frozen single-target diffusion backbones.}
\label{tab:backbone_transfer}
\end{table*}

\begin{figure*}[tp]
    \centering
    \includegraphics[width=0.68\textwidth]{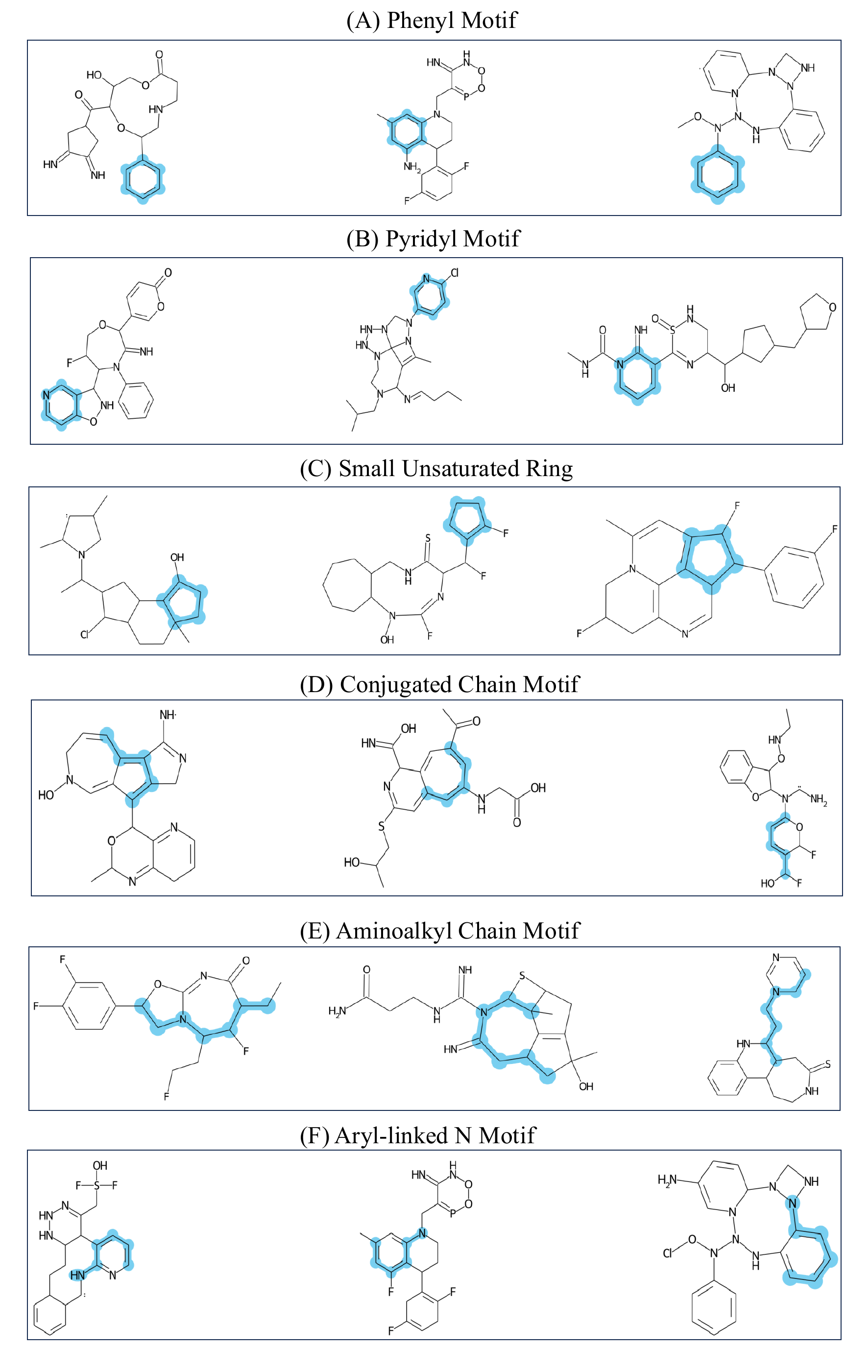}
    \caption{Representative rows illustrating local motif recurrence among globally distinct molecules. Each row contains three molecules from the same candidate pool. Although the overall molecular structures differ at the whole-molecule level, a compact shared motif recurs within the row and is highlighted in blue. The six rows include both ring-like and non-ring motifs, indicating that the phenomenon is not restricted to a single motif class. Row-level structural summaries and molecule-level SA/QED values are reported in Tables~\ref{tab:appendix_motif_row_summary} and~\ref{tab:appendix_motif_molecule_details}.}
    \label{fig:appendix_small_shared_motif}
\end{figure*}

\subsection{Backbone Transfer Analysis}
\label{app:backbone_transfer}

To examine whether \textsc{REUSE} is tied to a particular frozen generator,
we evaluate it with three pretrained single-target diffusion backbones:
TargetDiff, DiffSBDD, and DecompDiff. Table~\ref{tab:backbone_transfer}
reports the primary Vina and chemical-quality results, while
Figure~\ref{fig:backbone_radar} provides a complementary cross-scorer view.
In all cases, the backbone parameters and diffusion sampling process remain
fixed; only the input-space search and selection procedure of \textsc{REUSE}
is applied.

The primary metrics confirm that \textsc{REUSE} can operate with all three
backbones, while revealing some backbone-dependent variation. TargetDiff
achieves P-1/P-2 Vina scores of $-9.41$ and $-9.26$, a Max Vina score of
$-8.64$, and a Dual High rate of $58.3\%$. DiffSBDD obtains the highest
Dual High rate ($61.7\%$) and strongest P-1 Vina score ($-9.57$), although
its P-2 and Max Vina scores are slightly weaker than those of TargetDiff.
DecompDiff produces weaker docking outcomes overall, including a Dual High
rate of $43.3\%$, indicating that the reachable dual-target region depends
partly on the underlying frozen prior.

Despite these differences in the primary metrics, the post hoc cross-scorer
profiles in Figure~\ref{fig:backbone_radar} are broadly similar. TargetDiff
leads on GNINA and Vinardo for both targets and on T2 RTMScore, whereas
DiffSBDD leads on AutoDock4 for both targets and is essentially tied on T1
RTMScore. Together with TargetDiff's stronger T2 Vina and QED/SA results,
this consistency supports its use as the default backbone without suggesting
that \textsc{REUSE} depends on a particular generator.

Overall, \textsc{REUSE} can be instantiated with different frozen
single-target diffusion backbones without retraining or modifying their
denoising dynamics. The remaining variation across backbones is consistent
with the proposed capability-recovery formulation: the quality of the
recovered panel ultimately depends on the expressiveness and chemical
coverage of the frozen prior being searched.

\subsection{Limitations}

Our evaluation emphasizes docking-based affinity, chemical quality, and structural diversity, but does not include wet-lab validation. To partially mitigate the reliance on docking-centered evaluation, we include additional computational sanity checks in Appendix~\ref{app:oracle_structural_sanity_checks}, including cross-scorer consistency, relaxed-pose validation, geometry checks, and residue-level interaction overlap. Accordingly, the recovered molecules should be interpreted as computational candidates rather than experimentally confirmed dual-target binders. In addition, REUSE is designed as a search-based recovery framework over a frozen generator, and its performance may depend on the quality of the underlying pretrained single-target prior as well as the search budget allocated at inference time. Finally, although the method remains substantially cheaper than adaptation-heavy alternatives in our experiments, it still introduces nontrivial iterative search overhead relative to one-shot generation or lighter sampling-time intervention methods, which may limit throughput in large-scale screening settings.

\subsection{Broader Impacts}

This work may have positive impact on computational drug discovery by providing a more efficient and diagnostic way to explore whether complex multi-target behavior can already be recovered from existing generative priors, potentially reducing the need for repeated retraining and lowering the iteration cost of early-stage molecular design. More broadly, the perspective developed here may help researchers better understand what capabilities are already encoded in pretrained scientific generative models.

At the same time, methods for molecular generation can have dual-use implications. In particular, algorithms that improve the recovery of biologically active molecules could in principle be misused to accelerate the design of harmful compounds. Our work does not include experimental validation, does not release a curated set of hazardous molecules, and is evaluated in a standard research setting focused on benchmarked dual-target design. We believe responsible use of such methods should remain subject to existing institutional, legal, and domain-specific safety oversight. Future releases of code or generated assets should take into account appropriate safeguards and usage restrictions where necessary.

\begin{figure*}[tp]
    \centering
    \includegraphics[width=0.68\textwidth]{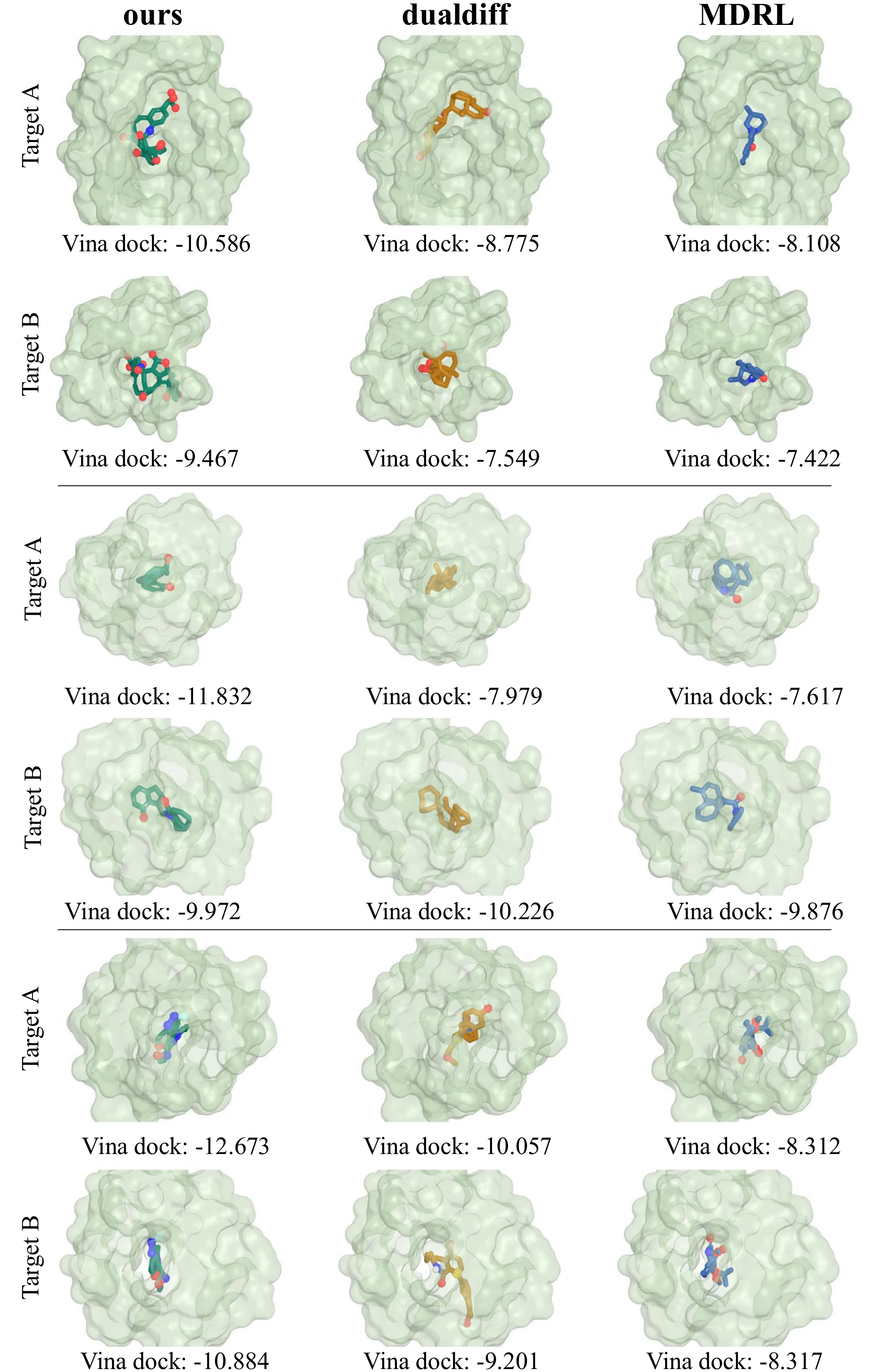}
    \caption{
    Qualitative comparison of binding poses on three representative
    dual-target pairs. For each pair, the left, middle, and right columns
    show molecules from REUSE, DualDiff, and MDRL, respectively; the top
    and bottom rows correspond to Target A and Target B. The reported value
    under each panel is the Vina docking score for the corresponding target,
    where lower is better. Across the three pairs, REUSE achieves lower
    Vina scores than DualDiff in five of the six target-wise comparisons
    and than MDRL in all six comparisons. DualDiff obtains the lower score
    on Target B of the second pair. The examples additionally provide
    qualitative comparisons of pocket placement and cross-target pose
    consistency.
    }
    \label{fig:appendix_binding_visualization}
\end{figure*}


\section{Mathematical Properties and Proofs for REUSE}
\label{app:math_proofs}

In this appendix we formalize several basic mathematical properties of REUSE.
The results are intentionally conservative: they establish well-posedness of the
algorithmic objects, preservation of feasibility/diversity constraints, monotonicity
of the incumbent set under the update rule in Algorithm~\ref{alg:REUSE}, and a
sampling-reachability guarantee induced by the immigration component in the offspring
proposal.
Whenever a statement depends on the exact panel-construction or incumbent-comparison
rule, it is understood to refer only to the deterministic exact-selector
instantiation introduced below.
We do \emph{not} claim global optimality of the frozen generator or of the overall
search procedure.

\subsection{Assumptions and auxiliary definitions}

Fix a target pair $p=(t_a,t_b)$ throughout this appendix.
Recall that the frozen generator induces a decoded molecular family
$\mathcal{M}(z,p)$ for each $z \in \mathcal{Z}$.
We denote the candidate universe for target pair $p$ by
\[
\mathcal{X}_p
:=
\bigcup_{z \in \mathcal{Z}} \mathcal{M}(z,p).
\]

\begin{assumption}[Measurable latent space and prior]
\label{ass:measurable}
The latent/input space $\mathcal{Z}$ is equipped with a $\sigma$-algebra
$\mathscr{A}$, and for each target pair $p$ the initialization prior
$\pi_0(\cdot\mid p)$ is a probability measure on $(\mathcal{Z},\mathscr{A})$.
\end{assumption}

\begin{assumption}[Deterministic finite offspring budgets and finite decoded families]
\label{ass:finite_family}
For each iteration $t$, the offspring population $\mathcal{O}_t$ is finite with
deterministic cardinality
\[
n_t := |\mathcal{O}_t| \in \mathbb{N}.
\]
Moreover, for every $z \in \mathcal{Z}$,
\[
|\mathcal{M}(z,p)| < \infty.
\]
Hence every pooled candidate set induced by finitely many offspring is finite.
\end{assumption}

\begin{assumption}[Subset-valued stage-selection operators]
\label{ass:subset_ops}
For each stage $s \in \{1,\dots,S\}$ and each finite candidate pool
$C \subseteq \mathcal{X}_p$, the operator
\[
\mathrm{Top}_{B_s}(C;\succ_s)
\]
returns a subset of $C$ with cardinality
\[
\left|\mathrm{Top}_{B_s}(C;\succ_s)\right|=\min\{|C|,B_s\}.
\]
\end{assumption}

\begin{definition}[Feasible and $\tau$-diverse set family]
\label{def:feasible_diverse_family}
For any candidate pool $C \subseteq \mathcal{X}_p$, define
\[
\begin{aligned}
\mathfrak{F}_N(C;p,\tau)
&:= \Bigl\{\, S \subseteq C \;\Bigm| \\
&\quad |S|=N,\quad c(m,p)=1\ \forall m\in S, \\
&\quad \Delta(m_i,m_j)\ge \tau\ \forall m_i\neq m_j \in S
\,\Bigr\}.
\end{aligned}
\]
Thus $\mathfrak{F}_N(C;p,\tau)$ is the family of all $N$-element subsets of $C$
that are feasible for the target pair $p$ and pairwise $\tau$-diverse.
\end{definition}

\begin{definition}[Global feasible-set family]
\label{def:global_feasible_family}
Define
\[
\begin{aligned}
\mathfrak{F}^{\mathrm{all}}_N(p,\tau)
&:= \Bigl\{\, S \subseteq \mathcal{X}_p \;\Bigm| \\
&\quad |S|=N,\quad c(m,p)=1\ \forall m\in S, \\
&\quad \Delta(m_i,m_j)\ge \tau\ \forall m_i\neq m_j \in S
\,\Bigr\}.
\end{aligned}
\]
\end{definition}

\begin{assumption}[Exact-selector instantiation of panel construction and incumbent comparison]
\label{ass:scalarized_final_selector}
For the results in this appendix involving the panel-construction and incumbent-update
steps of Algorithm~\ref{alg:REUSE}, we analyze the following deterministic
instantiation.

There exists a deterministic real-valued set utility
\[
J_p : \mathfrak{F}^{\mathrm{all}}_N(p,\tau)\cup\{\emptyset\} \to \mathbb{R}\cup\{-\infty\},
\qquad
J_p(\emptyset):=-\infty.
\]

At iteration $t$, if $\mathfrak{F}_N(\mathcal{C}_t^{(S)};p,\tau)\neq\emptyset$, the
panel-construction step returns an element of
\[
\arg\max_{S \in \mathfrak{F}_N(\mathcal{C}_t^{(S)};p,\tau)} J_p(S).
\]
If $\mathfrak{F}_N(\mathcal{C}_t^{(S)};p,\tau)=\emptyset$, it returns $\emptyset$.

The incumbent update is
\[
S_t^\star \in \arg\max_{S \in \{S_{t-1}^\star,\, S_t\}} J_p(S),
\qquad
S_0^\star := \emptyset.
\]
\end{assumption}

\begin{remark}
\label{rem:scalarized_instantiation}
The main method remains multi-objective.
Assumption~\ref{ass:scalarized_final_selector} does \emph{not} replace the
multi-objective formulation in the main text; it only specifies one deterministic
instantiation of the final panel-selection/comparison step so that the incumbent
update rule can be analyzed formally.
Accordingly, every result below that invokes
Assumption~\ref{ass:scalarized_final_selector} is an exact-selector statement:
it applies to the analyzed exact feasible-panel maximization instantiation and
need not extend to arbitrary heuristic or greedy implementations of the final
panel-construction step.
Equivalently, one may replace $J_p$ by any deterministic total preorder over feasible
candidate panels; the statements below then hold after replacing ``$\arg\max$''
by the corresponding maximal-set selection operator.
\end{remark}

\begin{assumption}[Offspring mixture with immigration]
\label{ass:mixture}
For each iteration $t$, conditional on the selected parent set $\mathcal{A}_t$,
the mutation and crossover kernels
\[
q_{\mathrm{mut}}(\cdot \mid \mathcal{A}_t,p),
\qquad
q_{\mathrm{cross}}(\cdot \mid \mathcal{A}_t,p)
\]
are probability measures on $(\mathcal{Z},\mathscr{A})$, and the offspring proposal
distribution is
\[
\begin{aligned}
q_{\mathrm{off}}(z' \mid \mathcal{A}_t,p)
={}& \alpha_{\mathrm{mut}}\, q_{\mathrm{mut}}(z' \mid \mathcal{A}_t,p) \\
&+ \alpha_{\mathrm{cross}}\, q_{\mathrm{cross}}(z' \mid \mathcal{A}_t,p) \\
&+ \alpha_{\mathrm{imm}}\, \pi_0(z' \mid p),
\end{aligned}
\]
where
\[
\begin{aligned}
&\alpha_{\mathrm{mut}},\alpha_{\mathrm{cross}},\alpha_{\mathrm{imm}} \ge 0, \\
&\alpha_{\mathrm{mut}}+\alpha_{\mathrm{cross}}+\alpha_{\mathrm{imm}}=1, \\
&\alpha_{\mathrm{imm}} > 0.
\end{aligned}
\]
Hence $q_{\mathrm{off}}(\cdot\mid \mathcal{A}_t,p)$ is also a probability measure on
$(\mathcal{Z},\mathscr{A})$.
\end{assumption}

\begin{assumption}[Conditional independence within iteration]
\label{ass:conditional_independence}
Conditional on the history available at the beginning of iteration $t$, the
$n_t$ offspring latent points sampled in iteration $t$ are independent draws from
$q_{\mathrm{off}}(\cdot \mid \mathcal{A}_t,p)$.
\end{assumption}

\begin{definition}[Good latent region]
\label{def:good_region}
For any threshold $\eta \in \mathbb{R}$, define
\[
\begin{aligned}
\mathcal{A}_{\eta,N,\tau}(p)
&:= \Bigl\{\, z \in \mathcal{Z} \;\Bigm| \\
&\quad \exists\, S \subseteq \mathcal{M}(z,p), \\
&\quad S \in \mathfrak{F}_N(\mathcal{M}(z,p);p,\tau), \\
&\quad J_p(S)\ge \eta
\,\Bigr\}.
\end{aligned}
\]
Thus $\mathcal{A}_{\eta,N,\tau}(p)$ is the set of latent points whose decoded
families already contain at least one feasible, $\tau$-diverse $N$-set with utility at least $\eta$.
\end{definition}

\begin{remark}
\label{rem:good_region_sufficient}
Definition~\ref{def:good_region} is intentionally conservative.
It characterizes latent points for which a \emph{single} decoded family
$\mathcal{M}(z,p)$ already contains a good feasible witness set.
Since REUSE operationally constructs panels from the pooled set
$\mathcal{C}_t^{(0)}=\bigcup_{z\in\mathcal{O}_t}\mathcal{M}(z,p)$, good panels may also arise
from combining candidates originating from multiple latent points.
Accordingly, the reachability results below should be interpreted as sufficient-condition
guarantees rather than as a characterization of all possible successful pooled outcomes.
\end{remark}

\begin{assumption}[Measurability of good latent regions]
\label{ass:good_region_measurable}
For every threshold $\eta \in \mathbb{R}$, the set
$\mathcal{A}_{\eta,N,\tau}(p)$ belongs to $\mathscr{A}$.
\end{assumption}

\subsection{Basic structural properties}

\begin{lemma}[Finiteness of all candidate pools]
\label{lem:finiteness}
For every iteration $t$ and every stage $s=0,\dots,S$, the candidate pool
$\mathcal{C}_t^{(s)}$ appearing in Algorithm~\ref{alg:REUSE} is finite.
\end{lemma}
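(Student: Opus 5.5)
We argue by induction on the iteration index $t$, and within each iteration by induction on the stage index $s$, following the order in which Algorithm~\ref{alg:REUSE} builds the pools. The only inputs are Assumption~\ref{ass:finite_family} (finitely many offspring per iteration and finite decoded families) and Assumption~\ref{ass:subset_ops} (every stage operator returns a subset of its input). We also use the elementary facts that a finite union of finite sets is finite and that a subset of a finite set is finite.

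\emph{Stage $0$.} By Assumption~\ref{ass:finite_family}, $\mathcal{O}_t$ has finite cardinality $n_t$ and each $\mathcal{M}(z,p)$ is finite. Hence $\bigcup_{z\in\mathcal{O}_t}\mathcal{M}(z,p)$ is a finite union of finite sets and is therefore finite.

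For $t=1$, the algorithm sets $\mathcal{C}_0^{(0)}=\emptyset$ and $\mathcal{O}_1=\mathcal{P}_1$, so Eq.~\eqref{eq:app_pool} makes $\mathcal{C}_1^{(0)}$ finite. For $t\ge 2$, $\mathcal{C}_t^{(0)}=\mathcal{C}_{t-1}^{(0)}\cup\bigcup_{z\in\mathcal{O}_t}\mathcal{M}(z,p)$. This is finite provided $\mathcal{C}_{t-1}^{(0)}$ is finite, which closes the induction over $t$.

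\emph{Preliminary stages $s=1,\dots,S-1$.} Here $\mathcal{C}_t^{(s)}=\mathrm{Top}_{B_s}(\mathcal{C}_t^{(s-1)};\succ_s)$. Assumption~\ref{ass:subset_ops} applies because $\mathcal{C}_t^{(s-1)}$ is finite by the inner induction hypothesis. It gives $\mathcal{C}_t^{(s)}\subseteq\mathcal{C}_t^{(s-1)}$ with $|\mathcal{C}_t^{(s)}|=\min\{|\mathcal{C}_t^{(s-1)}|,B_s\}<\infty$.

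\emph{Terminal stage $s=S$.} This is the only step needing care, because the terminal pool is not a direct $\mathrm{Top}$ of $\mathcal{C}_t^{(S-1)}$. It goes through the auxiliary sets $\mathcal{L}_t$, $\mathcal{R}_t$ and $\mathcal{E}_t$.
\begin{itemize}
\item The shortlist $\mathcal{L}_t=\mathrm{Top}_{B_S}(\mathcal{C}_t^{(S-1)};\succ_{S-1})$ is a subset of the finite set $\mathcal{C}_t^{(S-1)}$, using the same subset property for the relation $\succ_{S-1}$ (the algorithm reuses the operator with this ordering, and Assumption~\ref{ass:subset_ops} covers it with cap $B_S$).
\item The rescue rule satisfies $\mathsf{Rescue}(D)\subseteq D$ by the requirement in Algorithm~\ref{alg:REUSE}, so $\mathcal{R}_t\subseteq\mathcal{C}_t^{(S-1)}\setminus\mathcal{L}_t$.
\item Consequently $\mathcal{E}_t$, which is either $\mathcal{L}_t$ or $\mathcal{L}_t\cup\mathcal{R}_t$, is a subset of $\mathcal{C}_t^{(S-1)}$ and hence finite.
\end{itemize}
Applying Assumption~\ref{ass:subset_ops} once more, $\mathcal{C}_t^{(S)}=\mathrm{Top}_{B_S}(\mathcal{E}_t;\succ_S)$ is finite with $|\mathcal{C}_t^{(S)}|\le B_S$.

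The main obstacle is the bookkeeping in this last step. We must check that the rescue branch never draws candidates from outside $\mathcal{C}_t^{(S-1)}$, which is exactly what the constraint $\mathsf{Rescue}(D)\subseteq D$ guarantees. We must also note that full evaluation only attaches scores to molecules and does not add any. With these two points settled, finiteness holds for every $t$ and every $s=0,\dots,S$.
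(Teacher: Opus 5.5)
Your proof is correct and rests on the same two facts as the paper's proof: stage $0$ is a finite union of finite sets, and each later pool is a subset of a finite pool by Assumption~\ref{ass:subset_ops}. Your version is actually more faithful to Algorithm~\ref{alg:REUSE}: the paper's proof writes $\mathcal{C}_t^{(0)}$ as the per-iteration union and $\mathcal{C}_t^{(S)}$ as a direct $\mathrm{Top}_{B_S}(\mathcal{C}_t^{(S-1)};\succ_S)$, whereas you handle the cumulative recursion by induction on $t$ and the $\mathcal{L}_t/\mathcal{R}_t/\mathcal{E}_t$ terminal step. One small misattribution: Assumption~\ref{ass:subset_ops} literally pairs $B_s$ with $\succ_s$, so the subset property of $\mathcal{L}_t=\mathrm{Top}_{B_S}(\cdot;\succ_{S-1})$ comes from the general definition of $\mathrm{Top}_B$ in the notation summary, not from that assumption.
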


\begin{proof}
By Assumption~\ref{ass:finite_family}, the offspring population
$\mathcal{O}_t$ is finite, with cardinality $n_t<\infty$.
For each $z\in \mathcal{O}_t$, the decoded family $\mathcal{M}(z,p)$ is finite.
Therefore
\[
\mathcal{C}_t^{(0)}
=
\bigcup_{z \in \mathcal{O}_t} \mathcal{M}(z,p)
\]
is a finite union of finite sets, hence finite.

We now argue by induction over the stage index.
Assume $\mathcal{C}_t^{(s-1)}$ is finite for some $s\in\{1,\dots,S\}$.
By Assumption~\ref{ass:subset_ops},
\[
\mathcal{C}_t^{(s)}
=
\mathrm{Top}_{B_s}\big(\mathcal{C}_t^{(s-1)};\succ_s\big)
\subseteq
\mathcal{C}_t^{(s-1)},
\]
so $\mathcal{C}_t^{(s)}$ is finite.
Thus all stage-wise candidate pools are finite.
\end{proof}

\begin{proposition}[Nestedness of stage-wise candidate pools]
\label{prop:nestedness}
For every iteration $t$ and every stage $s=1,\dots,S$,
\[
\mathcal{C}_t^{(s)} \subseteq \mathcal{C}_t^{(s-1)}.
\]
Consequently,
\[
\mathcal{C}_t^{(S)} \subseteq \mathcal{C}_t^{(S-1)} \subseteq \cdots \subseteq \mathcal{C}_t^{(0)}.
\]
Hence every molecule in the panel selected at iteration $t$ belongs to
$\mathcal{C}_t^{(0)}$.
\end{proposition}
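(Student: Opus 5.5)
The plan is to prove the one-step inclusion $\mathcal{C}_t^{(s)} \subseteq \mathcal{C}_t^{(s-1)}$ separately for the preliminary stages $s=1,\dots,S-1$ and for the terminal stage $s=S$. The chain then follows by transitivity, and the statement about the selected panel follows from how $S_t$ is defined.

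For the preliminary stages, Eq.~\eqref{eq:app_stage_survivor} gives $\mathcal{C}_t^{(s)}=\mathrm{Top}_{B_s}(\mathcal{C}_t^{(s-1)};\succ_s)$. Lemma~\ref{lem:finiteness} ensures that $\mathcal{C}_t^{(s-1)}$ is a finite pool, so Assumption~\ref{ass:subset_ops} applies and says that this operator returns a subset of its argument. That settles these stages directly.

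The terminal stage is the only step needing care, because in Algorithm~\ref{alg:REUSE} the set $\mathcal{C}_t^{(S)}$ is not $\mathrm{Top}_{B_S}$ applied to $\mathcal{C}_t^{(S-1)}$. Instead it is $\mathrm{Top}_{B_S}(\mathcal{E}_t;\succ_S)$, where $\mathcal{E}_t$ is either $\mathcal{L}_t$ or $\mathcal{L}_t\cup\mathcal{R}_t$. I will first show $\mathcal{E}_t\subseteq\mathcal{C}_t^{(S-1)}$ in both branches.
\begin{itemize}
\item The base shortlist is $\mathcal{L}_t=\mathrm{Top}_{B_S}(\mathcal{C}_t^{(S-1)};\succ_{S-1})$. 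By Assumption~\ref{ass:subset_ops}, applied with the stage-$(S-1)$ relation but the same subset property, $\mathcal{L}_t\subseteq\mathcal{C}_t^{(S-1)}$.
\item The rescue rule satisfies $\mathsf{Rescue}(D)\subseteq D$ by the requirement of Algorithm~\ref{alg:REUSE}. Hence $\mathcal{R}_t\subseteq\mathcal{C}_t^{(S-1)}\setminus\mathcal{L}_t\subseteq\mathcal{C}_t^{(S-1)}$.
\end{itemize}
The "full evaluation" steps only compute scores and never add molecules, so they do not change these sets. Since $\mathcal{E}_t$ is finite and $\mathrm{Top}_{B_S}$ returns a subset of its argument, we get $\mathcal{C}_t^{(S)}\subseteq\mathcal{E}_t\subseteq\mathcal{C}_t^{(S-1)}$. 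The case where the recomputation is triggered is handled the same way.

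Chaining these inclusions by induction on $s$ gives $\mathcal{C}_t^{(S)}\subseteq\cdots\subseteq\mathcal{C}_t^{(0)}$.

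For the final claim, I split into two cases.
\begin{itemize}
\item If $\mathcal{V}_t=\mathfrak{F}_N(\mathcal{C}_t^{(S)};p,\tau)\neq\emptyset$, then $S_t\in\mathcal{V}_t$. By Definition~\ref{def:feasible_diverse_family}, $S_t\subseteq\mathcal{C}_t^{(S)}\subseteq\mathcal{C}_t^{(0)}$.
\item Otherwise $S_t=\emptyset$, and the claim holds vacuously.
\end{itemize}
The statement concerns the panel $S_t$ selected at iteration $t$, not the incumbent $S_t^\star$. If one also wanted the incumbent, one would add that $\mathcal{C}_{t'}^{(0)}\subseteq\mathcal{C}_t^{(0)}$ for $t'\le t$ by the cumulative definition in Eq.~\eqref{eq:app_pool}.
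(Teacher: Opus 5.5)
Your proof is correct and follows the same route as the paper's: the subset property of $\mathrm{Top}$, chaining by transitivity, and $S_t\subseteq\mathcal{C}_t^{(S)}$. You are more careful than the paper at the terminal stage, which simply applies $\mathcal{C}_t^{(s)}=\mathrm{Top}_{B_s}(\mathcal{C}_t^{(s-1)};\succ_s)$ with $s=S$, whereas you verify the inclusion for the shortlist-plus-rescue construction $\mathcal{C}_t^{(S)}=\mathrm{Top}_{B_S}(\mathcal{E}_t;\succ_S)$ actually used in Algorithm~\ref{alg:REUSE}; your extension of Assumption~\ref{ass:subset_ops} to the mismatched pair $(B_S,\succ_{S-1})$ defining $\mathcal{L}_t$ is harmless, since the general definition of $\mathrm{Top}_B$ in the notation section already returns elements of its argument.
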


\begin{proof}
For each $s\in\{1,\dots,S\}$, Assumption~\ref{ass:subset_ops} gives
\[
\mathcal{C}_t^{(s)}
=
\mathrm{Top}_{B_s}\big(\mathcal{C}_t^{(s-1)};\succ_s\big)
\subseteq
\mathcal{C}_t^{(s-1)}.
\]
Chaining these inclusions yields
\[
\mathcal{C}_t^{(S)} \subseteq \mathcal{C}_t^{(S-1)} \subseteq \cdots \subseteq \mathcal{C}_t^{(0)}.
\]
Since the panel-construction step selects $S_t$ from $\mathcal{C}_t^{(S)}$, we also have
\[
S_t \subseteq \mathcal{C}_t^{(S)} \subseteq \mathcal{C}_t^{(0)}.
\]
\end{proof}

\begin{theorem}[Well-definedness of the panel-construction step under the exact-selector instantiation]
\label{thm:final_selection_well_defined}
Fix an iteration $t$.
Under Assumption~\ref{ass:scalarized_final_selector}, if
$\mathfrak{F}_N(\mathcal{C}_t^{(S)};p,\tau)\neq \emptyset$, then the panel-construction step in
Algorithm~\ref{alg:REUSE} is well-defined and returns some
\[
S_t \in \mathfrak{F}_N(\mathcal{C}_t^{(S)};p,\tau).
\]
If $\mathfrak{F}_N(\mathcal{C}_t^{(S)};p,\tau)=\emptyset$, then it returns $\emptyset$.
\end{theorem}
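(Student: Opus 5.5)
The argument splits on whether $\mathfrak{F}_N(\mathcal{C}_t^{(S)};p,\tau)$ is empty, and the main work is showing that the $\arg\max$ in Assumption~\ref{ass:scalarized_final_selector} is nonempty whenever the feasible family is. By Lemma~\ref{lem:finiteness}, the terminal pool $\mathcal{C}_t^{(S)}$ is finite. (If the rescue branch of Algorithm~\ref{alg:REUSE} is taken, $\mathcal{C}_t^{(S)}=\mathrm{Top}_{B_S}(\mathcal{E}_t;\succ_S)$ is still a subset of the finite set $\mathcal{C}_t^{(S-1)}$, so finiteness persists.) Its power set is therefore finite, and so is the subfamily $\mathfrak{F}_N(\mathcal{C}_t^{(S)};p,\tau)$ of $N$-element feasible, $\tau$-diverse subsets; indeed it has at most $\binom{|\mathcal{C}_t^{(S)}|}{N}$ members.

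Next I check that $J_p$ is defined on every member of this family. Any $S\in\mathfrak{F}_N(\mathcal{C}_t^{(S)};p,\tau)$ satisfies $S\subseteq\mathcal{C}_t^{(S)}\subseteq\mathcal{C}_t^{(0)}$ by Proposition~\ref{prop:nestedness}. Each element of $\mathcal{C}_t^{(0)}$ is a decoded molecule $m\in\mathcal{M}(z,p)$ for some $z\in\mathcal{Z}$, so $\mathcal{C}_t^{(0)}\subseteq\mathcal{X}_p$. Comparing Definition~\ref{def:feasible_diverse_family} with Definition~\ref{def:global_feasible_family} then gives $\mathfrak{F}_N(\mathcal{C}_t^{(S)};p,\tau)\subseteq\mathfrak{F}^{\mathrm{all}}_N(p,\tau)$. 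Hence $J_p$ is a deterministic map from this finite family into $\mathbb{R}\cup\{-\infty\}$.

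In the nonempty case, a map from a nonempty finite set into the totally ordered set $\mathbb{R}\cup\{-\infty\}$ attains its maximum. I would justify this by induction on cardinality, or simply by comparing finitely many values. The $\arg\max$ is therefore nonempty, and the step returns some element of it; by construction that element lies in $\mathfrak{F}_N(\mathcal{C}_t^{(S)};p,\tau)$. If the maximizer is not unique, determinism can be secured by any fixed tie-breaking rule, for instance a fixed enumeration order. This does not affect membership, since every maximizer lies in the feasible family. In the empty case, the assumption directly prescribes the output $\emptyset$, so there is nothing to prove.

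The only genuinely delicate point is the containment $\mathcal{C}_t^{(0)}\subseteq\mathcal{X}_p$ in the cumulative-pool version of Eq.~\eqref{eq:app_pool}. Lemma~\ref{lem:finiteness} treated only a single generation's offspring, so I would note explicitly that $\mathcal{C}_t^{(0)}$ is a union over the finitely many offspring sets $\mathcal{O}_1,\dots,\mathcal{O}_t$. Each set is finite by Assumption~\ref{ass:finite_family}, so the pool is still finite and still contained in $\mathcal{X}_p$.
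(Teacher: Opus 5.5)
Your proposal is correct and follows the paper's proof. The finiteness of $\mathcal{C}_t^{(S)}$ from Lemma~\ref{lem:finiteness} makes the feasible family finite, so the $\arg\max$ is attained whenever that family is nonempty, and the empty case is prescribed directly by Assumption~\ref{ass:scalarized_final_selector}. Your extra checks are also correct and fill small gaps the paper leaves implicit: that $J_p$ is defined on $\mathfrak{F}_N(\mathcal{C}_t^{(S)};p,\tau)\subseteq\mathfrak{F}^{\mathrm{all}}_N(p,\tau)$, and that finiteness survives the cumulative pool and the rescue branch of Algorithm~\ref{alg:REUSE}, which the paper's proof of Lemma~\ref{lem:finiteness} handles only in its single-generation, non-rescue form.
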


\begin{proof}
By Lemma~\ref{lem:finiteness}, the set $\mathcal{C}_t^{(S)}$ is finite.
Hence its power set is finite, and therefore the subfamily
$\mathfrak{F}_N(\mathcal{C}_t^{(S)};p,\tau)$ is also finite.

If this family is nonempty, Assumption~\ref{ass:scalarized_final_selector} states that the
panel-construction step returns an element of
\[
\arg\max_{S \in \mathfrak{F}_N(\mathcal{C}_t^{(S)};p,\tau)} J_p(S).
\]
Since the domain is finite and nonempty, the maximum exists.
Thus the step is well-defined and the returned set belongs to
$\mathfrak{F}_N(\mathcal{C}_t^{(S)};p,\tau)$.

If the family is empty, the same assumption states that the step returns $\emptyset$.
\end{proof}

\begin{remark}
\label{rem:exact_selector_scope}
Theorem~\ref{thm:final_selection_well_defined} and the subsequent
utility-comparison results in this subsection analyze only the exact-selector
instantiation from Assumption~\ref{ass:scalarized_final_selector}. They do not
establish corresponding optimality guarantees for arbitrary heuristic solvers of
the final panel-construction problem.
\end{remark}

\begin{corollary}[Constraint preservation of the returned panel under the exact-selector instantiation]
\label{cor:constraint_preservation}
If $S_t \neq \emptyset$, then
\[
\begin{aligned}
|S_t| &= N, \\
c(m,p) &= 1 && \forall m\in S_t, \\
\Delta(m_i,m_j) &\ge \tau
&& \forall m_i\neq m_j \in S_t.
\end{aligned}
\]
\end{corollary}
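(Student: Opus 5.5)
The plan is to derive the corollary directly from Theorem~\ref{thm:final_selection_well_defined} by a case split on whether the feasible family $\mathfrak{F}_N(\mathcal{C}_t^{(S)};p,\tau)$ is empty. Under Assumption~\ref{ass:scalarized_final_selector}, the panel-construction step has exactly two possible outputs.

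\emph{Case 1: the family is empty.} By Theorem~\ref{thm:final_selection_well_defined}, the returned panel is $S_t=\emptyset$. This contradicts the hypothesis $S_t\neq\emptyset$, so this case cannot occur.

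\emph{Case 2: the family is nonempty.} The same theorem guarantees that the returned panel satisfies $S_t\in\mathfrak{F}_N(\mathcal{C}_t^{(S)};p,\tau)$. Unfolding Definition~\ref{def:feasible_diverse_family} (the defining conditions in Eq.~\eqref{eq:app_feasible_family}) then gives the three claims:
\begin{itemize}
\item $|S_t|=N$;
\item $c(m,p)=1$ for every $m\in S_t$;
\item $\Delta(m_i,m_j)\ge\tau$ for all distinct $m_i,m_j\in S_t$.
\end{itemize}

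One subtlety needs a remark: $\emptyset$ itself must not accidentally lie in $\mathfrak{F}_N$. This holds because every element of $\mathfrak{F}_N$ has exactly $N$ elements, and Table~\ref{tab:notation} assumes $N\ge 2$. Hence nonemptiness of $S_t$ identifies Case 2 unambiguously, and in that case $S_t$ is genuinely a feasible panel rather than the empty fallback.

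None of these steps is a real obstacle; the argument is purely definitional once Theorem~\ref{thm:final_selection_well_defined} is available. The only care needed is to state that the corollary concerns the per-iteration panel $S_t$ under the exact-selector instantiation, consistent with Remark~\ref{rem:exact_selector_scope}. An analogous statement for the incumbent $S_t^\star$ would follow by a short induction using the update rule, but the corollary does not require it.
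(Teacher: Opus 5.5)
Your argument is correct and is essentially the paper's proof, which simply cites Theorem~\ref{thm:final_selection_well_defined} together with Definition~\ref{def:feasible_diverse_family}; your case split only makes explicit that $S_t\neq\emptyset$ rules out the empty-family branch. The extra remark that $\emptyset\notin\mathfrak{F}_N$ (via $N\ge 2$) is harmless but unnecessary, because in the nonempty-family branch the theorem already gives $S_t\in\mathfrak{F}_N(\mathcal{C}_t^{(S)};p,\tau)$ directly.
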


\begin{proof}
This follows immediately from Theorem~\ref{thm:final_selection_well_defined}
and Definition~\ref{def:feasible_diverse_family}.
\end{proof}

\begin{proposition}[Recovery from the terminal survivor pool under the exact-selector instantiation]
\label{prop:recovery_from_shortlist}
Fix an iteration $t$ and a threshold $\eta \in \mathbb{R}$.
If there exists
\[
\widehat{S} \in \mathfrak{F}_N(\mathcal{C}_t^{(S)};p,\tau)
\quad \text{such that} \quad
J_p(\widehat{S}) \ge \eta,
\]
then the panel selected at iteration $t$ satisfies
\[
J_p(S_t) \ge \eta.
\]
\end{proposition}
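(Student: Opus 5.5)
The plan is to combine Theorem~\ref{thm:final_selection_well_defined} with the maximality built into Assumption~\ref{ass:scalarized_final_selector}. The hypothesis gives a witness $\widehat{S} \in \mathfrak{F}_N(\mathcal{C}_t^{(S)};p,\tau)$, so this family is nonempty. We are therefore in the first case of Theorem~\ref{thm:final_selection_well_defined}. That theorem guarantees the panel-construction step is well-defined and returns some $S_t \in \mathfrak{F}_N(\mathcal{C}_t^{(S)};p,\tau)$. In particular $S_t\neq\emptyset$, and the convention $J_p(\emptyset)=-\infty$ never enters.

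Next, Assumption~\ref{ass:scalarized_final_selector} specifies that $S_t$ is an element of $\arg\max_{S\in\mathfrak{F}_N(\mathcal{C}_t^{(S)};p,\tau)} J_p(S)$. The maximum is attained because this family is finite (Lemma~\ref{lem:finiteness}) and nonempty. By definition of the argmax, $J_p(S_t)\ge J_p(S)$ for every $S$ in this family. Taking $S=\widehat{S}$ gives
\[
J_p(S_t)\ge J_p(\widehat{S})\ge \eta .
\]

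One point needs care, though it is not a real obstacle. The argument refers to the terminal pool $\mathcal{C}_t^{(S)}$ as it stands when panel construction is executed. Algorithm~\ref{alg:REUSE} may have recomputed this pool through the rescue branch. The proposition and the exact-selector assumption both refer to that same final pool, so the comparison is legitimate. Rescue is triggered only when the family is empty, which the hypothesis rules out for the pool the statement refers to. If one wants to be pedantic, one states explicitly that $\mathcal{C}_t^{(S)}$ denotes the pool after any rescue update. Nothing else is needed, since the argument uses only maximality over a finite nonempty family.
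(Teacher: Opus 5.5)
Your proposal is correct and follows the paper's proof. The witness $\widehat{S}$ makes $\mathfrak{F}_N(\mathcal{C}_t^{(S)};p,\tau)$ nonempty, so by Assumption~\ref{ass:scalarized_final_selector} the returned $S_t$ maximizes $J_p$ over that family, giving $J_p(S_t)\ge J_p(\widehat{S})\ge\eta$.

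Your appeals to Lemma~\ref{lem:finiteness} and Theorem~\ref{thm:final_selection_well_defined}, and your reading of $\mathcal{C}_t^{(S)}$ as the post-rescue pool, are harmless clarifications that the paper leaves implicit. Strictly, under that reading the hypothesis does not rule out that rescue was triggered; this is irrelevant, because only maximality over the final family is used.
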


\begin{proof}
Because $\widehat{S} \in \mathfrak{F}_N(\mathcal{C}_t^{(S)};p,\tau)$, the feasible family
$\mathfrak{F}_N(\mathcal{C}_t^{(S)};p,\tau)$ is nonempty.
By Assumption~\ref{ass:scalarized_final_selector}, the panel-construction step returns
\[
S_t \in \arg\max_{S \in \mathfrak{F}_N(\mathcal{C}_t^{(S)};p,\tau)} J_p(S).
\]
Therefore
\[
J_p(S_t) \ge J_p(\widehat{S}) \ge \eta.
\]
\end{proof}

\begin{remark}
\label{rem:terminal_pool_boundary}
Proposition~\ref{prop:recovery_from_shortlist} is a terminal-pool statement.
It says that once a good feasible witness set is already present in
$\mathcal{C}_t^{(S)}$, the exact final selector cannot return a strictly worse
utility value. It does not assert that the preceding stage-wise filtering must
preserve every witness set that may exist in the larger pooled family
$\mathcal{C}_t^{(0)}$.
\end{remark}

\begin{corollary}[Constraint preservation of the incumbent under the exact-selector instantiation]
\label{cor:incumbent_constraints}
For every iteration $t$, if $S_t^\star \neq \emptyset$, then
\[
\begin{aligned}
|S_t^\star| &= N, \\
c(m,p) &= 1 && \forall m\in S_t^\star, \\
\Delta(m_i,m_j) &\ge \tau
&& \forall m_i\neq m_j \in S_t^\star.
\end{aligned}
\]
\end{corollary}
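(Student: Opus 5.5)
The plan is induction on $t$, combined with the fact that the incumbent update in Assumption~\ref{ass:scalarized_final_selector} always returns one of its two arguments. The invariant to prove is: for every $t\ge 0$, either $S_t^\star=\emptyset$ or $S_t^\star\in\mathfrak{F}_N(\mathcal{C}_{t'}^{(S)};p,\tau)$ for some $t'\le t$. The three displayed constraints then follow directly from Definition~\ref{def:feasible_diverse_family}, exactly as in Corollary~\ref{cor:constraint_preservation}.

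\emph{Base case.} We have $S_0^\star=\emptyset$ by definition, so the invariant holds trivially.

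\emph{Inductive step.} Suppose the invariant holds for $S_{t-1}^\star$. By Assumption~\ref{ass:scalarized_final_selector},
\[
S_t^\star\in\arg\max_{S\in\{S_{t-1}^\star,S_t\}}J_p(S)\subseteq\{S_{t-1}^\star,S_t\}.
\]
This is the only structural fact used here. Algorithm~\ref{alg:REUSE} implements it as ``replace the incumbent if $J_p(S_t)>J_p(S_{t-1}^\star)$, else keep it,'' and that rule also returns one of the two sets. There are two cases.
\begin{itemize}
\item If $S_t^\star=S_{t-1}^\star$, the induction hypothesis applies directly.
\item If $S_t^\star=S_t$, then either $S_t=\emptyset$, or by Theorem~\ref{thm:final_selection_well_defined} $S_t$ lies in $\mathfrak{F}_N(\mathcal{C}_t^{(S)};p,\tau)$. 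In the latter case Corollary~\ref{cor:constraint_preservation} already gives $|S_t|=N$, feasibility of every member, and pairwise $\tau$-diversity.
\end{itemize}

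\emph{Conclusion.} Any nonempty incumbent therefore equals some nonempty $S_{t'}$ with $t'\le t$. Corollary~\ref{cor:constraint_preservation} applied to that $S_{t'}$ gives the claim.

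\emph{Main obstacle.} The only delicate point is that the $\arg\max$ may be a tie set; for example, both arguments could be $\emptyset$ with value $-\infty$. The membership $\arg\max\subseteq\{S_{t-1}^\star,S_t\}$ still holds, and $\emptyset$ is exactly the excluded case, so ties cause no problem. Note also that feasibility $c(m,p)$ and $\Delta$ depend only on the molecules and on $p$, not on the iteration. Hence a panel certified at iteration $t'$ remains certified at every later iteration, which is what lets the hypothesis be carried forward.
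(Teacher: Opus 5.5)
Your proposal is correct and matches the paper's proof: induction on $t$ with the vacuous base case $S_0^\star=\emptyset$, then a split on whether the incumbent update keeps $S_{t-1}^\star$ (induction hypothesis) or takes $S_t$ (Corollary~\ref{cor:constraint_preservation}). Tracking membership in some $\mathfrak{F}_N(\mathcal{C}_{t'}^{(S)};p,\tau)$ and remarking on ties are only minor refinements of that same argument.
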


\begin{proof}
We argue by induction on $t$.
At $t=0$, we have $S_0^\star=\emptyset$, so the claim is vacuous.

Assume the statement holds for $t-1$.
By Assumption~\ref{ass:scalarized_final_selector}, the incumbent at iteration $t$ is chosen from
\[
\{S_{t-1}^\star,\, S_t\}.
\]
If $S_t^\star = S_{t-1}^\star$, the induction hypothesis implies the claim.
If $S_t^\star = S_t$, then Corollary~\ref{cor:constraint_preservation} implies the claim.
Thus the statement holds for iteration $t$.
\end{proof}

\subsection{Monotonicity of the incumbent under the update rule}

\begin{theorem}[Monotonicity of incumbent utility under the exact-selector instantiation]
\label{thm:monotonicity}
For every iteration $t\ge 1$,
\[
J_p(S_t^\star) \ge J_p(S_{t-1}^\star).
\]
Hence the sequence $\{J_p(S_t^\star)\}_{t\ge 0}$ is nondecreasing.
\end{theorem}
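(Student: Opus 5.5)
The proof will follow directly from the incumbent-update rule in Assumption~\ref{ass:scalarized_final_selector}. It needs no induction beyond a single step, because the comparison set at iteration $t$ always contains the previous incumbent. Fix $t\ge 1$. By that assumption,
\[
S_t^\star \in \arg\max_{S\in\{S_{t-1}^\star,\,S_t\}} J_p(S),
\]
so by the definition of $\arg\max$ over the two-element (or one-element, if $S_t=S_{t-1}^\star$) family, $J_p(S_t^\star)\ge J_p(S)$ for every $S$ in that family. In particular this holds for $S=S_{t-1}^\star$, giving $J_p(S_t^\star)\ge J_p(S_{t-1}^\star)$.

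The only point requiring care is that $J_p$ is well-defined on both arguments, with the comparison made in the extended reals $\mathbb{R}\cup\{-\infty\}$. First, I will note that $S_t$ is either $\emptyset$ or an element of $\mathfrak{F}_N(\mathcal{C}_t^{(S)};p,\tau)\subseteq \mathfrak{F}^{\mathrm{all}}_N(p,\tau)$, using Theorem~\ref{thm:final_selection_well_defined} together with $\mathcal{C}_t^{(S)}\subseteq\mathcal{X}_p$. Second, $S_{t-1}^\star$ lies in the same domain: by Corollary~\ref{cor:incumbent_constraints} it is either empty or a feasible $\tau$-diverse $N$-subset of $\mathcal{X}_p$, and at $t=1$ it is $S_0^\star=\emptyset$. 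Hence both values are defined, and the two-element maximum exists since the family is finite and nonempty. The convention $J_p(\emptyset)=-\infty$ keeps the inequality meaningful, including the case where both panels are empty, where $-\infty\ge-\infty$ holds.

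I will also remark that Algorithm~\ref{alg:REUSE} implements this rule via the strict comparison $J_p(S_t)>J_p(S_{t-1}^\star)$, replacing the incumbent only on strict improvement. This is one valid tie-breaking selection from the $\arg\max$, so the algorithm's output falls under the analyzed instantiation. The nondecreasing-sequence claim then follows by chaining the one-step inequality over $t$. I expect no substantive obstacle; the main risk is only the bookkeeping about the domain of $J_p$ and the $-\infty$ convention handled above.
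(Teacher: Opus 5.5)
Your proposal is correct and matches the paper's proof. Both read the one-step inequality directly off the incumbent-update rule $S_t^\star\in\arg\max_{S\in\{S_{t-1}^\star,S_t\}}J_p(S)$ and then chain it over $t$. Your additional bookkeeping is sound and slightly more careful than the paper, which leaves these points implicit: the domain of $J_p$, the $-\infty$ convention, and the observation that the algorithm's strict-improvement replacement is a valid $\arg\max$ selection.
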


\begin{proof}
By Assumption~\ref{ass:scalarized_final_selector},
\[
S_t^\star \in \arg\max_{S \in \{S_{t-1}^\star,\, S_t\}} J_p(S).
\]
Therefore,
\[
J_p(S_t^\star)
=
\max\{J_p(S_{t-1}^\star),\, J_p(S_t)\}
\ge
J_p(S_{t-1}^\star).
\]
This proves the one-step inequality.
Applying it recursively yields that the sequence
$\{J_p(S_t^\star)\}_{t\ge 0}$ is nondecreasing.
\end{proof}

\begin{corollary}[Persistence after recovery under the exact-selector instantiation]
\label{cor:persistence_after_recovery}
If for some iteration $t_0$ there exists
\[
\widehat{S} \in \mathfrak{F}_N(\mathcal{C}_{t_0}^{(S)};p,\tau)
\quad \text{such that} \quad
J_p(\widehat{S}) \ge \eta,
\]
then
\[
J_p(S_t^\star) \ge \eta
\qquad \text{for all } t \ge t_0.
\]
\end{corollary}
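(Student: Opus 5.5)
The argument combines a one-step lower bound at $t_0$ with the monotonicity already established. First, I apply Proposition~\ref{prop:recovery_from_shortlist} at iteration $t_0$. The existence of the witness $\widehat{S}\in\mathfrak{F}_N(\mathcal{C}_{t_0}^{(S)};p,\tau)$ with $J_p(\widehat{S})\ge\eta$ ensures that the feasible family at $t_0$ is nonempty. The exact selector then returns a panel with $J_p(S_{t_0})\ge J_p(\widehat{S})\ge\eta$.

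Next, I transfer this bound to the incumbent at $t_0$. By the incumbent-update rule in Assumption~\ref{ass:scalarized_final_selector},
\[
J_p(S_{t_0}^\star)=\max\{J_p(S_{t_0-1}^\star),\,J_p(S_{t_0})\}\ge J_p(S_{t_0})\ge \eta .
\]
This needs $t_0\ge 1$, so that $S_{t_0-1}^\star$ exists and the update is actually applied at $t_0$. That is implicit, since the pools $\mathcal{C}_t^{(S)}$ are only defined for $t\ge 1$.

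Finally, I extend the bound to all later iterations. By Theorem~\ref{thm:monotonicity} the sequence $\{J_p(S_t^\star)\}$ is nondecreasing. A short induction on $t\ge t_0$, or simply chaining the one-step inequalities, gives $J_p(S_t^\star)\ge J_p(S_{t_0}^\star)\ge\eta$ for every $t\ge t_0$.

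I do not expect a genuine obstacle. The only delicate points are bookkeeping with the extended-real convention $J_p(\emptyset)=-\infty$ and checking that the maximum in the incumbent update is taken over well-defined values. Both are covered by Assumption~\ref{ass:scalarized_final_selector}, since $J_p$ is defined on $\mathfrak{F}^{\mathrm{all}}_N(p,\tau)\cup\{\emptyset\}$.
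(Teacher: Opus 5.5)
Your proposal is correct and matches the paper's proof: Proposition~\ref{prop:recovery_from_shortlist} gives $J_p(S_{t_0})\ge\eta$, the max form of the incumbent update in Assumption~\ref{ass:scalarized_final_selector} lifts this to $J_p(S_{t_0}^\star)\ge\eta$, and Theorem~\ref{thm:monotonicity} carries the bound to every $t\ge t_0$. Your remark that this implicitly requires $t_0\ge 1$ is a reasonable piece of bookkeeping that the paper leaves unstated.
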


\begin{proof}
By Proposition~\ref{prop:recovery_from_shortlist},
\[
J_p(S_{t_0}) \ge \eta.
\]
By Assumption~\ref{ass:scalarized_final_selector},
\[
J_p(S_{t_0}^\star)
=
\max\{J_p(S_{t_0-1}^\star),\, J_p(S_{t_0})\}
\ge \eta.
\]
The conclusion for all $t\ge t_0$ follows from Theorem~\ref{thm:monotonicity}.
\end{proof}

\begin{remark}
Theorem~\ref{thm:monotonicity} is an \emph{incumbent} monotonicity result.
It does not assert that every intermediate population, frontier, or selected set
must improve at every iteration.
It only states that once the algorithm stores the best-so-far panel, the utility of
that incumbent cannot decrease under the exact-selector incumbent-update rule
specified in Assumption~\ref{ass:scalarized_final_selector}.
\end{remark}

\subsection{Sampling reachability guarantee induced by immigration}

We next formalize the role of the immigration component
$\alpha_{\mathrm{imm}}\pi_0(\cdot\mid p)$ in the offspring mixture.
The results in this subsection are sampling statements in latent space: they
quantify the probability of visiting specified measurable regions of
$\mathcal{Z}$. By themselves, they do not imply that decoded witness sets from
such regions survive later stage-wise filtering or are returned by the final
panel-construction step.

\begin{lemma}[Uniform lower bound from immigration]
\label{lem:one_step_lower_bound}
For any measurable set $A \in \mathscr{A}$, any iteration $t$, and any selected
parent set $\mathcal{A}_t$,
\[
q_{\mathrm{off}}(A \mid \mathcal{A}_t,p)
\ge
\alpha_{\mathrm{imm}}\, \pi_0(A \mid p).
\]
\end{lemma}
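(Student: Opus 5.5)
The plan is to evaluate the mixture measure from Assumption~\ref{ass:mixture} on the fixed set $A$ and drop the two nonnegative components. Fix an iteration $t$, a parent set $\mathcal{A}_t$, and $A\in\mathscr{A}$. By Assumption~\ref{ass:mixture}, $q_{\mathrm{off}}(\cdot\mid\mathcal{A}_t,p)$ is the convex combination of three probability measures on $(\mathcal{Z},\mathscr{A})$. We read the displayed mixture identity setwise:
\[
q_{\mathrm{off}}(A\mid\mathcal{A}_t,p)=\alpha_{\mathrm{mut}}\,q_{\mathrm{mut}}(A\mid\mathcal{A}_t,p)+\alpha_{\mathrm{cross}}\,q_{\mathrm{cross}}(A\mid\mathcal{A}_t,p)+\alpha_{\mathrm{imm}}\,\pi_0(A\mid p).
\]
This is legitimate because $A$ is measurable and every term is a measure evaluated on the same set.

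Next we use nonnegativity. The weights satisfy $\alpha_{\mathrm{mut}},\alpha_{\mathrm{cross}}\ge0$. The quantities $q_{\mathrm{mut}}(A\mid\mathcal{A}_t,p)$ and $q_{\mathrm{cross}}(A\mid\mathcal{A}_t,p)$ are values of probability measures, so they lie in $[0,1]$. Hence the first two summands are nonnegative, and deleting them gives $q_{\mathrm{off}}(A\mid\mathcal{A}_t,p)\ge\alpha_{\mathrm{imm}}\pi_0(A\mid p)$.

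Uniformity in $t$ and $\mathcal{A}_t$ comes for free. The right-hand side involves only $\alpha_{\mathrm{imm}}$ and $\pi_0(\cdot\mid p)$, and both are fixed throughout the run and independent of the parent set. The same bound therefore holds for every iteration and every realization of $\mathcal{A}_t$.

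There is essentially no hard step. The only point needing care is the interpretation of the mixture notation, written with densities $q(z'\mid\cdot)$, as an identity between measures evaluated on sets. Assumption~\ref{ass:mixture} explicitly treats these objects as probability measures, so this reading is justified. Note also that $\alpha_{\mathrm{imm}}>0$ is not needed for the inequality itself; it matters only for the inequality to be nontrivial in the later reachability results.
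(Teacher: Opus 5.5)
Your proof is correct and is essentially the paper's argument: evaluate the mixture from Assumption~\ref{ass:mixture} on $A$ and drop the mutation and crossover terms, each a nonnegative weight times a probability. Your side remarks are also accurate: the density-style notation should be read setwise, and $\alpha_{\mathrm{imm}}>0$ matters only for the later reachability results, not for this inequality.
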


\begin{proof}
By Assumption~\ref{ass:mixture},
\begin{align*}
q_{\mathrm{off}}(A \mid \mathcal{A}_t,p)
&= \alpha_{\mathrm{mut}}\,
q_{\mathrm{mut}}(A \mid \mathcal{A}_t,p) \\
&\quad + \alpha_{\mathrm{cross}}\,
q_{\mathrm{cross}}(A \mid \mathcal{A}_t,p) \\
&\quad + \alpha_{\mathrm{imm}}\, \pi_0(A \mid p).
\end{align*}
Each term on the right-hand side is nonnegative because it is a nonnegative weight
times a probability measure evaluated on $A$.
Therefore,
\[
q_{\mathrm{off}}(A \mid \mathcal{A}_t,p)
\ge
\alpha_{\mathrm{imm}}\, \pi_0(A \mid p).
\]
\end{proof}

\begin{theorem}[Finite-horizon hitting probability]
\label{thm:hitting_probability}
Let $A \in \mathscr{A}$ be any measurable subset of $\mathcal{Z}$, and let
$E_T(A)$ denote the event that at least one offspring latent point sampled during the
first $T$ iterations belongs to $A$.
Then
\[
\mathbb{P}\big(E_T(A)\big)
\ge
1 - \prod_{t=1}^T \left(1-\alpha_{\mathrm{imm}}\pi_0(A\mid p)\right)^{n_t}.
\]
In particular, if the offspring budget is constant, $n_t \equiv n$, then
\[
\mathbb{P}\big(E_T(A)\big)
\ge
1 - \left(1-\alpha_{\mathrm{imm}}\pi_0(A\mid p)\right)^{nT}.
\]
\end{theorem}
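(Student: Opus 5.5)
We bound the complementary probability $\mathbb{P}(E_T(A)^c)$, the event that every offspring latent point sampled in iterations $1,\dots,T$ lies in $A^c$. Write $\rho := \alpha_{\mathrm{imm}}\pi_0(A\mid p)\in[0,1]$. The key input is Lemma~\ref{lem:one_step_lower_bound}. Conditional on the history $\mathcal{H}_{t-1}$ available at the start of iteration $t$, which determines $\mathcal{A}_t$, a single offspring misses $A$ with probability
\[
q_{\mathrm{off}}(A^c\mid\mathcal{A}_t,p) = 1-q_{\mathrm{off}}(A\mid\mathcal{A}_t,p) \le 1-\rho .
\]
By Assumption~\ref{ass:conditional_independence}, the $n_t$ offspring of iteration $t$ are conditionally i.i.d. given $\mathcal{H}_{t-1}$. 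Hence the conditional probability that all of them miss $A$ is at most $(1-\rho)^{n_t}$, and this bound holds uniformly over histories.

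Next, chain these bounds by a tower-property induction. Let $G_t$ be the event that no offspring from iterations $1,\dots,t$ lies in $A$, and note that $G_{t-1}$ is $\mathcal{H}_{t-1}$-measurable. Then
\[
\mathbb{P}(G_t) = \mathbb{E}\bigl[\mathbf{1}_{G_{t-1}}\,\mathbb{P}(\text{all } n_t \text{ offspring miss } A\mid\mathcal{H}_{t-1})\bigr] \le (1-\rho)^{n_t}\,\mathbb{P}(G_{t-1}).
\]
Starting from $\mathbb{P}(G_0)=1$ and iterating gives $\mathbb{P}(G_T)\le\prod_{t=1}^T(1-\rho)^{n_t}$. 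Since $E_T(A)^c = G_T$, the first claim follows. The constant-budget case is immediate because $\prod_{t=1}^T(1-\rho)^{n} = (1-\rho)^{nT}$.

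The main point needing care is the first iteration. In Algorithm~\ref{alg:REUSE}, $\mathcal{O}_1=\mathcal{P}_1$ is drawn directly from $\pi_0$ rather than from $q_{\mathrm{off}}$. This is harmless: each such draw hits $A$ with probability $\pi_0(A\mid p)\ge\rho$, so the same per-iteration bound holds. Alternatively, one can read iteration $1$ through Assumption~\ref{ass:conditional_independence} with the trivial history.

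A second point of care is measurability. The miss probability must be a measurable function of the history, which follows because $q_{\mathrm{off}}(\cdot\mid\mathcal{A}_t,p)$ is a probability kernel and the bound $1-\rho$ is deterministic. Conditional independence is needed only within an iteration; across iterations the dependence is absorbed by the conditioning in the tower-property step.
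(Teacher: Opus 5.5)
Your proof is correct and follows the paper's argument essentially step for step. It takes the per-draw miss bound $1-\alpha_{\mathrm{imm}}\pi_0(A\mid p)$ from Lemma~\ref{lem:one_step_lower_bound}, uses within-iteration conditional independence to get $(1-\alpha_{\mathrm{imm}}\pi_0(A\mid p))^{n_t}$, and then runs a tower-property induction over iterations; your $\mathcal{H}_{t-1}$ is just the paper's $\mathcal{H}_t$ reindexed, and your remark that $\mathcal{O}_1=\mathcal{P}_1$ is drawn directly from $\pi_0(\cdot\mid p)$ (still fine since $\alpha_{\mathrm{imm}}\le 1$) covers a detail the paper's proof leaves implicit.
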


\begin{proof}
Set
\[
r_A := \alpha_{\mathrm{imm}}\pi_0(A\mid p).
\]
By Assumption~\ref{ass:mixture}, we have $r_A \in [0,1]$.

Let $\mathcal{H}_t$ denote the full history available at the beginning of iteration $t$,
before the $n_t$ offspring are drawn.
For any offspring draw $z_t^{(i)}$ in iteration $t$, Lemma~\ref{lem:one_step_lower_bound} gives
\[
\mathbb{P}\big(z_t^{(i)} \in A \mid \mathcal{H}_t\big)
=
q_{\mathrm{off}}(A \mid \mathcal{A}_t,p)
\ge r_A.
\]
Hence
\[
\mathbb{P}\big(z_t^{(i)} \notin A \mid \mathcal{H}_t\big)
\le 1-r_A.
\]
By Assumption~\ref{ass:conditional_independence}, conditional on $\mathcal{H}_t$ the $n_t$
offspring draws in iteration $t$ are independent.
Therefore, if we define the event
\[
M_t(A) := \bigcap_{i=1}^{n_t} \{ z_t^{(i)} \notin A\},
\]
then
\[
\mathbb{P}\big(M_t(A)\mid \mathcal{H}_t\big)
=
\prod_{i=1}^{n_t} \mathbb{P}\big(z_t^{(i)} \notin A \mid \mathcal{H}_t\big)
\le
(1-r_A)^{n_t}.
\]

Now note that
\[
E_T(A)^c = \bigcap_{t=1}^T M_t(A).
\]
We prove by induction on $T$ that
\[
\mathbb{P}\!\left(\bigcap_{t=1}^T M_t(A)\right)
\le
\prod_{t=1}^T (1-r_A)^{n_t}.
\]
For $T=1$, this follows by taking expectation of
$\mathbb{P}(M_1(A)\mid \mathcal{H}_1)\le (1-r_A)^{n_1}$.

Assume the claim holds for $T-1$.
Then, by the tower property,
\begin{align*}
\mathbb{P}\!\left(\bigcap_{t=1}^T M_t(A)\right)
&=
\mathbb{E}\!\left[
\mathbf{1}_{\cap_{t=1}^{T-1} M_t(A)}
\,
\mathbb{P}\big(M_T(A)\mid \mathcal{H}_T\big)
\right] \\
&\le
(1-r_A)^{n_T}
\,
\mathbb{P}\!\left(\bigcap_{t=1}^{T-1} M_t(A)\right) \\
&\le
(1-r_A)^{n_T}
\prod_{t=1}^{T-1} (1-r_A)^{n_t}
\\
&=
\prod_{t=1}^T (1-r_A)^{n_t}.
\end{align*}
This completes the induction.

Finally,
\begin{align*}
\mathbb{P}\big(E_T(A)\big)
&=
1 - \mathbb{P}\big(E_T(A)^c\big)
\\
&=
1 - \mathbb{P}\!\left(\bigcap_{t=1}^T M_t(A)\right)
\\
&\ge
1 - \prod_{t=1}^T (1-r_A)^{n_t},
\end{align*}
which is the desired bound.
The constant-budget form follows immediately when $n_t \equiv n$.
\end{proof}

\begin{corollary}[Asymptotic hitting guarantee]
\label{cor:asymptotic_hitting}
If $\pi_0(A\mid p) > 0$ and
\[
\sum_{t=1}^\infty n_t = \infty,
\]
then
\[
\lim_{T\to\infty} \mathbb{P}\big(E_T(A)\big) = 1.
\]
\end{corollary}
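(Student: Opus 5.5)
The plan is to derive the corollary directly from Theorem~\ref{thm:hitting_probability} by showing that the product bound on the miss probability tends to zero. Set $r_A := \alpha_{\mathrm{imm}}\pi_0(A\mid p)$. By Assumption~\ref{ass:mixture} we have $\alpha_{\mathrm{imm}}>0$, and by hypothesis $\pi_0(A\mid p)>0$. Hence $r_A\in(0,1]$, so $0\le 1-r_A<1$.

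Theorem~\ref{thm:hitting_probability} then gives, for every finite $T$,
\[
1 \ge \mathbb{P}\big(E_T(A)\big) \ge 1-(1-r_A)^{\sum_{t=1}^T n_t}.
\]
Write $N_T:=\sum_{t=1}^T n_t$. Since the $n_t$ are nonnegative integers with $\sum_t n_t=\infty$, the sequence $N_T$ is nondecreasing and diverges to $+\infty$.

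There are two cases.
\begin{itemize}
\item If $r_A=1$, then $(1-r_A)^{N_T}=0$ as soon as $N_T\ge 1$, which happens for all large $T$.
\item If $r_A\in(0,1)$, then $(1-r_A)^{N_T}=\exp\big(N_T\log(1-r_A)\big)\to 0$, because $\log(1-r_A)<0$ and $N_T\to\infty$.
\end{itemize}
In either case the lower bound tends to $1$. Squeezing between this lower bound and the trivial upper bound $1$ gives $\lim_{T\to\infty}\mathbb{P}(E_T(A))=1$.

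As a remark, the events $E_T(A)$ are nested increasing in $T$, so the limit exists by monotonicity in any case. Continuity from below also identifies this limit with the probability that $A$ is ever hit; this is not needed for the statement.

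There is no substantive obstacle. The only delicate points are the convention $0^0=1$, which is harmless because $N_T\ge 1$ eventually, and noting that a constant budget is not required: divergence of $\sum_t n_t$ suffices. For that reason the argument uses the general product form of Theorem~\ref{thm:hitting_probability} rather than its constant-budget specialization.
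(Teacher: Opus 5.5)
Your proof is correct and follows the paper's own argument. Both set $r_A=\alpha_{\mathrm{imm}}\pi_0(A\mid p)\in(0,1]$, split into the cases $r_A=1$ and $r_A\in(0,1)$, and use divergence of $\sum_t n_t$ to drive $(1-r_A)^{\sum_{t\le T} n_t}$ to zero in the bound of Theorem~\ref{thm:hitting_probability}. Your handling of the $r_A=1$ case (the bound vanishes once $\sum_{t\le T} n_t\ge 1$) is slightly more careful than the paper's claim that the product is ``identically zero'', which tacitly assumes every $n_t\ge 1$.
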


\begin{proof}
By Theorem~\ref{thm:hitting_probability},
\[
\mathbb{P}\big(E_T(A)^c\big)
\le
\prod_{t=1}^T (1-r_A)^{n_t},
\qquad
r_A=\alpha_{\mathrm{imm}}\pi_0(A\mid p).
\]
Because $\alpha_{\mathrm{imm}}>0$ and $\pi_0(A\mid p)>0$, we have $r_A\in(0,1]$.
If $r_A=1$, then the product is identically zero and the conclusion is immediate.
If $r_A\in(0,1)$, then $0<1-r_A<1$ and
\[
\prod_{t=1}^T (1-r_A)^{n_t}
=
(1-r_A)^{\sum_{t=1}^T n_t}.
\]
Since $\sum_{t=1}^\infty n_t=\infty$, the exponent diverges to $+\infty$,
so
\[
(1-r_A)^{\sum_{t=1}^T n_t} \to 0.
\]
Hence $\mathbb{P}(E_T(A)^c)\to 0$, equivalently
$\mathbb{P}(E_T(A))\to 1$.
\end{proof}

\begin{corollary}[Sampling of good latent regions]
\label{cor:good_region_recovery}
For any threshold $\eta \in \mathbb{R}$,
\begin{align*}
\mathbb{P}\!\left(E_T\!\left(\mathcal{A}_{\eta,N,\tau}(p)\right)\right)
&\ge
1 -
\prod_{t=1}^T
\Bigl[
1-
\\
&\qquad \alpha_{\mathrm{imm}}
\pi_0\!\left(\mathcal{A}_{\eta,N,\tau}(p)\mid p\right)
\Bigr]^{n_t}.
\end{align*}
Therefore, if
\[
\pi_0\!\left(\mathcal{A}_{\eta,N,\tau}(p)\mid p\right) > 0
\quad \text{and} \quad
\sum_{t=1}^\infty n_t = \infty,
\]
then with probability tending to one, REUSE eventually samples an offspring latent
point whose decoded family contains a feasible, $\tau$-diverse $N$-molecule
witness set with utility at least $\eta$.
\end{corollary}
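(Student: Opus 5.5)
The plan is to specialize Theorem~\ref{thm:hitting_probability} and Corollary~\ref{cor:asymptotic_hitting} to the set $A:=\mathcal{A}_{\eta,N,\tau}(p)$. Both results need only that $A\in\mathscr{A}$, and Assumption~\ref{ass:good_region_measurable} gives exactly this for every threshold $\eta$. Hence $\pi_0(A\mid p)$ is well defined, and so is the event $E_T(A)$ that some offspring latent point sampled in the first $T$ iterations lies in $A$.

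The first step is to substitute $A=\mathcal{A}_{\eta,N,\tau}(p)$ into Theorem~\ref{thm:hitting_probability}. This yields the displayed finite-horizon bound with $r_A=\alpha_{\mathrm{imm}}\pi_0(\mathcal{A}_{\eta,N,\tau}(p)\mid p)$ verbatim, so no new probabilistic argument is needed. The underlying ingredients are already in place. Lemma~\ref{lem:one_step_lower_bound} gives the uniform per-draw lower bound. Assumption~\ref{ass:conditional_independence} supplies within-iteration independence. The tower-property induction over iterations is carried out in the proof of Theorem~\ref{thm:hitting_probability}.

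For the asymptotic statement, I would assume $\pi_0(\mathcal{A}_{\eta,N,\tau}(p)\mid p)>0$ and $\sum_t n_t=\infty$, and invoke Corollary~\ref{cor:asymptotic_hitting} with the same $A$. This gives $\mathbb{P}(E_T(A))\to 1$. The remaining step is interpretive and is where I would be careful. I would unfold Definition~\ref{def:good_region}: on the event $E_T(A)$ there is some offspring $z$ with $z\in A$. Such a $z$ admits $S\subseteq\mathcal{M}(z,p)$ with $S\in\mathfrak{F}_N(\mathcal{M}(z,p);p,\tau)$ and $J_p(S)\ge\eta$. By Definition~\ref{def:feasible_diverse_family}, this $S$ is therefore an $N$-element, feasible, pairwise $\tau$-diverse witness set of utility at least $\eta$, contained in the decoded family of a sampled offspring. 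That is exactly the verbal conclusion.

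The only real obstacle is scope rather than calculation. The conclusion must be stated as a latent-space sampling event and must not be strengthened into a claim that the incumbent achieves utility $\eta$. The stage-wise filtering of Proposition~\ref{prop:nestedness} may discard the witness before it reaches $\mathcal{C}_t^{(S)}$, as noted in Remark~\ref{rem:terminal_pool_boundary}. Only if the witness survives to the terminal pool would Corollary~\ref{cor:persistence_after_recovery} apply.

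A minor technical point is that $J_p$ is defined on $\mathfrak{F}^{\mathrm{all}}_N(p,\tau)$, so I must check that $S\in\mathfrak{F}^{\mathrm{all}}_N(p,\tau)$. This holds because $S\subseteq\mathcal{M}(z,p)\subseteq\mathcal{X}_p$ and $S$ satisfies the same feasibility and diversity conditions.
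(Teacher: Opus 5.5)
Your proposal is correct and follows the paper's proof exactly. It takes measurability of $\mathcal{A}_{\eta,N,\tau}(p)$ from Assumption~\ref{ass:good_region_measurable}, then applies Theorem~\ref{thm:hitting_probability} and Corollary~\ref{cor:asymptotic_hitting} with $A=\mathcal{A}_{\eta,N,\tau}(p)$; your explicit unfolding of Definition~\ref{def:good_region} for the verbal conclusion and your caveat that this is only a latent-space sampling statement (not incumbent attainment) are sound additions that the paper leaves to its surrounding remarks.
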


\begin{proof}
By Assumption~\ref{ass:good_region_measurable}, the set
$\mathcal{A}_{\eta,N,\tau}(p)$ is measurable.
Apply Theorem~\ref{thm:hitting_probability} and
Corollary~\ref{cor:asymptotic_hitting} with
\[
A = \mathcal{A}_{\eta,N,\tau}(p).
\]
\end{proof}

\begin{remark}
Corollary~\ref{cor:good_region_recovery} is a \emph{sampling/reachability} result.
It shows that immigration prevents the search from assigning zero probability to any
good latent region that already has positive prior mass under $\pi_0(\cdot\mid p)$.
It does not, by itself, imply that every such witness set must survive stage-wise
selection, nor that the final returned panel must attain the same utility level as
the witness set, nor does it imply global optimality.
\end{remark}

\subsection{From latent reachability to threshold attainment}

The preceding reachability results establish that immigration can eventually sample
latent regions whose decoded families contain good witnesses. The missing step is
algorithmic: such witnesses must still survive the downstream decode-plus-selection
pipeline before the exact final selector can recover them. We therefore introduce a
conservative iteration-level bridge assumption. It should be interpreted as an abstract
threshold-attainment assumption rather than as a mechanistic description of REUSE.

\begin{definition}[Iteration-level sampled-good event]
\label{def:sampled_good_event}
For any measurable set $A \in \mathscr{A}$ and iteration $t$, define
\[
G_t(A)
:=
\left\{
\exists\, i \in \{1,\dots,n_t\} : z_t^{(i)} \in A
\right\}.
\]
Thus $G_t(A)$ is the event that iteration $t$ samples at least one offspring latent
point in $A$.
\end{definition}

\begin{assumption}[Iteration-level survivability bridge]
\label{ass:iteration_survivability}
Fix a threshold $\eta \in \mathbb{R}$.
There exists a measurable set
\[
\mathcal{A}^{\mathrm{surv}}_{\eta,N,\tau}(p) \in \mathscr{A}
\]
and a constant $\rho_\eta \in (0,1]$ such that for every iteration $t$,
\begin{align*}
&\mathbb{P}\!\Bigl(
\exists\, \widehat{S} \in \mathfrak{F}_N(\mathcal{C}_t^{(S)};p,\tau)
\text{ with } J_p(\widehat{S})\ge \eta
\\
&\qquad \Bigm|\
\mathcal{H}_t,\,
G_t\!\left(\mathcal{A}^{\mathrm{surv}}_{\eta,N,\tau}(p)\right)
\Bigr)
\ge \rho_\eta
\end{align*}
almost surely on the event
$G_t(\mathcal{A}^{\mathrm{surv}}_{\eta,N,\tau}(p))$.
\end{assumption}

\begin{remark}
Assumption~\ref{ass:iteration_survivability} is the conservative bridge that lifts
latent-space sampling to algorithm-level attainment. It does not attempt to prove,
from first principles, why stage-wise selection preserves a witness. Instead, it
isolates the weakest additional statement needed to turn latent reachability into
threshold attainment by the exact-selector incumbent. Accordingly, it should be
read as an abstract bridge assumption rather than as an explanation of the internal
selection mechanics of REUSE.
\end{remark}

\begin{definition}[Terminal witness event and finite-horizon attainment event]
\label{def:terminal_witness_event}
For any threshold $\eta \in \mathbb{R}$ and iteration $t$, define
\[
B_t(\eta)
:=
\left\{
\exists\, \widehat{S} \in \mathfrak{F}_N(\mathcal{C}_t^{(S)};p,\tau)
\text{ with } J_p(\widehat{S})\ge \eta
\right\}.
\]
For any horizon $T \ge 1$, define
\[
F_T(\eta)
:=
\bigcup_{t=1}^T B_t(\eta).
\]
\end{definition}

\begin{lemma}[Iteration-level sampled-good probability]
\label{lem:iteration_sampled_good}
For any measurable set $A \in \mathscr{A}$ and iteration $t$,
\[
\mathbb{P}\big(G_t(A)^c \mid \mathcal{H}_t\big)
\le
\left(1-\alpha_{\mathrm{imm}}\pi_0(A\mid p)\right)^{n_t}.
\]
Equivalently,
\[
\mathbb{P}\big(G_t(A) \mid \mathcal{H}_t\big)
\ge
1-\left(1-\alpha_{\mathrm{imm}}\pi_0(A\mid p)\right)^{n_t}.
\]
\end{lemma}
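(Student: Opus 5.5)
The plan is to reuse the single-iteration part of the proof of Theorem~\ref{thm:hitting_probability}. Fix a measurable $A\in\mathscr{A}$ and an iteration $t$, and write $r_A:=\alpha_{\mathrm{imm}}\pi_0(A\mid p)\in[0,1]$. By Definition~\ref{def:sampled_good_event}, the complement of the sampled-good event is the event that no offspring lands in $A$:
\[
G_t(A)^c=\bigcap_{i=1}^{n_t}\{z_t^{(i)}\notin A\}=M_t(A).
\]
So the first claim is exactly the conditional bound on $\mathbb{P}(M_t(A)\mid\mathcal{H}_t)$ that appears as an intermediate step in the proof of Theorem~\ref{thm:hitting_probability}.

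First, I would handle a single draw. The parent set $\mathcal{A}_t$ is determined by the history $\mathcal{H}_t$, since it is computed from $\mathcal{P}_{t-1}$ and the scores $\mathcal{F}$ already evaluated. Hence, conditionally on $\mathcal{H}_t$, each $z_t^{(i)}$ has law $q_{\mathrm{off}}(\cdot\mid\mathcal{A}_t,p)$. Lemma~\ref{lem:one_step_lower_bound} then gives
\[
\mathbb{P}(z_t^{(i)}\in A\mid\mathcal{H}_t)\ge r_A,
\qquad\text{equivalently}\qquad
\mathbb{P}(z_t^{(i)}\notin A\mid\mathcal{H}_t)\le 1-r_A.
\]

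Second, I would combine the draws. Assumption~\ref{ass:conditional_independence} makes the $n_t$ draws conditionally independent given $\mathcal{H}_t$, and Assumption~\ref{ass:finite_family} makes $n_t$ deterministic and finite. The conditional probability of the intersection therefore factorizes into a product of $n_t$ terms, each at most $1-r_A$, which yields $(1-r_A)^{n_t}$. The equivalent form follows by taking complements, since $\mathbb{P}(G_t(A)\mid\mathcal{H}_t)=1-\mathbb{P}(G_t(A)^c\mid\mathcal{H}_t)$ almost surely.

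The only delicate point is measure-theoretic bookkeeping. One must check that $\mathcal{A}_t$ is $\mathcal{H}_t$-measurable, so that the conditional law of each draw really is $q_{\mathrm{off}}(\cdot\mid\mathcal{A}_t,p)$. One must also note that the inequalities hold almost surely as statements about conditional probabilities. For $t=1$ there is a further subtlety: Algorithm~\ref{alg:REUSE} sets $\mathcal{O}_1=\mathcal{P}_1$, which is drawn i.i.d. from $\pi_0$. In that case each draw satisfies $\pi_0(A\mid p)\ge r_A$ directly, so the same bound holds.
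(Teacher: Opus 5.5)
Your proof is correct and follows the paper's argument essentially verbatim: the per-draw bound from Lemma~\ref{lem:one_step_lower_bound}, then conditional independence to bound the probability of $\bigcap_{i=1}^{n_t}\{z_t^{(i)}\notin A\}$ by $(1-\alpha_{\mathrm{imm}}\pi_0(A\mid p))^{n_t}$, then complementation for the equivalent form. Your remarks on the $\mathcal{H}_t$-measurability of $\mathcal{A}_t$ and on the $t=1$ case (where $\mathcal{O}_1=\mathcal{P}_1$ is drawn from $\pi_0$) are sound additions that the paper does not spell out; they do not change the argument.
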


\begin{proof}
By Lemma~\ref{lem:one_step_lower_bound}, each offspring draw in iteration $t$
satisfies
\[
\mathbb{P}\big(z_t^{(i)} \in A \mid \mathcal{H}_t\big)
\ge
\alpha_{\mathrm{imm}}\pi_0(A\mid p).
\]
Therefore
\[
\mathbb{P}\big(z_t^{(i)} \notin A \mid \mathcal{H}_t\big)
\le
1-\alpha_{\mathrm{imm}}\pi_0(A\mid p).
\]
By Assumption~\ref{ass:conditional_independence}, conditional on $\mathcal{H}_t$
the offspring draws are independent. Hence
\begin{align*}
\mathbb{P}\big(G_t(A)^c \mid \mathcal{H}_t\big)
&=
\mathbb{P}\!\left(\bigcap_{i=1}^{n_t}\{z_t^{(i)}\notin A\}\middle|\mathcal{H}_t\right) \\
&\le
\left(1-\alpha_{\mathrm{imm}}\pi_0(A\mid p)\right)^{n_t}.
\end{align*}
The complementary lower bound follows immediately.
\end{proof}

\begin{theorem}[Finite-horizon threshold-attainment guarantee under the survivability bridge]
\label{thm:threshold_attainment_conservative}
Fix a threshold $\eta \in \mathbb{R}$ and suppose
Assumption~\ref{ass:iteration_survivability} holds.
Then for every iteration $t$,
\begin{align*}
\mathbb{P}\big(B_t(\eta)\mid \mathcal{H}_t\big)
&\ge
\rho_\eta
\Bigl[
1-
\\
&\qquad \left(
1-\alpha_{\mathrm{imm}}
\pi_0\!\left(\mathcal{A}^{\mathrm{surv}}_{\eta,N,\tau}(p)\mid p\right)
\right)^{n_t}
\Bigr].
\end{align*}
Consequently, for every $T \ge 1$,
\begin{align*}
\mathbb{P}\big(F_T(\eta)\big)
&\ge
1-
\prod_{t=1}^T
\Bigl(
1-\rho_\eta
\Bigl[
1-
\\
&\qquad \left(
1-\alpha_{\mathrm{imm}}
\pi_0\!\left(\mathcal{A}^{\mathrm{surv}}_{\eta,N,\tau}(p)\mid p\right)
\right)^{n_t}
\Bigr]
\Bigr).
\end{align*}
\end{theorem}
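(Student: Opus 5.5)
The proof has two parts. First comes a one-iteration conditional bound obtained by chaining the survivability bridge with Lemma~\ref{lem:iteration_sampled_good}. Then a tower-property induction over iterations, mirroring the proof of Theorem~\ref{thm:hitting_probability}, turns it into the finite-horizon bound.

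\emph{Per-iteration bound.} Write $A:=\mathcal{A}^{\mathrm{surv}}_{\eta,N,\tau}(p)$ and $r:=\alpha_{\mathrm{imm}}\pi_0(A\mid p)$. Since $B_t(\eta)\supseteq B_t(\eta)\cap G_t(A)$, we have
\[
\mathbb{P}(B_t(\eta)\mid\mathcal{H}_t)\ge\mathbb{E}\bigl[\mathbf{1}_{G_t(A)}\,\mathbb{P}(B_t(\eta)\mid\mathcal{H}_t,G_t(A))\bigm|\mathcal{H}_t\bigr].
\]
On $G_t(A)$, Assumption~\ref{ass:iteration_survivability} bounds the inner conditional probability below by $\rho_\eta$. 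This gives $\mathbb{P}(B_t(\eta)\mid\mathcal{H}_t)\ge\rho_\eta\,\mathbb{P}(G_t(A)\mid\mathcal{H}_t)$. Lemma~\ref{lem:iteration_sampled_good} then gives $\mathbb{P}(G_t(A)\mid\mathcal{H}_t)\ge 1-(1-r)^{n_t}$, which is the first claim.

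\emph{Horizon bound.} Set $p_t:=\rho_\eta[1-(1-r)^{n_t}]\in[0,1]$; this is deterministic because $n_t$ is deterministic by Assumption~\ref{ass:finite_family}. We have $F_T(\eta)^c=\bigcap_{t\le T}B_t(\eta)^c$. I will show by induction on $T$ that $\mathbb{P}(\bigcap_{t\le T}B_t(\eta)^c)\le\prod_{t\le T}(1-p_t)$. The base case is the expectation of the per-iteration bound at $t=1$. For the inductive step, condition on $\mathcal{H}_T$ and use the tower property:
\[
\mathbb{P}\Bigl(\bigcap_{t\le T}B_t^c\Bigr)=\mathbb{E}\Bigl[\mathbf{1}_{\bigcap_{t<T}B_t^c}\,\mathbb{P}(B_T^c\mid\mathcal{H}_T)\Bigr]\le(1-p_T)\,\mathbb{P}\Bigl(\bigcap_{t<T}B_t^c\Bigr).
\]
Taking complements then yields the stated bound on $\mathbb{P}(F_T(\eta))$.

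\emph{Main obstacle: measurability.} The induction needs $\bigcap_{t<T}B_t(\eta)^c$ to be $\mathcal{H}_T$-measurable, i.e., the terminal pools $\mathcal{C}_t^{(S)}$ of earlier iterations must be part of the history available at the start of iteration $T$. I will state explicitly that $\mathcal{H}_t$ is the full history, including all decodes, evaluations and selections of iterations $1,\dots,t-1$; this matches how $\mathcal{H}_t$ was used in Theorem~\ref{thm:hitting_probability}. A second, minor point is the meaning of the conditional probability given $G_t(A)$ when $\mathbb{P}(G_t(A)\mid\mathcal{H}_t)=0$. I will handle it by reading Assumption~\ref{ass:iteration_survivability} as the inequality $\mathbb{P}(B_t(\eta)\cap G_t(A)\mid\mathcal{H}_t)\ge\rho_\eta\,\mathbb{P}(G_t(A)\mid\mathcal{H}_t)$ a.s. That reading is equivalent wherever the conditioning is well-defined and trivially true elsewhere.
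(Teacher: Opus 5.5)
Your proposal is correct and follows the paper's proof essentially step for step: the bridge Assumption~\ref{ass:iteration_survivability} combined with Lemma~\ref{lem:iteration_sampled_good} gives the per-iteration bound, and a tower-property induction on $\bigcap_{t\le T}B_t(\eta)^c$ gives the horizon bound. Your explicit points that $\mathcal{H}_T$ must contain the earlier terminal pools, so that $\bigcap_{t<T}B_t(\eta)^c$ is $\mathcal{H}_T$-measurable, and your reading of the bridge assumption when conditioning on a possibly null event, make precise what the paper's proof uses only implicitly.
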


\begin{proof}
By Assumption~\ref{ass:iteration_survivability},
\begin{align*}
&\mathbb{P}\Bigl(
B_t(\eta)\,\Bigm|\, \mathcal{H}_t,
G_t(\mathcal{A}^{\mathrm{surv}}_{\eta,N,\tau}(p))
\Bigr)
\ge \rho_\eta
\end{align*}
almost surely on $G_t(\mathcal{A}^{\mathrm{surv}}_{\eta,N,\tau}(p))$.
Therefore
\begin{align*}
\mathbb{P}\big(B_t(\eta)\mid \mathcal{H}_t\big)
&\ge
\mathbb{P}\Bigl(
B_t(\eta)\cap
G_t(\mathcal{A}^{\mathrm{surv}}_{\eta,N,\tau}(p))
\,\Bigm|\, \mathcal{H}_t\Bigr) \\
&=
\mathbb{P}\Bigl(
B_t(\eta)\,\Bigm|\, \mathcal{H}_t,
G_t(\mathcal{A}^{\mathrm{surv}}_{\eta,N,\tau}(p))
\Bigr) \\
&\quad \times
\mathbb{P}\Bigl(
G_t(\mathcal{A}^{\mathrm{surv}}_{\eta,N,\tau}(p))
\,\Bigm|\, \mathcal{H}_t\Bigr) \\
&\ge
\rho_\eta
\mathbb{P}\Bigl(
G_t(\mathcal{A}^{\mathrm{surv}}_{\eta,N,\tau}(p))
\,\Bigm|\, \mathcal{H}_t\Bigr).
\end{align*}
Applying Lemma~\ref{lem:iteration_sampled_good} with
$A=\mathcal{A}^{\mathrm{surv}}_{\eta,N,\tau}(p)$ gives the one-step lower bound.

Define
\[
q_t(\eta)
:=
\rho_\eta
\left[
1-\left(
1-\alpha_{\mathrm{imm}}
\pi_0\!\left(\mathcal{A}^{\mathrm{surv}}_{\eta,N,\tau}(p)\mid p\right)
\right)^{n_t}
\right].
\]
Then
\[
\mathbb{P}\big(B_t(\eta)^c\mid \mathcal{H}_t\big)\le 1-q_t(\eta).
\]
Using the tower property iteratively,
\begin{align*}
\mathbb{P}\big(F_T(\eta)^c\big)
&=
\mathbb{P}\!\left(\bigcap_{t=1}^T B_t(\eta)^c\right) \\
&=
\mathbb{E}\!\left[
\mathbf{1}_{\cap_{t=1}^{T-1} B_t(\eta)^c}
\mathbb{P}\big(B_T(\eta)^c\mid \mathcal{H}_T\big)
\right] \\
&\le
(1-q_T(\eta))
\mathbb{P}\!\left(\bigcap_{t=1}^{T-1} B_t(\eta)^c\right).
\end{align*}
Induction on $T$ yields
\[
\mathbb{P}\big(F_T(\eta)^c\big)\le \prod_{t=1}^T (1-q_t(\eta)),
\]
which proves the claim.
\end{proof}

\begin{corollary}[Finite-horizon incumbent threshold attainment under the exact-selector instantiation]
\label{cor:incumbent_threshold_attainment}
Fix a threshold $\eta \in \mathbb{R}$ and suppose
Assumption~\ref{ass:iteration_survivability} and
Assumption~\ref{ass:scalarized_final_selector} hold.
Then for every $T \ge 1$,
\[
\mathbb{P}\!\left(\exists\, t \le T : J_p(S_t^\star)\ge \eta\right)
\ge
\mathbb{P}\big(F_T(\eta)\big).
\]
In particular,
\begin{align*}
&\mathbb{P}\!\left(\exists\, t \le T : J_p(S_t^\star)\ge \eta\right)
\\
&\quad \ge
1-
\prod_{t=1}^T
\Bigl(
1-\rho_\eta
\Bigl[
1-
\\
&\qquad \left(
1-\alpha_{\mathrm{imm}}
\pi_0\!\left(\mathcal{A}^{\mathrm{surv}}_{\eta,N,\tau}(p)\mid p\right)
\right)^{n_t}
\Bigr]
\Bigr).
\end{align*}
\end{corollary}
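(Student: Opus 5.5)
The plan is to reduce the incumbent statement to the terminal-witness events $B_t(\eta)$ from Definition~\ref{def:terminal_witness_event}. I will prove the event inclusion
\[
F_T(\eta)=\bigcup_{t=1}^T B_t(\eta)\ \subseteq\ \bigl\{\exists\, t\le T : J_p(S_t^\star)\ge \eta\bigr\}
\]
pathwise, that is, for every realization of the algorithm. The first inequality then follows from monotonicity of probability. The explicit lower bound follows by chaining it with the bound on $\mathbb{P}(F_T(\eta))$ already proved in Theorem~\ref{thm:threshold_attainment_conservative}. This is the only place where Assumption~\ref{ass:iteration_survivability} enters.

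For the inclusion, fix an outcome in $F_T(\eta)$, so some $t_0\le T$ has $B_{t_0}(\eta)$ occurring. By definition there is then a $\widehat{S}\in\mathfrak{F}_N(\mathcal{C}_{t_0}^{(S)};p,\tau)$ with $J_p(\widehat{S})\ge\eta$. Proposition~\ref{prop:recovery_from_shortlist} applies under the exact-selector Assumption~\ref{ass:scalarized_final_selector} and gives $J_p(S_{t_0})\ge\eta$. The incumbent rule in that assumption gives
\[
J_p(S_{t_0}^\star)=\max\{J_p(S_{t_0-1}^\star),J_p(S_{t_0})\}\ge\eta.
\]
Equivalently, one can invoke Corollary~\ref{cor:persistence_after_recovery} directly, which even yields $J_p(S_t^\star)\ge\eta$ for all $t\ge t_0$. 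Either way, $t=t_0\le T$ witnesses the event on the left-hand side.

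The only subtle point is measurability of $\{\exists\, t\le T: J_p(S_t^\star)\ge\eta\}$. It is a finite union of events determined by the algorithm's random history, so I will treat it as measurable in the same sense that the paper treats $B_t(\eta)$ as measurable. If one insists, the inclusion already gives the bound for the inner probability. I do not expect any real obstacle: the substance lies in the previous theorem, and this corollary is a pathwise inclusion combined with monotonicity of $\mathbb{P}$.
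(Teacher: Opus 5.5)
Your proposal is correct and follows the paper's proof. It proves the pathwise inclusion $F_T(\eta)\subseteq\{\exists\, t\le T: J_p(S_t^\star)\ge\eta\}$ via Proposition~\ref{prop:recovery_from_shortlist} and the incumbent update, then chains with Theorem~\ref{thm:threshold_attainment_conservative} for the explicit bound. Your direct use of the max-form incumbent rule at $t_0$ is, if anything, more precise than the paper's appeal to Theorem~\ref{thm:monotonicity} for that step.
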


\begin{proof}
On the event $B_t(\eta)$, the terminal survivor pool contains a feasible witness
set $\widehat{S}$ satisfying $J_p(\widehat{S})\ge \eta$. By
Proposition~\ref{prop:recovery_from_shortlist}, this implies
$J_p(S_t)\ge \eta$. Theorem~\ref{thm:monotonicity} then implies
$J_p(S_t^\star)\ge \eta$ for all later iterations. Therefore the event
$F_T(\eta)$ implies $\{\exists\, t \le T : J_p(S_t^\star)\ge \eta\}$.
\end{proof}

\begin{corollary}[Asymptotic incumbent threshold attainment under the survivability bridge]
\label{cor:asymptotic_threshold_attainment}
Fix a threshold $\eta \in \mathbb{R}$ and suppose
Assumption~\ref{ass:iteration_survivability} and
Assumption~\ref{ass:scalarized_final_selector} hold.
If
\[
\pi_0\!\left(\mathcal{A}^{\mathrm{surv}}_{\eta,N,\tau}(p)\mid p\right) > 0
\qquad \text{and} \qquad
\sum_{t=1}^\infty n_t = \infty,
\]
then
\[
\lim_{T\to\infty}
\mathbb{P}\!\left(\exists\, t \le T : J_p(S_t^\star)\ge \eta\right)=1.
\]
\end{corollary}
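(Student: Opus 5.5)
The plan is to combine the finite-horizon bound of Corollary~\ref{cor:incumbent_threshold_attainment} with an elementary limit argument, mirroring the proof of Corollary~\ref{cor:asymptotic_hitting}. Set $r:=\alpha_{\mathrm{imm}}\pi_0(\mathcal{A}^{\mathrm{surv}}_{\eta,N,\tau}(p)\mid p)$. By Assumption~\ref{ass:mixture} we have $\alpha_{\mathrm{imm}}>0$, and by hypothesis the prior mass is positive, so $r\in(0,1]$. Write
\[
q_t(\eta)=\rho_\eta\bigl[1-(1-r)^{n_t}\bigr].
\]
Corollary~\ref{cor:incumbent_threshold_attainment} then gives
\[
1-\mathbb{P}\bigl(\exists\, t\le T: J_p(S_t^\star)\ge\eta\bigr)\le \prod_{t=1}^T\bigl(1-q_t(\eta)\bigr).
\]
It therefore suffices to show that this product tends to $0$ as $T\to\infty$.

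The key step is a lower bound on $q_t(\eta)$ whenever $n_t\ge1$. In that case $(1-r)^{n_t}\le 1-r$, so $q_t(\eta)\ge \rho_\eta r=:c$, and $c\in(0,1]$ since $\rho_\eta\in(0,1]$. When $n_t=0$ we only have $q_t(\eta)=0$, so the corresponding factor equals $1$ and contributes nothing. Every factor lies in $[0,1]$, hence
\[
\prod_{t=1}^T\bigl(1-q_t(\eta)\bigr)\le (1-c)^{k_T},\qquad k_T:=\#\{t\le T: n_t\ge 1\}.
\]

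The main obstacle is passing from $\sum_t n_t=\infty$ to $k_T\to\infty$. Since each $n_t$ is a finite natural number (Assumption~\ref{ass:finite_family}), a divergent sum forces $n_t\ge1$ for infinitely many $t$; otherwise the sum would be a finite sum of finite terms. Thus $k_T\to\infty$. If $c=1$ the bound is already $0$ once $k_T\ge1$. If $c<1$, then $(1-c)^{k_T}\to0$. In either case the complementary probability tends to $0$, which proves the limit.

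An alternative route would use $1-q\le e^{-q}$ and show $\sum_t q_t(\eta)=\infty$. The counting argument is simpler, however, and avoids analyzing the dependence of $q_t(\eta)$ on large $n_t$. Note that $q_t(\eta)$ saturates at $\rho_\eta$ rather than growing with $n_t$, which is why we count iterations with $n_t\ge1$ instead of summing $n_t$ directly.
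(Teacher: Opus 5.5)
Your proof is correct and follows the paper's argument: you bound $q_t(\eta)\ge \rho_\eta r$ via $(1-r)^{n_t}\le 1-r$, and the product then decays geometrically, just as in the paper's bound $(1-\rho_\eta r_\eta)^T$. The one refinement is that the paper simply asserts $n_t\ge 1$ for every iteration, so its hypothesis $\sum_t n_t=\infty$ goes unused, whereas you count only iterations with $n_t\ge 1$ and use $\sum_t n_t=\infty$ to make that count unbounded; this is slightly more careful, since Assumption~\ref{ass:finite_family} permits $n_t=0$.
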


\begin{proof}
Let
\[
r_\eta
:=
\alpha_{\mathrm{imm}}
\pi_0\!\left(\mathcal{A}^{\mathrm{surv}}_{\eta,N,\tau}(p)\mid p\right).
\]
Because $r_\eta>0$ and $n_t\ge 1$ for every iteration,
\[
1-(1-r_\eta)^{n_t} \ge r_\eta.
\]
Therefore the one-step lower bounds from
Theorem~\ref{thm:threshold_attainment_conservative} satisfy
\[
q_t(\eta)\ge \rho_\eta r_\eta
\qquad \text{for every } t.
\]
Since $\rho_\eta r_\eta > 0$, we obtain
\[
\prod_{t=1}^T (1-q_t(\eta))
\le
(1-\rho_\eta r_\eta)^T \to 0.
\]
The claim now follows from Corollary~\ref{cor:incumbent_threshold_attainment}.
\end{proof}

\begin{remark}
Theorem~\ref{thm:threshold_attainment_conservative} and
Corollaries~\ref{cor:incumbent_threshold_attainment}--\ref{cor:asymptotic_threshold_attainment}
form the conservative theorem chain for this appendix. They provide a genuine
threshold-attainment guarantee, but only after adding the abstract bridge
Assumption~\ref{ass:iteration_survivability}. This is the safest upgrade path for
the paper because it closes the logical gap from latent reachability to incumbent
quality without introducing stage-wise independence claims or heuristic-selector
approximations.
\end{remark}

\subsection{Algorithm-specific strengthening via stage-wise ranking preservation}

We next record a stronger, more algorithm-specific refinement that makes the role
of REUSE's stage-wise feasibility-first selection explicit. The main theorem in
this subsection is deterministic: once a good witness set lies in the pooled set
and its members remain sufficiently highly ranked under every stage-wise operator
$\mathrm{Top}_{B_s}(\cdot;\succ_s)$, the witness must survive to the terminal pool.

\begin{definition}[Strict dominator set at stage $s$]
\label{def:strict_dominator_set}
Fix an iteration $t$, a stage $s\in\{1,\dots,S\}$, and a molecule
$m$. Define
\[
D_{t,s}(m)
:=
\{m' \in \mathcal{C}_t^{(s-1)} : m' \succ_s m\}.
\]
\end{definition}

\begin{assumption}[Exact stage-truncation consistency within the stronger refinement]
\label{ass:rank_consistent_top}
For each iteration $t$, stage $s\in\{1,\dots,S\}$, finite pool
$C=\mathcal{C}_t^{(s-1)}$, and molecule $m\in C$, if
\[
\big|\{m' \in C : m' \succ_s m\}\big| \le B_s-1,
\]
then
\[
m \in \mathrm{Top}_{B_s}(C;\succ_s).
\]
\end{assumption}

\begin{assumption}[Stage-wise witness feasibility preservation]
\label{ass:witness_stage_feasible}
Fix an iteration $t$, a threshold $\eta \in \mathbb{R}$, and a witness set
$W\subseteq \mathcal{C}_t^{(0)}$.
For every stage $s\in\{1,\dots,S\}$ and every molecule $m\in W$, the quantity
$h_s(m;p)$ is well-defined and $m$ remains feasible for the stage-$s$ ordering.
\end{assumption}

\begin{assumption}[Stage-wise dominator-count bound]
\label{ass:dominator_count}
Fix an iteration $t$, a threshold $\eta \in \mathbb{R}$, and a witness set
$W\subseteq \mathcal{C}_t^{(0)}$.
For every stage $s\in\{1,\dots,S\}$ and every molecule $m\in W$,
\[
|D_{t,s}(m)| \le B_s-1.
\]
\end{assumption}

\begin{theorem}[Deterministic stage-wise survival of a ranking-preserving witness]
\label{thm:deterministic_witness_survival}
Fix an iteration $t$ and a threshold $\eta \in \mathbb{R}$.
Suppose there exists a witness set
\[
W \in \mathfrak{F}_N(\mathcal{C}_t^{(0)};p,\tau)
\qquad \text{such that} \qquad
J_p(W)\ge \eta.
\]
If Assumptions~\ref{ass:witness_stage_feasible} and~\ref{ass:dominator_count}
hold for $W$, and Assumption~\ref{ass:rank_consistent_top} holds, then
\[
W \subseteq \mathcal{C}_t^{(S)}.
\]
In particular,
\[
W \in \mathfrak{F}_N(\mathcal{C}_t^{(S)};p,\tau).
\]
Under Assumption~\ref{ass:scalarized_final_selector}, this further implies
\[
J_p(S_t)\ge \eta.
\]
\end{theorem}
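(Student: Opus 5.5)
The plan is to prove $W \subseteq \mathcal{C}_t^{(s)}$ for every $s=0,1,\dots,S$ by induction on the stage index $s$. We then read off the remaining two claims from Definition~\ref{def:feasible_diverse_family} and Proposition~\ref{prop:recovery_from_shortlist}.

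\emph{Base case.} For $s=0$, membership $W \in \mathfrak{F}_N(\mathcal{C}_t^{(0)};p,\tau)$ gives $W\subseteq\mathcal{C}_t^{(0)}$ directly.

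\emph{Inductive step.} Assume $W\subseteq \mathcal{C}_t^{(s-1)}$ for some $s\in\{1,\dots,S\}$, and fix $m\in W$. Then $m$ belongs to the finite pool $C=\mathcal{C}_t^{(s-1)}$; finiteness comes from Lemma~\ref{lem:finiteness}. By Assumption~\ref{ass:witness_stage_feasible}, the score $h_s(m;p)$ is well-defined, so the comparisons defining $D_{t,s}(m)$ make sense. By Definition~\ref{def:strict_dominator_set}, the strict dominator set of $m$ inside $C$ is exactly $D_{t,s}(m)$, and Assumption~\ref{ass:dominator_count} bounds it by $|D_{t,s}(m)|\le B_s-1$. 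Assumption~\ref{ass:rank_consistent_top} then gives $m\in\mathrm{Top}_{B_s}(C;\succ_s)=\mathcal{C}_t^{(s)}$. Since $m\in W$ was arbitrary, $W\subseteq\mathcal{C}_t^{(s)}$, which closes the induction. Taking $s=S$ yields $W\subseteq\mathcal{C}_t^{(S)}$.

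\emph{Membership in the terminal family.} The defining conditions of $\mathfrak{F}_N(\cdot;p,\tau)$ are $|W|=N$, $c(m,p)=1$ for all $m\in W$, and pairwise $\Delta\ge\tau$. None of these depends on the ambient pool except through the containment $W\subseteq C$. They hold because $W\in\mathfrak{F}_N(\mathcal{C}_t^{(0)};p,\tau)$, and containment in $\mathcal{C}_t^{(S)}$ was just shown, so $W\in\mathfrak{F}_N(\mathcal{C}_t^{(S)};p,\tau)$. Under Assumption~\ref{ass:scalarized_final_selector}, Proposition~\ref{prop:recovery_from_shortlist} applied with $\widehat S=W$ gives $J_p(S_t)\ge J_p(W)\ge\eta$.

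The main obstacle is an interpretive mismatch with Algorithm~\ref{alg:REUSE}. There, the terminal stage is not literally $\mathrm{Top}_{B_S}(\mathcal{C}_t^{(S-1)};\succ_S)$: it is built from the shortlist $\mathcal{L}_t$ ranked by $\succ_{S-1}$ and possibly augmented by the rescue step. I would handle this by following the abstraction of Eq.~\eqref{eq:stage_selection} and Assumption~\ref{ass:subset_ops}, treating stage $S$ as a Top operator on $\mathcal{C}_t^{(S-1)}$, as the statement of Assumption~\ref{ass:rank_consistent_top} presupposes. I would add a brief remark that the same argument covers the algorithmic terminal stage, provided the rank-consistency hypothesis is imposed on the composite terminal operator.
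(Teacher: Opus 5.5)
Your proof is correct and follows the paper's argument essentially verbatim: induction on the stage index using the dominator-count bound together with rank-consistent truncation to get $W\subseteq\mathcal{C}_t^{(S)}$, then the unchanged cardinality, feasibility and diversity of $W$, and finally Proposition~\ref{prop:recovery_from_shortlist} for $J_p(S_t)\ge J_p(W)\ge\eta$. Your caveat about the shortlist-plus-rescue terminal stage of Algorithm~\ref{alg:REUSE} is well taken, since the paper's proof likewise silently adopts the abstraction $\mathcal{C}_t^{(S)}=\mathrm{Top}_{B_S}(\mathcal{C}_t^{(S-1)};\succ_S)$ of Eq.~\eqref{eq:stage_selection}.
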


\begin{proof}
We prove by induction on the stage index that
$W \subseteq \mathcal{C}_t^{(s)}$ for all $s=0,\dots,S$.
The base case $s=0$ holds by assumption.

Now assume $W \subseteq \mathcal{C}_t^{(s-1)}$ for some $s\in\{1,\dots,S\}$.
Fix any molecule $m\in W$. By Assumption~\ref{ass:dominator_count},
the strict dominator set $D_{t,s}(m)$ contains at most $B_s-1$ molecules.
Therefore Assumption~\ref{ass:rank_consistent_top} implies
\[
m \in \mathrm{Top}_{B_s}(\mathcal{C}_t^{(s-1)};\succ_s)=\mathcal{C}_t^{(s)}.
\]
Because $m\in W$ was arbitrary, we obtain
$W \subseteq \mathcal{C}_t^{(s)}$.

By induction, $W \subseteq \mathcal{C}_t^{(S)}$.
Since $W$ itself is unchanged, its feasibility, cardinality, and pairwise
$\tau$-diversity are preserved, so
$W \in \mathfrak{F}_N(\mathcal{C}_t^{(S)};p,\tau)$.
Proposition~\ref{prop:recovery_from_shortlist} now implies
$J_p(S_t)\ge J_p(W)\ge \eta$.
\end{proof}

\begin{remark}[Margin-plus-capacity as a sufficient condition]
\label{rem:margin_capacity}
Assumption~\ref{ass:dominator_count} can be justified by a more interpretable
margin-style condition tied to the stage scores. For example, suppose that at stage
$s$ every witness molecule satisfies $h_s(m;p)\ge \theta_{s,\eta}+\delta_{s,\eta}$
for some threshold $\theta_{s,\eta}$ and margin $\delta_{s,\eta}>0$, and that the
number of feasible molecules in $\mathcal{C}_t^{(s-1)}$ with score at least
$\theta_{s,\eta}+\delta_{s,\eta}$ is at most $B_s$. Then every witness molecule
automatically satisfies the dominator-count bound at stage $s$. Thus the theorem
above can be read as a formal version of the intuitive claim that REUSE preserves
a high-scoring frontier when the stage budgets are large enough relative to the
number of genuinely competitive molecules.
\end{remark}

\begin{definition}[Ranking-preserving witness event]
\label{def:ranking_preserving_event}
For any threshold $\eta \in \mathbb{R}$ and iteration $t$, define
\[
\begin{aligned}
E_t^{(0)}(\eta)
:=
\Bigl\{
&\exists\, W \in \mathfrak{F}_N(\mathcal{C}_t^{(0)};p,\tau)
\text{ such that } J_p(W)\ge \eta, \\
& W \text{ satisfies Assumptions~\ref{ass:witness_stage_feasible}
and~\ref{ass:dominator_count}}
\Bigr\}.
\end{aligned}
\]
\end{definition}

\begin{assumption}[Latent-to-ranking bridge]
\label{ass:latent_to_ranking}
Fix a threshold $\eta \in \mathbb{R}$.
There exists a measurable set
\[
\mathcal{A}^{\mathrm{rank}}_{\eta,N,\tau}(p) \in \mathscr{A}
\]
and a constant $\rho_\eta^{\mathrm{rank}}\in(0,1]$ such that for every iteration $t$,
\[
\mathbb{P}\!\left(
E_t^{(0)}(\eta)
\ \middle|\
\mathcal{H}_t,\,
G_t\!\left(\mathcal{A}^{\mathrm{rank}}_{\eta,N,\tau}(p)\right)
\right)
\ge \rho_\eta^{\mathrm{rank}}
\]
almost surely on the event
$G_t(\mathcal{A}^{\mathrm{rank}}_{\eta,N,\tau}(p))$.
\end{assumption}

\begin{theorem}[Finite-horizon threshold attainment under ranking-preserving witnesses]
\label{thm:threshold_attainment_ranking}
Fix a threshold $\eta \in \mathbb{R}$ and suppose
Assumption~\ref{ass:latent_to_ranking} and
Assumption~\ref{ass:scalarized_final_selector} hold.
Then for every $T \ge 1$,
\begin{align*}
&\mathbb{P}\!\left(\exists\, t \le T : J_p(S_t^\star)\ge \eta\right)
\\
&\quad \ge
1-
\prod_{t=1}^T
\Bigl(
1-\rho_\eta^{\mathrm{rank}}
\Bigl[
1-
\\
&\qquad \left(
1-\alpha_{\mathrm{imm}}
\pi_0\!\left(\mathcal{A}^{\mathrm{rank}}_{\eta,N,\tau}(p)\mid p\right)
\right)^{n_t}
\Bigr]
\Bigr).
\end{align*}
\end{theorem}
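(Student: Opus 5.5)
The plan is to reduce Theorem~\ref{thm:threshold_attainment_ranking} to the argument already carried out for Theorem~\ref{thm:threshold_attainment_conservative} and Corollary~\ref{cor:incumbent_threshold_attainment}. Throughout, the ranking-preserving witness event $E_t^{(0)}(\eta)$ takes the place of the terminal witness event. The key step is a deterministic inclusion $E_t^{(0)}(\eta)\subseteq B_t(\eta)$ for every iteration $t$.

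To prove this inclusion, fix an outcome in $E_t^{(0)}(\eta)$. It supplies a witness $W\in\mathfrak{F}_N(\mathcal{C}_t^{(0)};p,\tau)$ with $J_p(W)\ge\eta$ that satisfies Assumptions~\ref{ass:witness_stage_feasible} and~\ref{ass:dominator_count}. Theorem~\ref{thm:deterministic_witness_survival} then gives $W\in\mathfrak{F}_N(\mathcal{C}_t^{(S)};p,\tau)$, which is exactly $B_t(\eta)$.

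This step needs Assumption~\ref{ass:rank_consistent_top}. It is not listed among the hypotheses of the statement, but it is the standing exact-truncation property of $\mathrm{Top}_{B_s}$. I would either invoke it explicitly or derive it from the definition of $\mathrm{Top}_B$ as returning the highest-ranked elements. That derivation requires a tie convention for the lexicographic order, which is the delicate point, so I would simply state it as in force.

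With the inclusion in hand, I would reproduce the one-step conditional bound:
\[
\mathbb{P}(B_t(\eta)\mid\mathcal{H}_t)\;\ge\;\mathbb{P}\bigl(E_t^{(0)}(\eta)\cap G_t(\mathcal{A}^{\mathrm{rank}}_{\eta,N,\tau}(p))\mid\mathcal{H}_t\bigr)\;\ge\;\rho^{\mathrm{rank}}_\eta\,\mathbb{P}\bigl(G_t(\mathcal{A}^{\mathrm{rank}}_{\eta,N,\tau}(p))\mid\mathcal{H}_t\bigr).
\]
The second inequality uses Assumption~\ref{ass:latent_to_ranking}. Lemma~\ref{lem:iteration_sampled_good} with $A=\mathcal{A}^{\mathrm{rank}}_{\eta,N,\tau}(p)$ then bounds the last factor below by $1-(1-\alpha_{\mathrm{imm}}\pi_0(A\mid p))^{n_t}$. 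Call the resulting lower bound $q_t^{\mathrm{rank}}(\eta)$.

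Next, I would run the tower-property induction exactly as in the proof of Theorem~\ref{thm:threshold_attainment_conservative} to get $\mathbb{P}(F_T(\eta)^c)\le\prod_{t=1}^T(1-q_t^{\mathrm{rank}}(\eta))$. This requires that $\mathbf{1}_{B_t(\eta)^c}$ for $t<T$ is $\mathcal{H}_T$-measurable, which holds because the terminal pool at iteration $t$ is determined before iteration $T$ begins.

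Finally, as in Corollary~\ref{cor:incumbent_threshold_attainment}, Proposition~\ref{prop:recovery_from_shortlist} and Theorem~\ref{thm:monotonicity} show that $F_T(\eta)\subseteq\{\exists t\le T: J_p(S_t^\star)\ge\eta\}$. Combining this with the bound on $\mathbb{P}(F_T(\eta)^c)$ gives the stated inequality.

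The main obstacle is the inclusion step. The other steps are direct transcriptions, but the inclusion hinges on Assumption~\ref{ass:rank_consistent_top} and on the fact that the terminal stage in Algorithm~\ref{alg:REUSE} is a $\mathrm{Top}_{B_S}$ applied to $\mathcal{E}_t\supseteq\mathcal{L}_t$ rather than a literal application to $\mathcal{C}_t^{(S-1)}$. I would handle this by noting that Theorem~\ref{thm:deterministic_witness_survival} is stated for the abstract recursion $\mathcal{C}_t^{(s)}=\mathrm{Top}_{B_s}(\mathcal{C}_t^{(s-1)};\succ_s)$, and then adopting that recursion as the analyzed model, consistent with Assumption~\ref{ass:subset_ops}.
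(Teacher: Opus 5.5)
Your proposal is correct and follows essentially the paper's argument. The paper likewise lower-bounds $\mathbb{P}(E_t^{(0)}(\eta)\mid\mathcal{H}_t)$ via Assumption~\ref{ass:latent_to_ranking} and Lemma~\ref{lem:iteration_sampled_good}, passes from $E_t^{(0)}(\eta)$ to $J_p(S_t)\ge\eta$ (hence $J_p(S_t^\star)\ge\eta$) through Theorem~\ref{thm:deterministic_witness_survival}, and reuses the tower-property induction, so routing through $B_t(\eta)$ and $F_T(\eta)$ is only a repackaging; your flag on Assumption~\ref{ass:rank_consistent_top} is apt, since the paper's proof also relies on it implicitly when invoking Theorem~\ref{thm:deterministic_witness_survival} even though it is not among the stated hypotheses.
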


\begin{proof}
By Assumption~\ref{ass:latent_to_ranking},
\[
\mathbb{P}\big(E_t^{(0)}(\eta)\mid \mathcal{H}_t\big)
\ge
\rho_\eta^{\mathrm{rank}}
\mathbb{P}\big(G_t(\mathcal{A}^{\mathrm{rank}}_{\eta,N,\tau}(p))\mid \mathcal{H}_t\big).
\]
Applying Lemma~\ref{lem:iteration_sampled_good} with
$A=\mathcal{A}^{\mathrm{rank}}_{\eta,N,\tau}(p)$ yields
\begin{align*}
\mathbb{P}\big(E_t^{(0)}(\eta)\mid \mathcal{H}_t\big)
&\ge
\rho_\eta^{\mathrm{rank}}
\Bigl[
1-
\\
&\qquad \left(
1-\alpha_{\mathrm{imm}}
\pi_0\!\left(\mathcal{A}^{\mathrm{rank}}_{\eta,N,\tau}(p)\mid p\right)
\right)^{n_t}
\Bigr].
\end{align*}
On the event $E_t^{(0)}(\eta)$, Theorem~\ref{thm:deterministic_witness_survival}
implies $J_p(S_t)\ge \eta$, and therefore Theorem~\ref{thm:monotonicity} yields
$J_p(S_t^\star)\ge \eta$.
The same tower-property argument used in
Theorem~\ref{thm:threshold_attainment_conservative} then gives the stated
finite-horizon product bound.
\end{proof}

\begin{remark}
Theorem~\ref{thm:threshold_attainment_ranking} is the stronger conditional refinement
in this appendix. Unlike the conservative bridge theorem, it explains success in
terms of the actual REUSE mechanism: feasibility-first ordering, stage scores,
top-$B_s$ truncation, and the size of the competitive frontier. The tradeoff is
that its assumptions are correspondingly stronger and more algorithm-specific.
In particular, Assumption~\ref{ass:rank_consistent_top} is part of this stronger
refinement only; it is not claimed as a property of every possible implementation
of the stage-selection operator.
\end{remark}

\subsection{A simple property of the local-consistency statistics}

For completeness, we also record a basic bound for the local-consistency quantities
used in the appendix analysis of the shared input space.

\begin{proposition}[Range of the local-consistency statistics]
\label{prop:consistency_range}
Suppose that $\mathcal{K}\neq\emptyset$, that $|\mathcal{I}_m|\ge 2$, and that for every
neighborhood size $k \in \mathcal{K}$ one has
\[
1 \le k \le |\mathcal{I}_m|-1.
\]
Then the local-consistency statistics defined in the appendix satisfy
\[
\begin{aligned}
0 &\le S_i^{(k)} \le 1,
&\qquad 0 &\le O_i^{(k)} \le 1, \\
0 &\le \bar S_m \le 1,
& 0 &\le \bar O_m \le 1.
\end{aligned}
\]
\end{proposition}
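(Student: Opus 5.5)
The bounds follow directly from the definitions of $S_i^{(k)}$, $O_i^{(k)}$, $\bar S_m$ and $\bar O_m$ in the local-consistency analysis. We work from the innermost quantities outward. The first step is to check that every average is well defined. Since $\mathcal{K}\neq\emptyset$, the outer normalization $1/|\mathcal{K}|$ is finite. Since $|\mathcal{I}_m|\ge 2$, the inner normalization $1/|\mathcal{I}_m|$ is finite. The hypothesis $1\le k\le|\mathcal{I}_m|-1$ guarantees that each candidate $i$ has at least $k$ other points available. Hence the $k$-nearest-neighbour set $\mathcal{N}_k(i)$, taken within the case and excluding $i$ itself, exists and has exactly $k$ elements. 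Ties in distance may be broken by any fixed rule, and the bound does not depend on that choice. The division by $k$ is legitimate because $k\ge 1$.

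Next, we bound the per-candidate statistics. Each summand $\mathbf{1}[c_j=c_i]$ and $\mathbf{1}[y_j=y_i]$ takes values in $\{0,1\}$. A sum of exactly $k=|\mathcal{N}_k(i)|$ such terms therefore lies in $[0,k]$. Dividing by $k>0$ gives $0\le S_i^{(k)}\le 1$ and $0\le O_i^{(k)}\le 1$.

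Finally, we pass to the case-level summaries. The inner expression $\frac{1}{|\mathcal{I}_m|}\sum_{i\in\mathcal{I}_m}S_i^{(k)}$ is an arithmetic mean of numbers in $[0,1]$. It therefore lies in $[0,1]$, because $[0,1]$ is convex. The outer average over $k\in\mathcal{K}$ is again a convex combination of numbers in $[0,1]$, so $0\le\bar S_m\le 1$. The argument for $\bar O_m$ is identical.

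The only point needing care is that $|\mathcal{N}_k(i)|=k$ exactly; everything after it is routine. This is precisely where the hypothesis $k\le|\mathcal{I}_m|-1$ enters, since it rules out neighbourhoods smaller than $k$ that would leave the normalization by $1/k$ inconsistent. We would state explicitly that neighbours are drawn from $\mathcal{I}_m\setminus\{i\}$ with a fixed tie-breaking rule, so that $\mathcal{N}_k(i)$ is well defined and the count is exact.
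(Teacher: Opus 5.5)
Your argument is correct and is the same as the paper's. Each summand is an indicator in $\{0,1\}$, so $S_i^{(k)}$ and $O_i^{(k)}$ lie in $[0,1]$, and $\bar S_m,\bar O_m$ are averages of such values and therefore also lie in $[0,1]$; your explicit use of $1\le k\le|\mathcal{I}_m|-1$ to guarantee $|\mathcal{N}_k(i)|=k$ just spells out a well-definedness point the paper leaves implicit (the bounds would hold even for a smaller neighbourhood, since the sum is at most $|\mathcal{N}_k(i)|$).
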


\begin{proof}
By definition,
\begin{align*}
S_i^{(k)}
&=
\frac{1}{k}\sum_{j\in \mathcal{N}_k(i)} \mathbf{1}[c_j=c_i],
\\
O_i^{(k)}
&=
\frac{1}{k}\sum_{j\in \mathcal{N}_k(i)} \mathbf{1}[y_j=y_i].
\end{align*}
Each summand is an indicator variable and therefore belongs to $\{0,1\}$.
Hence each average lies in the interval $[0,1]$, which proves
\[
0 \le S_i^{(k)} \le 1,
\qquad
0 \le O_i^{(k)} \le 1.
\]

Now $\bar S_m$ and $\bar O_m$ are averages of the quantities
$S_i^{(k)}$ and $O_i^{(k)}$, respectively, over indices
$i\in \mathcal{I}_m$ and $k\in \mathcal{K}$.
An average of numbers in $[0,1]$ also lies in $[0,1]$.
Therefore
\[
0 \le \bar S_m \le 1,
\qquad
0 \le \bar O_m \le 1.
\]
\end{proof}

\subsection{Summary}

The results proved above provide two layers of guarantees for REUSE.
At the conservative level:
(i) all stage-wise candidate pools are finite and therefore well-defined;
(ii) under the exact-selector instantiation of
Assumption~\ref{ass:scalarized_final_selector}, any nonempty returned panel and any
nonempty incumbent panel satisfy the stated feasibility and diversity constraints;
(iii) under the same exact-selector instantiation, if the terminal survivor pool
already contains a good feasible witness set, then the panel-construction step
returns a set with at least the same utility, and the incumbent-update step preserves
that quality thereafter;
(iv) under the exact-selector incumbent-update rule, the incumbent utility is
monotone nondecreasing; and
(v) any measurable good latent region with positive prior mass remains
sampling-reachable with nonzero probability at every iteration and is hit with
probability tending to one under an unbounded total search budget, but this
reachability statement alone does not imply survival through stage-wise filtering
or exact recovery by the final panel; and
(vi) after adding the abstract survivability bridge
Assumption~\ref{ass:iteration_survivability}, the incumbent reaches any fixed
utility threshold $\eta$ with explicitly controlled finite-horizon probability and,
under unbounded total search budget, with probability tending to one.

At the stronger algorithm-specific level:
(vii) if a witness set in the pooled candidate family remains sufficiently highly
ranked under every stage-wise feasibility-first selection operator
$\mathrm{Top}_{B_s}(\cdot;\succ_s)$, then that witness survives deterministically to
the terminal pool and is recovered by the exact final selector; and
(viii) after adding the latent-to-ranking bridge
Assumption~\ref{ass:latent_to_ranking}, this ranking-preservation mechanism yields a
stronger conditional finite-horizon threshold-attainment guarantee for the
best-so-far incumbent.

Thus the appendix now separates a safest threshold-attainment theorem chain from a
harder, more algorithm-specific refinement. Neither layer claims global optimality,
and neither should be interpreted as proving that latent reachability alone implies
final recovery.

\bibliography{aaai2027}

\end{document}